\documentclass[11pt]{article}
\usepackage[letterpaper,margin=1in]{geometry}
\usepackage[T1]{fontenc}
\usepackage{lmodern}
\usepackage{microtype}
\usepackage{titling}
\usepackage{mathtools}
\usepackage{amssymb}
\usepackage{amsthm}
\usepackage{bm}
\usepackage{booktabs}
\usepackage{tabularx}
\usepackage{enumitem}
\usepackage{algorithm}
\usepackage{algpseudocode}
\usepackage[authoryear,round]{natbib}
\usepackage[hidelinks]{hyperref}

\pretitle{\begin{center}\Large\bfseries}
\posttitle{\par\end{center}\vskip 0.5em}
\preauthor{\begin{center}\normalsize\begin{tabular}[t]{c}}
\predate{}
\postdate{}

\hypersetup{
  pdftitle={Adversarial Resilience of Poisson-Process Submodular Maximization over Matroids, and Full-Bandit Learning},
  pdfauthor={Vaneet Aggarwal}
}

\theoremstyle{plain}
\newtheorem{theorem}{Theorem}[section]
\newtheorem{lemma}[theorem]{Lemma}
\newtheorem{proposition}[theorem]{Proposition}
\newtheorem{corollary}[theorem]{Corollary}

\theoremstyle{definition}
\newtheorem{definition}[theorem]{Definition}
\newtheorem{remark}[theorem]{Remark}
\newtheorem{assumption}[theorem]{Assumption}

\newcommand{\OPT}{\mathrm{OPT}}
\newcommand{\Mar}{\mathrm{Mar}}
\newcommand{\I}{\mathcal I}
\newcommand{\E}{\mathbb E}
\newcommand{\Prb}{\mathbb P}

\newcommand{\eps}{\varepsilon}
\newcommand{\xierr}{\xi}
\newcommand{\one}{\mathbf 1}

\newcommand{\argmax}{\operatorname*{arg\,max}}

\newcommand{\Ipl}{\I^{+1}}

\title{Adversarial Resilience of Poisson-Process Submodular Maximization 
over Matroids, and Full-Bandit Learning}

\author{Vaneet Aggarwal\\
  Purdue University
}
\date{}

\begin{document}

\maketitle

\begin{abstract}
We study nonnegative submodular maximization on $n$ elements subject to a
general matroid of rank $k$, when the
offline algorithm is given an arbitrary controlled value oracle.  Our main result is
an adversarial resilience theorem for the Spiteful Greedy Swap Poisson Process
(SGS-Poisson): without modifying its Poisson intensity, single-element exchange
rule, or spiteful drop step, the algorithm retains limiting approximation factors
$1/e$ for non-monotone objectives and $1-1/e$ for monotone objectives.  More
precisely, given an error bound $\xierr\ge0$, under every controlled oracle
$\widehat f$ satisfying
$|\widehat f(S)-f(S)|\le \xierr$ for every set $S$, our implementation returns a
feasible set with expected value at least
$(1/e-\varepsilon)\OPT-O(k\xierr)$ and
$(1-1/e-\varepsilon)\OPT-O(k\xierr)$, respectively, where $\OPT$ is the
feasible optimum. The implementation uses a \emph{deterministically bounded} budget of
$O(nk^{2}\varepsilon^{-2}\log n\log^{2}(1/\varepsilon))$ oracle calls.
As a consequence, an offline-to-online reduction yields full-bandit
combinatorial multi-armed bandit (CMAB) algorithms for
general matroid-constrained submodular rewards with exact limiting
approximation-regret factors $1/e$ and $1-1/e$ and
$\widetilde O(n^{1/5}k^{4/5}T^{4/5})$ regret over $T$ rounds.
These online guarantees allow exploration to play sets that become independent
after deleting at most one element; exploitation and the benchmark remain
matroid-feasible. For unit-capacity partition matroids we obtain
$\widetilde O(n^{1/5}k^{3/5}T^{4/5})$ under the same exploration relaxation.
We establish a deterministic query budget by truncating the Poisson process
and identify the enlarged action set needed to answer its offline queries.
\end{abstract}

\begingroup
\small\noindent\textbf{Keywords:} submodular maximization, matroids, controlled value oracles,
Poisson processes, full-bandit combinatorial bandits, offline-to-online learning
\par\endgroup

\section{Introduction}

Submodular objectives capture diminishing returns in applications such as
influence maximization, sensor placement, experimental design, and data
summarization \citep{KrauseGolovin2014}. Matroid constraints express which
collections of items can be selected, extending cardinality budgets to
structured independence requirements. For a ground set $U$ of $n$ items and
a matroid $M=(U,\I)$ of rank $k$, we study
\[
    \OPT:=\max_{S\in\I}f(S),\qquad f:2^U\to[0,1],
\]
where $f$ is nonnegative and submodular. With exact value access, continuous
greedy attains $1-1/e$ for monotone objectives \citep{CCPV2011}, and measured
continuous greedy attains $1/e$ for non-monotone objectives
\citep{Feldman2011}. Combinatorial approaches also attain the classical
monotone factor \citep{FilmusWard2014,BuchbinderFeldman2024}.

In learning applications, however, the values supplied to an optimization
routine are empirical estimates rather than exact evaluations. Under
full-bandit feedback, a learner observes only the total reward of its chosen
set, not item-level contributions. Even when the mean reward is submodular,
individual observations and the resulting empirical oracle need not be
submodular or monotone. An exact-oracle approximation theorem therefore
does not automatically yield an online guarantee. The offline-to-online
framework of \citet{Nie2023,FouratiFramework2024} addresses this gap through
\emph{resilience}: the offline approximation must survive uniformly bounded
errors in its value queries.

The Poisson-process approach of \citet{Ganz2026,Kulik2026} provides an
attractive candidate for such a resilient primitive. In particular, the
Spiteful Greedy Swap Poisson Process (SGS-Poisson) of \citet{Kulik2026}
maintains a feasible set through single-element exchanges and a randomized
drop step, attaining both $1/e$ and $1-1/e$ in the exact-oracle setting.
We ask whether this same process remains effective when all of its decisions
are driven by a persistent, adversarially controlled oracle. The difficulty
is not just error in one marginal comparison: a perturbation can change the
selected residual base, its exchange map, an adaptive stopping time, and
hence the entire subsequent trajectory.

\paragraph{Resilient offline optimization.}
For every target loss $\eps\in(0,1/2]$, supplied radius $\xierr\ge0$, and
persistent oracle $\widehat f$ satisfying
\begin{equation}
|\widehat f(S)-f(S)|\le\xierr\qquad\forall S\subseteq U,
\label{eq:oracle-control}
\end{equation}
we construct an implementation returning $A_{\rm out}\in\I$ with
\begin{equation}
\E[f(A_{\rm out})]\ge
\begin{cases}
(1/e-\eps)\OPT-Ck\xierr,&f\text{ non-monotone},\\[1mm]
(1-1/e-\eps)\OPT-Ck\xierr,&f\text{ monotone},
\end{cases}
\label{eq:main-guarantee}
\end{equation}
where $C=8$ suffices (Theorem~\ref{thm:main}). Its controlled-oracle budget is
bounded \emph{deterministically} by
\begin{equation}
O\!\left(
 nk\log\frac1\eps+
 \frac{nk^2\log n\log(1/\eps)}{\eps^2}
 +\frac{nk\log^2(1/\eps)}{\eps^2}
\right).
\label{eq:main-complexity}
\end{equation}
The Poisson intensity, single-element exchange mechanism, and spiteful drop
are inherited from SGS-Poisson, not introduced here. Our contribution is to
establish their controlled-oracle resilience and the implementation properties
needed for learning; the process is truncated only to enforce its query budget.

\paragraph{Preserving a certificate along the noisy trajectory.}
The central argument evaluates the \emph{true} objective along the
controlled-oracle execution, rather than trying to couple it to an
exact-oracle execution. We first combine the standard Residual Random Greedy
(RRG) exchange coupling \citep{BFS2014} with a robust maximum-base comparison.
Amplification gives a realized optimum estimate $\widehat V$ that can drive
the stopping rule of Advanced Preprocessing
(Lemmas~\ref{lem:rrg} and~\ref{lem:certificate}). Dummy augmentation makes
explicit why residual maximum-base comparisons remain legitimate even for
a non-submodular surrogate (Lemma~\ref{lem:dummy}).

The key step is then to preserve the exchange potential
\begin{equation}
\Phi_t=f(Q_t\cup O_t)+\tfrac12f(Q_t)
\label{eq:intro-potential}
\end{equation}
under both noisy base selection and noisy adaptive stopping. On the
high-probability optimum-certificate event, before preprocessing stops,
\begin{equation}
\E[\Phi_{t+1}-\Phi_t\mid\mathcal F_t]
\ge\frac{8\OPT-k\xierr}{k-t}.
\label{eq:intro-drift}
\end{equation}
This residual-rank drift inequality (Lemma~\ref{lem:drift}) yields the
contracted-instance certificate required by the Poisson analysis
(Theorem~\ref{thm:preprocess}). It absorbs oracle error without charging it
anew for every decision along a potentially different adaptive trajectory.

\paragraph{Robust swaps and a usable learning interface.}
The second ingredient concerns a \emph{sum} of sampled multilinear marginals,
not a single marginal. Following the base-sum concentration strategy of
\citet{Kulik2026}, we separate sampling error in the true objective from
bias due to the controlled oracle. The resulting swap remains valid and
right-continuous, with almost-above-average error
\begin{equation}
\eta\le c_1\eps\OPT+c_2k\xierr
\label{eq:intro-eta}
\end{equation}
(Lemma~\ref{lem:swap}). The relative term $\eps\OPT$, rather than an absolute
$O(\eps)$ loss, is essential for resilience. The Poisson differential
inequality itself is an imported result of \citet{Kulik2026}.

Two additional implementation issues must be addressed before invoking the
learning reduction. First, the number of Poisson events is unbounded;
Lemma~\ref{lem:budget} caps it at a deterministic budget with expected loss
at most $\rho\OPT$. Second, feasibility of the maintained set does not imply
feasibility of every query: the swap estimator can evaluate a feasible set
with one additional element. We prove that all queries lie in $\Ipl$, the
family of sets that become independent after deleting at most one element
(Lemma~\ref{lem:query-feasibility}). Our online guarantees therefore allow
exploration in $\Ipl$, while exploitation and the benchmark remain in $\I$.
This relaxation is explicit; we do not claim strictly feasible exploration.
The reduction is further checked for cached, adaptive empirical queries and
a supplied error radius in Section~\ref{sec:cmab}.

\paragraph{Full-bandit learning and partition matroids.}
These ingredients yield $\widetilde O(n^{1/5}k^{4/5}T^{4/5})$ regret at the
exact limiting benchmark factors $1/e$ and $1-1/e$ for non-monotone and
monotone mean rewards, respectively (Corollary~\ref{cor:cmab}). Within the
offline-to-online framework, the earlier general-matroid monotone result of
\citet{Nie2025KSubmodular} attains $1/2$; their non-monotone result requires
at least two labels and does not specialize to ordinary set-submodularity.
Our improvement in approximation factor is not a claim to dominate
specialized cardinality-bandit rates. For unit-capacity partition matroids,
we use the lazy per-part swap of \citet{Kulik2026} and add a controlled-oracle
analysis and a deterministic query-cost cap. This reduces the rank dependence
to $\widetilde O(n^{1/5}k^{3/5}T^{4/5})$ under the same exploration relaxation
(Theorem~\ref{thm:partition}).

Our offline noise model is persistent adversarial error bounded uniformly
by $\xierr$, rather than the distribution-specific persistent stochastic
noise models studied by \citet{HassidimSinger2017,Bhawalkar2025}. The distinction
lets the offline theorem apply to empirical reward tables without imposing
submodularity on their realized values. The proofs below retain the
controlled-oracle and implementation arguments, while using established
exchange, sampling, and Poisson results with explicit attribution.

\section{Related Work}
\label{sec:related}

We review the offline and online results most closely related to our contribution.

\subsection{Submodular Maximization under Matroid Constraints}

For monotone submodular maximization under a matroid, continuous greedy gives the
classical $1-1/e$ approximation \citep{CCPV2011}. For non-monotone objectives,
measured continuous greedy gives $1/e$ \citep{Feldman2011}, while the recent
SGS-Poisson line obtains the same limiting non-monotone factor and the
$1-1/e$ monotone factor through a discrete Poisson-process construction
\citep{Ganz2026,Kulik2026}. Our base algorithm is exactly SGS-Poisson: its
Poisson intensity, valid-swap rule, and spiteful drop are not modified.
Residual Random Greedy (RRG) was introduced by \citet{BFS2014};
we use its matroid version as a constant-factor preprocessing primitive.

\subsection{Offline-to-Online Learning and Full-Bandit CMAB}
\citet{Nie2023} introduced a black-box offline-to-online framework
for stochastic combinatorial multi-armed bandits with full-bandit feedback.
The key condition is robustness of the offline approximation algorithm to
bounded errors in value-oracle evaluations.  Under this condition, the offline
algorithm can be used as a black box and its approximation factor becomes the
benchmark for sublinear $\alpha$-regret.  The framework was instantiated for
several submodular settings, including cardinality and knapsack constraints.
Special cases of this approach appear in
\citet{Nie2022,Fourati2023,Nie2025KSubmodular}.
The $k$-submodular results in \citet{Nie2025KSubmodular} include ordinary
set-submodularity as the $k_{\rm sub}=1$ special case, but only in the monotone
regime. Thus for \emph{monotone} objectives, general-matroid full-bandit CMAB is
not new in itself, and the distinction here is the approximation factor
($1-1/e$ versus $1/2$) at the price of a worse horizon exponent
($T^{4/5}$ versus $T^{2/3}$). The approximation factors and horizon exponents must be distinguished:
a sublinear $1/2$-regret bound does not imply sublinear $(1-1/e)$-regret.
Conversely, a sublinear $(1-1/e)$-regret bound does imply sublinear
$1/2$-regret, but not necessarily its faster $T^{2/3}$ rate.

For \emph{non-monotone} objectives, the matroid guarantee of
\citet{Nie2025KSubmodular} requires $k_{\rm sub}\ge2$ and does not specialize
to ordinary set-submodularity. Their proof, following \citet{Sun2022}, uses
pairwise monotonicity with a second label $h\neq i$ and full-rank optimality.
Neither argument supplies the claimed guarantee when $k_{\rm sub}=1$.
For example, on the rank-$2$ uniform matroid over $\{a,b\}$, the nonnegative
submodular function with $f(\{a\})=1$ and
$f(\varnothing)=f(\{b\})=f(\{a,b\})=0$ has $\OPT=1$, whereas a greedy
algorithm that fills a base returns value $0$. We therefore retain only their
monotone specialization in Table~\ref{tab:cmab-related}. Our non-monotone
CMAB result uses the enlarged exploration action set in
Assumption~\ref{ass:action-set}; it does not resolve the strictly feasible
full-bandit problem.

The query-feasibility obstruction itself predates the Poisson approach.
\citet{Zhang2019} study a responsive bandit model allowing infeasible value
queries but assigning zero reward to those rounds. Our exploration relaxation
instead restricts queries to $\Ipl$ and observes rewards with mean $f(S)$
there. The contribution here is the specific query-domain certificate and
its use in the stochastic resilience reduction, not the general observation
that offline queries may be infeasible online.

\citet{FouratiFramework2024} subsequently extended
this interface to offline algorithms whose guarantee is $(\alpha-\varepsilon)$
and whose oracle complexity depends polynomially on $1/\varepsilon$.  They remove
the offline $\varepsilon$ loss and yield $\alpha$-regret.
We use the resulting explore-then-commit framework for a single learner.
A special case of this framework can be seen in \citet{Fourati2024}.
We use the approach of \citet{FouratiFramework2024} in our CMAB result.

Table~\ref{tab:cmab-related} summarizes these full-bandit results.  We list
only CMAB results and only constraints that are matroids or special cases of
matroids.  The cardinality constraint is the uniform matroid, and the
unconstrained problem is the free matroid. Our two rows give the monotone
and non-monotone results separately.

\begin{table}[t]
\centering
\caption{Stochastic full-bandit CMAB results for ordinary set-submodular rewards under matroid constraints or special cases.}
\label{tab:cmab-related}
\small
\begin{tabularx}{\textwidth}{@{}>{\raggedright\arraybackslash}p{2.2cm}>{\raggedright\arraybackslash}p{2.4cm}>{\raggedright\arraybackslash}p{1.8cm}>{\raggedright\arraybackslash}p{1.3cm}>{\raggedright\arraybackslash}X@{}}
\toprule
Paper & Constraint & Objective & Approx. & Regret \\
\midrule
\citet{Nie2022}
& Uniform (cardinality)
& Monotone
& $1-1/e$
& $\mathcal O(n^{1/3}k^{4/3}T^{2/3}\log^{1/2}T)$ \\

\citet{Fourati2024}
& Uniform (cardinality)
& Monotone
& $1-1/e$
& $\widetilde{\mathcal O}(n^{1/3}k^{2/3}T^{2/3})$ \\

\citet{Fourati2023}
& Free
& Non-monotone
& $1/2$
& $\widetilde{\mathcal O}(nT^{2/3})$ \\

\citet{FouratiFramework2024}
& Uniform (cardinality)
& Non-monotone
& $1/e$
& $\widetilde{\mathcal O}(n^{1/5}k^{2/5}T^{4/5})$ \\

\citet{Nie2025KSubmodular}
& General matroid
& Monotone
& $1/2$
& $\widetilde{\mathcal O}(n^{1/3}kT^{2/3})$ \\

\textbf{This work}
& \textbf{General matroid}
& \textbf{Monotone}
& $\mathbf{1-1/e}$
& $\mathbf{\widetilde{\mathcal O}(n^{1/5}k^{4/5}T^{4/5})}$ \\

\textbf{This work}
& \textbf{General matroid}
& \textbf{Non-monotone}
& $\mathbf{1/e}$
& $\mathbf{\widetilde{\mathcal O}(n^{1/5}k^{4/5}T^{4/5})}$ \\
\bottomrule
\end{tabularx}
\begin{minipage}{0.96\linewidth}
\footnotesize
\emph{Note.} The monotone result of \citet{Nie2025KSubmodular} is specialized
from $k_{\rm sub}$-submodular bandits to $k_{\rm sub}=1$. Here $k$ denotes
matroid rank, not the number of labels $k_{\rm sub}$. Their non-monotone
matroid guarantee requires $k_{\rm sub}\ge2$ and is therefore omitted.
Our rows assume that exploration may play sets in $\Ipl$
(Assumption~\ref{ass:action-set}); exploitation and the benchmark remain in
$\I$. Thus they are not guarantees for strictly feasible exploration.
For unit-capacity partition matroids, our rank dependence improves to
$k^{3/5}$ (Theorem~\ref{thm:partition}).
\end{minipage}
\end{table}

\section{Problem Setup and Resilience}
\label{sec:setup}

All logarithms are natural unless their base is indicated. In complexity
bounds, $\log n$ means $\log\max\{2,n\}$.
Let $M=(U,\I)$ be a matroid of rank $k$ and $n=|U|$. A set function
$f:2^U\to[0,1]$ is submodular if
\[
f(A)+f(B)\ge f(A\cup B)+f(A\cap B)
\qquad\forall A,B\subseteq U.
\]
We use the equivalent diminishing-returns notation
\[
f(i\mid S):=f(S\cup\{i\})-f(S),
\]
for which $S\subseteq T$ and $i\notin T$ imply
$f(i\mid S)\ge f(i\mid T)$. Define
\[
    \OPT:=\max_{S\in\I}f(S).
\]
We assume standard access to an independence oracle for the matroid. Maximum-weight
bases in contracted matroids are computed using this access. As usual, we count
only value-oracle calls in the resilience and CMAB complexity bounds; independence
queries and the computation of maximum-weight bases are not counted.

For the online full-bandit application, at round $t$ the learner chooses an
admissible super-arm $A_t$ and observes only an aggregate reward $Y_t\in[0,1]$.
Each play of $S$ supplies an independent draw from a fixed distribution of mean
$f(S)$, independent of past observations and the learner's randomization.
Only the mean function is assumed submodular; no pathwise submodularity
condition is imposed on the reward observations. In particular,
$\E[Y_t\mid\mathcal H_{t-1},A_t]=f(A_t)$, where $\mathcal H_{t-1}$ is the
history before round $t$. For $\alpha\in(0,1]$, we define the
cumulative $\alpha$-regret by
\begin{equation}
R_\alpha(T):=\alpha T\OPT-\mathbb E\left[\sum_{t=1}^T Y_t\right].
\label{eq:regret}
\end{equation}
Only the aggregate reward is observed; no component-wise or semi-bandit
feedback is assumed. The precise action set available to the learner is
discussed in Section~\ref{sec:feasible-queries}.

\subsection{Controlled Oracle}

A $\xierr$-controlled oracle is a function
$\widehat f:2^U\to\mathbb R$ satisfying
\begin{equation}
|\widehat f(S)-f(S)|\le\xierr
    \qquad\forall S\subseteq U.
    \label{eq:controlled-oracle}
\end{equation}
The perturbation is deterministic and may be adversarial; no stochastic or
independence assumption is imposed. Because $\widehat f$ is a fixed function,
the same set always receives the same oracle value; thus the adversarial
perturbation is persistent throughout the execution.

We may project $\widehat f$ onto $[0,1]$ without increasing its error. Hence,
throughout the proof we assume $0\le\widehat f(S)\le1$. Then
\begin{equation}
|\widehat f(i\mid S)-f(i\mid S)|\le2\xierr.
\label{eq:oracle-marginal}
\end{equation}
The following sharper form is what we actually use when comparing two
equal-cardinality sets of marginals taken at a \emph{common} base point; it
saves a factor of two because the $\widehat f(S)$ terms cancel.

\begin{lemma}[Common-base comparison]
\label{lem:common-base}
Let $S\subseteq U$ and let $X,Y\subseteq U\setminus S$ satisfy $|X|=|Y|=r$. Then
\begin{equation}
\left|
\sum_{u\in X}\widehat f(u\mid S)-\sum_{u\in Y}\widehat f(u\mid S)
\;-\;
\Bigl(\sum_{u\in X}f(u\mid S)-\sum_{u\in Y}f(u\mid S)\Bigr)
\right|
\le 2r\xierr.
\label{eq:common-base}
\end{equation}
\end{lemma}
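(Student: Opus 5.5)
The plan is to expand every marginal into two oracle evaluations and observe that the terms evaluated at the common base point $S$ cancel. By definition $\widehat f(u\mid S)=\widehat f(S\cup\{u\})-\widehat f(S)$ and likewise for $f$. Since $|X|=|Y|=r$, the term $-\widehat f(S)$ occurs exactly $r$ times in each of the two sums, so
\[
\sum_{u\in X}\widehat f(u\mid S)-\sum_{u\in Y}\widehat f(u\mid S)
=\sum_{u\in X}\widehat f(S\cup\{u\})-\sum_{u\in Y}\widehat f(S\cup\{u\}),
\]
and the same identity holds with $f$ in place of $\widehat f$. This cancellation is where the equal cardinality hypothesis is used.

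Subtracting the two identities, the quantity inside the absolute value in \eqref{eq:common-base} equals
\[
\sum_{u\in X}\bigl(\widehat f(S\cup\{u\})-f(S\cup\{u\})\bigr)-\sum_{u\in Y}\bigl(\widehat f(S\cup\{u\})-f(S\cup\{u\})\bigr).
\]
This is a sum of $2r$ terms, each of which is bounded in absolute value by $\xierr$ by \eqref{eq:controlled-oracle}. The triangle inequality therefore gives the bound $2r\xierr$. If $X$ and $Y$ overlap, the common elements cancel exactly, which only improves the bound, so no separate case analysis is needed.

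There is no real obstacle. The one point to check is that the cancellation of $\widehat f(S)$ uses the persistence of the oracle: $\widehat f$ is a fixed function, so every occurrence of $\widehat f(S)$ is the same value. This is exactly why the lemma improves on the naive bound $4r\xierr$, which would follow from applying \eqref{eq:oracle-marginal} to each of the $2r$ marginals. The projection of $\widehat f$ onto $[0,1]$ plays no role in this argument.
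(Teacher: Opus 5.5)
Your proposal is correct and follows the paper's proof essentially verbatim. You cancel the $r\,h(S)$ terms for $h\in\{\widehat f,f\}$, leaving $2r$ evaluations of $h(S\cup\{u\})$, and bound each discrepancy by $\xierr$ via the triangle inequality.
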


\begin{proof}
Both differences equal
$\sum_{u\in X}h(S+u)-\sum_{u\in Y}h(S+u)$ for $h\in\{\widehat f,f\}$, because
the $r\,h(S)$ terms cancel. Each of the $2r$ evaluations of $h$ differs by at
most $\xierr$ between $h=\widehat f$ and $h=f$.
\end{proof}

\subsection{Resilience Definition}

We use the $(\alpha,\beta,\gamma,\psi,\delta)$ resilience parameterization of
\citet{FouratiFramework2024}, which extends the robust-approximation notion of
\citet{Nie2023} by recording the accuracy-dependent oracle complexity.

\begin{definition}[$(\alpha,\beta,\gamma,\psi,\delta)$-resilience]
\label{def:resilience}
For $\eps\in(0,1/2]$, an offline algorithm $\mathcal A(\eps;\xierr)$ is
called resilient with parameters $(\alpha,\beta,\gamma,\psi,\delta)$ if,
for every advertised error radius $\xierr\ge0$, under a
$\xierr$-controlled oracle it outputs a feasible
$\Theta$ satisfying
\begin{equation}
\E[f(\Theta)]
    \ge
    (\alpha-\eps)\OPT-\delta\xierr,
    \label{eq:resilience-guarantee}
\end{equation}
and makes at most
\begin{equation}
N(\eps)=\psi\,\eps^{-\beta}\log^\gamma(1/\eps)
\label{eq:resilience-complexity}
\end{equation}
oracle calls, up to universal constant factors, uniformly in $\xierr$.
We allow the algorithm to know this radius, as needed by
\eqref{eq:certificate-estimator}. In the online reduction the radius is
computed from the sampling budget, not from knowledge of $f$.
\end{definition}

\begin{remark}[On the query bound]
\label{rem:deterministic-N}
We read \eqref{eq:resilience-complexity} as a \emph{deterministic} bound on the
number of oracle calls, because this is how it is used in the reduction: the
online algorithm plays each queried action $r^\star$ times and charges
$N(\eps)r^\star$ rounds to exploration. A bound that holds only in expectation
is not sufficient for that accounting. The SGS-Poisson process makes a random
number of swap calls, so we truncate it in Section~\ref{sec:budget} to obtain a
deterministic budget.
\end{remark}

\begin{remark}[On $\gamma$, $\eps$, and the error radius]
\label{rem:gamma}
\citet{FouratiFramework2024} state their framework for $\gamma\in\{0,1\}$;
our query bound has $\gamma=2$. The same regret exponents extend to any fixed
$\gamma\ge0$ when the exploration parameters include the corresponding
logarithmic factors. Theorem~\ref{thm:fourati} supplies this tuning explicitly,
along with the advertised error radius used by our offline routine.
We require no invocation at $\eps=0$. If the tuned accuracy exceeds $1/2$,
the desired bound is already trivial up to its logarithmic factors, and the
learner can play $\varnothing$ instead of invoking the offline algorithm.
\end{remark}

\begin{remark}[Unknown horizon]
\label{rem:anytime}
For unknown $T$, restart with fresh samples over geometric epochs of planned
length $H_j=2^{j+1}$, as in the doubling construction of
\citet[Theorem 4]{BessonKaufmann2018} and
\citet[Remark 4.7]{FouratiFramework2024}. Use the tuning of
Theorem~\ref{thm:fourati} in every epoch, including short epochs.
The prefix bound \eqref{eq:prefix-regret} below controls a possibly
unfinished final epoch. This step is needed because $\alpha$-regret can be
negative: a full-epoch bound alone does not bound its prefixes. Both powers
in the two-term tuning sum geometrically, so the anytime version retains
the same regret rate up to logarithmic factors under the stated total-horizon
condition.
\end{remark}

\section{Algorithms and Exact-Oracle Guarantees}
\label{sec:algorithms}
This section specifies the exact algorithmic components from
\citet{Kulik2026} whose resilience we study. Their process, swap definitions,
and Poisson guarantee are recalled with attribution, without re-proving the
imported theorem. The augmentation and truncation arguments specify the
additional controlled-oracle implementation used here. The Poisson rate,
valid-swap conditions, and spiteful drop step remain unchanged.

\subsection{Multilinear Extension}

For $x\in[0,1]^U$, let $R_x$ include each element $i$ independently with
probability $x_i$. Define $F(x):=\E[f(R_x)]$, the standard multilinear
extension used in submodular maximization \citep{CCPV2011,Feldman2011}.
For a set $A$ and $t\in[0,1]$, write $t\one_A$ for its scaled indicator.
The quantity $F(t\one_A\vee\one_i)-F(t\one_A)$
is the marginal used by SGS-Poisson.

\subsection{Valid Swaps}

\begin{definition}[Valid swap, \citealp{Kulik2026}]
For a matroid of rank $r\ge1$ (equal to $k$ before contraction and to $k'$ after
contraction), let $A\in\I$ and $t\in(0,1]$. A random pair
$(I,J)\in(A\cup\{\bot\})\times U$ is a valid swap if:
\begin{enumerate}[label=(\roman*)]
\item if $I\neq J$ then $A-I+J\in\I$ almost surely;
\item if $J\in A$, then $I=J$;
\item $\Pr(I=i)=1/r$ for every $i\in A$;
\item $\Pr(J=j)\le1/r$ for every $j\in U$.
\end{enumerate}
\end{definition}

It is $\eta$-almost-above-average if, for an optimal base $O$ and for every
$A\in\I$ and $t\in(0,1]$,
\begin{equation}
\E\bigl[
F(t\one_A\vee\one_J)-F(t\one_A)
\bigr]
\ge
\frac1r
\left(
F(t\one_A\vee\one_O)-F(t\one_A)
\right)
-\frac{\eta}{r}.
\label{eq:valid-swap}
\end{equation}

\begin{algorithm}[ht]
\caption{SGS-Poisson (base algorithm; \citealp{Kulik2026})}
\label{alg:sgs}
\begin{algorithmic}[1]
\Require Matroid of rank $r\ge1$, starting time $\eps_0\in(0,1]$, valid swap procedure
\State $t\gets\eps_0$, $A\gets\varnothing$
\State Sample the next event time $\tau(t)$ of a Poisson process with rate $r/t$; $t\gets\tau(t)$
\While{$t<1$}
    \State $(I,J)\gets\textsc{Swap}(t,A)$
    \State $A\gets A-I+J$
    \If{$I=J$}
        \State With probability $t$, set $A\gets A-I$ \Comment{spiteful drop}
    \EndIf
    \State Sample the next event time $\tau(t)$; $t\gets\tau(t)$
\EndWhile
\State \Return $A$
\end{algorithmic}
\end{algorithm}

The process in Algorithm~\ref{alg:sgs} is exactly the SGS-Poisson process of
\citet{Kulik2026}. In Algorithm~\ref{alg:sgs}, $r$ denotes the rank of the matroid
on which the process is currently run; after contraction this is $k'\le k$. In
particular, the spiteful drop in Lines 6--8 is not a modification introduced here.
The clock can be simulated in integrated time: from the current $t$, draw
an independent $E\sim\operatorname{Exp}(1)$ and set the next event time to
$t\exp(E/r)$. Clock draws are independent of the swap and drop randomness.
The expected number of swap calls is $r\log(1/\eps_0)$.

\subsection{Poisson-Process Guarantee}

We use the following theorem of \citet{Kulik2026} (their Lemma 3.3) as a
black-box analytical guarantee for the unchanged Poisson process.

\begin{proposition}[SGS-Poisson with an almost-above-average swap; \citealp{Kulik2026}]
\label{prop:sgs}
Suppose Algorithm~\ref{alg:sgs} uses a right-continuous
$\eta$-almost-above-average valid swap. Then its output $A$ satisfies
\begin{equation}
\E[f(A)]
\ge
\begin{cases}
(1-\eps_0)e^{-1}\OPT+e^{-1}f(\varnothing)-\eta,
&f\text{ non-monotone},\\[1mm]
(1-\eps_0)(1-e^{-1})\OPT+e^{-1}f(\varnothing)-\eta,
&f\text{ monotone},
\end{cases}
\label{eq:sgs-guarantee}
\end{equation}
and the expected number of swap calls is $r\ln(1/\eps_0)$.
\end{proposition}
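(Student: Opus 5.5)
Proposition~\ref{prop:sgs} is imported from \citet{Kulik2026} (their Lemma 3.3). I would not re-derive it from scratch. Instead I would check that its hypotheses are exactly those of the cited lemma, and reconstruct the skeleton of the continuous-time argument so that the additive $-\eta$ term is seen to enter only once. The plan is to follow the evolution of $G(t):=\E[F(t\one_{A_t})]$ along the Poisson clock, where $A_t$ is the state at time $t$. I would then derive a differential inequality for $G$ and integrate it from $\eps_0$ to $1$. The final step is to pass from $F(1\cdot\one_{A})$ to $f(A)$ at $t=1$.

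The core step computes the infinitesimal change of $G$. Between events, $A$ is fixed and only the scaling $t$ moves. The contribution is $\sum_{i\in A}\partial_iF(t\one_A)$, which by multilinearity equals a sum of marginals $F(t\one_A\vee\one_i)-F(t\one_A)$ divided by $1-t$. At an event, which occurs at rate $r/t$, the swap removes $I$, adds $J$, and possibly drops $I$ with probability $t$. Valid-swap properties (iii) and (iv) give two things:
\begin{itemize}
\item the removed element is uniform over $A$, so the expected loss from removal cancels the continuous drift up to the spiteful-drop correction;
\item $\Pr(J=j)\le1/r$, which is what controls how much mass any single element can gain.
\end{itemize}
The gain from $J$ is bounded below by \eqref{eq:valid-swap}. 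Combined with the rate $r/t$, the factors of $r$ cancel and give a term $\frac1t\bigl(F(t\one_A\vee\one_O)-F(t\one_A)\bigr)-\frac{\eta}{t}$. Here right-continuity of the swap is needed so that the event-rate generator is well defined.

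Next I would lower bound $F(t\one_A\vee\one_O)$. In the monotone case it is at least $\OPT$. In the non-monotone case the standard bound applies: since every coordinate of $t\one_A$ is kept at most with the appropriate probability, $F(x\vee\one_O)\ge(1-\max_i x_i)\OPT$. The spiteful drop is exactly what keeps the marginal probability of each element, in the unscaled sense, bounded so that this estimate holds along the trajectory. This yields ODE-type inequalities of the form $G'(t)\ge\frac1t(\OPT-G(t))-\frac{\eta}{t}$ or a measured analogue. Integrating with the integrating factor from $\eps_0$ gives $(1-1/e)$, respectively $1/e$, times $(1-\eps_0)\OPT$, plus the $e^{-1}f(\varnothing)$ term from $G(\eps_0)=f(\varnothing)$.

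The main obstacle is the $\eta$ term. A naive integration gives $\eta\int_{\eps_0}^1 dt/t=\eta\ln(1/\eps_0)$ rather than $\eta$. To avoid this I would follow Kulik's normalization, under which the almost-above-average error is integrated against the decaying weight of the integrating factor, so that the total error is at most $\eta$. I would simply invoke their Lemma 3.3 for this bookkeeping. The expected number of swap calls is immediate, since the Poisson process with rate $r/t$ on $[\eps_0,1)$ has mean $\int_{\eps_0}^1 r/t\,dt=r\ln(1/\eps_0)$.
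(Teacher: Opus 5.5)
Your fallback is to cite Lemma~3.3 of \citet{Kulik2026} for the bound and to compute the mean event count as $\int_{\eps_0}^1(r/t)\,dt=r\ln(1/\eps_0)$. That is exactly what the paper does; it offers nothing beyond that citation and the integrated-clock remark. The skeleton you build around the citation, however, is wrong at its central step, and your ``main obstacle'' is an artifact of that error. At an event, inserting $J\notin A$ raises $F(t\one_{A-I})$ by $t\,\partial_JF(t\one_{A-I})\ge t\bigl(F(t\one_A\vee\one_J)-F(t\one_A)\bigr)$, not by the full marginal, because the new element enters at scale $t$. This factor $t$ cancels the $1/t$ in the rate $r/t$. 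Likewise, by (ii)--(iii) and the spiteful drop, an element $i\in A$ leaves at rate $\frac1t-\frac{r(1-t)}{t}\Pr(J=i)$, and each removal costs $t\,\partial_iF(t\one_A)$. Drift plus removal therefore nets exactly $r\Pr(J=i)\bigl(F(t\one_A\vee\one_i)-F(t\one_A)\bigr)$, which is the $J=i\in A$ term of the swap's expected marginal. With \eqref{eq:valid-swap}, the correct inequality has no $1/t$:
\[
G'(t)\ \ge\ \E\bigl[F(t\one_{A_t}\vee\one_O)\bigr]-G(t)-\eta .
\]
Your version integrates via $(tG)'\ge\OPT-\eta$ to $G(1)\ge(1-\eps_0)(\OPT-\eta)$, which is false. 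Take rank one, $U=\{a\}$, $f(\{a\})=1$, $f(\varnothing)=0$ and $J\equiv a$ (a valid, $0$-almost-above-average swap). Then $G(t)=t\Prb(a\in A_t)$ satisfies $G'=1-G$ exactly, so $\E[f(A)]=1-e^{\eps_0-1}<1-\eps_0$. Even under your inequality the error would be only $(1-\eps_0)\eta$, not $\eta\ln(1/\eps_0)$, once the integrating factor is used. With the correct inequality, multiplying by $e^t$ weights $\eta$ by $\int_{\eps_0}^1e^{t-1}\,dt=1-e^{\eps_0-1}\le1$, so $-\eta$ enters once with no special normalization. In the monotone case, $F(t\one_A\vee\one_O)\ge\OPT$, and $1-e^{\eps_0-1}\ge(1-\eps_0)(1-e^{-1})$ by concavity.

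The non-monotone step also fails as written. The pointwise bound $F(t\one_A\vee\one_O)\ge(1-t)\OPT$, obtained from $\max_ix_i=t$, only gives the factor $\int_{\eps_0}^1e^{t-1}(1-t)\,dt=1-(2-\eps_0)e^{\eps_0-1}$, which tends to $1-2/e$ as $\eps_0\to0$. What is needed is a bound on the trajectory-averaged marginal $q_i(t):=t\Prb(i\in A_t)$. By (iv), each $i$ enters at rate at most $1/t$, and by the leave rate above (using (iv) again) it leaves at rate at least $1$. Hence $q_i'\le1-q_i$ with $q_i(\eps_0)=0$, giving $q_i(t)\le1-e^{\eps_0-t}$. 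Now $\E[F(t\one_{A_t}\vee\one_O)]=\E[f(R_t\cup O)]$ with $\Prb(i\in R_t)=q_i(t)$. The sampling bound \eqref{eq:rrg-sampling-lemma}, which needs no independence, applied to $S\mapsto f(S\cup O)$ gives $\E[F(t\one_{A_t}\vee\one_O)]\ge e^{\eps_0-t}\OPT$. Integrating yields $G(1)\ge e^{\eps_0-1}f(\varnothing)+(1-\eps_0)e^{\eps_0-1}\OPT-(1-e^{\eps_0-1})\eta$, which implies the stated bound. So the spiteful drop has two jobs you only gesture at: it makes the $J\in A$ case contribute its full marginal, and it forces the leave rate $\ge1$ behind the measured-greedy bound. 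Either repair the sketch along these lines, or drop it and rely on the citation alone, as the paper does.
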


The exact SGS-Poisson value-oracle implementation requires an estimate of the
optimum and a bound on the maximum sum of residual marginals. A naive noisy
greedy construction is not valid for general non-monotone matroid
maximization. We therefore use Residual Random Greedy.

\subsection{Dummy Augmentation}
\label{sec:dummy}

Add a set $D$ of $k$ dummy elements and form the rank-$k$ augmentation
\[
U^+:=U\cup D,
\qquad
\mathcal I^+:=\{S\subseteq U^+: S\cap U\in\mathcal I,
\ |S|\le k\}.
\]
Equivalently, $M^+=(U^+,\mathcal I^+)$ is the rank-$k$ truncation of the
direct sum of $M$ and the free matroid on $D$. In particular,
$\operatorname{rank}(M^+)=k$, and every independent set of $M$ can be
completed to a base of $M^+$ by adding dummy elements. Extend the objective
and controlled oracle by
\[
f^+(S):=f(S\setminus D),\qquad
\widehat f^+(S):=\widehat f(S\setminus D).
\]
Hence every dummy element has identically zero true and estimated marginal,
$\widehat f^+$ is a $\xierr$-controlled oracle for $f^+$, and an optimal
solution of the original instance can be extended to an optimal base of $M^+$
without changing its value. From this point through the preprocessing and
Poisson stages, we run the algorithm on the augmented instance
$(M^+,U^+,f^+,\widehat f^+)$ and suppress the superscript $+$ on
$M,f,\widehat f$ for notation. Within residual-marginal and swap formulas,
$U$ and $\I$ likewise refer to the ground set and independent-set family of
the instance being processed. In query-domain and online statements they
refer to the original instance; $n,k,\OPT$ always retain their original values.
The augmented instance has rank $k$, optimum $\OPT$, and ground-set size
$n+k\le2n$. At the end, dummy
elements are discarded from every final or fallback output; this preserves
feasibility and objective value.

The augmentation makes the base-completion assumptions explicit, including
for the controlled oracle, which need not be monotone or submodular.

\begin{lemma}[Dummy completion]
\label{lem:dummy}
Let $Q\in\mathcal I^+$ with $|Q|=t\le k$, let $h:2^{U^+}\to\mathbb R$ be a set
function invariant under adding dummy elements, and put $w(u):=h(u\mid Q)$ for $u$ in the contraction $M^+/Q$, which has
rank $r_t=k-t$. Then:
\begin{enumerate}[label=(\alph*),leftmargin=2em]
\item every independent set of $M^+/Q$ can be completed to a base of $M^+/Q$;
      in particular a base exists and has exactly $r_t$ elements;
\item every maximum-weight base $Z$ of $M^+/Q$ under $w$ satisfies
\[
\sum_{u\in Z}w(u)
=
\max_{T\subseteq U^+\setminus Q:\,T\cup Q\in\mathcal I^+}\ \sum_{u\in T}w(u);
\]
\item $\max\{h(T\cup Q): T\subseteq U^+\setminus Q,\ T\cup Q\in\mathcal I^+\}$
      is attained by a set $T$ with $T\cup Q$ a base of $M^+$.
\end{enumerate}
\end{lemma}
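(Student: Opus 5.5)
The plan is to use only the matroid structure of $M^+$ and the invariance of $h$ under adding dummies. Submodularity of $h$ is never used, so the argument applies to $\widehat f^+$ as well as to $f^+$.

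\emph{Part (a).} First I identify the contraction. $M^+/Q$ is a matroid on $U^+\setminus Q$ with independent sets $\{T: T\cup Q\in\I^+\}$. Since $Q$ is independent, its rank is $k-|Q|=r_t$. Every independent set of a matroid extends to a base, and all bases have the common size $r_t$, so (a) is immediate. The dummies make this concrete: starting from $T$ with $T\cup Q\in\I^+$, I add unused dummy elements of $D\setminus(Q\cup T)$ one at a time. Each addition keeps $(T\cup Q)\cap U$ unchanged, so it preserves independence as long as $|T\cup Q|<k$. Enough dummies remain, since $|D|=k$ and $|T\cup Q|$ counts every dummy already used. Hence $|T\cup Q|$ reaches $k$, and the completed set is a base of size $r_t$.

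\emph{Part (b).} Each dummy $d\in D\setminus Q$ has $w(d)=h(Q+d)-h(Q)=0$ by invariance. Let $T$ be any feasible set. Delete from $T$ all elements of negative weight; this keeps it independent (downward closure) and does not decrease $\sum w$. Then complete the result with dummies as in (a); this adds only zero weight. The outcome is a base of $M^+/Q$ whose weight is at least $\sum_{u\in T}w(u)$. So the maximum of $\sum w$ over all feasible $T$ equals the maximum over bases, which a maximum-weight base $Z$ attains. The reverse inequality is trivial because bases are feasible.

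The only subtlety, and the step I expect to need the most care, is that the negative-weight elements might also be dummies or might be needed for feasibility. Neither is a problem. Dummies have weight exactly $0$, and deleting any element preserves independence.

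\emph{Part (c).} Take any maximizer $T$. Add dummies $D\setminus(Q\cup T)$ until $|T\cup Q|=k$, as in (a). By invariance $h$ is unchanged, so the enlarged set $T'$ is still a maximizer, and $T'\cup Q$ is a base of $M^+$. Invariance of $h$ under adding dummies is exactly the hypothesis that makes this work. No monotonicity of $h$ on real elements is required, which is why the augmentation is needed for the non-monotone, non-submodular surrogate $\widehat f$.
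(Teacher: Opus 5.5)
Your proposal is correct and follows essentially the paper's argument. For (a) you pad with unused dummies, at least $k-|T\cup Q|$ of which lie outside $T\cup Q$; for (b) you discard negative-weight elements (the paper instead assumes $w\ge 0$ on a maximizer $T^\star$) and pad with zero-weight dummies; for (c) you use the same padding together with invariance of $h$. Your extra appeal to the general fact that independent sets extend to bases of common size is consistent with the explicit padding, though redundant given it.
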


\begin{proof}
(a) $Q$ contains at most $t$ dummies, so at least $k-t=r_t$ dummies are unused;
any independent set $T$ of $M^+/Q$ has $|T|\le r_t$ and can be padded with
$r_t-|T|$ dummies outside $Q\cup T$ (at least that many remain), which keeps $(T\cup Q)\cap U\in\I$ and
$|T\cup Q|\le k$.

(b) Let $T^\star$ attain the right-hand maximum; we may assume
$w(u)\ge 0$ for all $u\in T^\star$. By (a), $T^\star$ extends to a base $Z^\star$
of $M^+/Q$ using unused dummies, all of which have $w=h(u\mid Q)=0$ because
$h$ ignores dummies. Hence $\sum_{u\in Z^\star}w(u)=\sum_{u\in T^\star}w(u)$, so
the maximum-weight base value is at least the right-hand side; the reverse
inequality is immediate since a base is in particular a feasible $T$.

(c) Identical padding argument.
\end{proof}

Part (b) is exactly what licenses replacing ``maximum over feasible $T$ of the
sum of marginals'' by ``sum of marginals over a maximum-weight residual base''
in the preprocessing stopping rule; part (c) licenses taking the optimum of a
contracted instance to be a \emph{base}, as required by \eqref{eq:valid-swap}.
The same completion argument applies to both the true and controlled objectives.

\subsection{Residual Random Greedy}

Run Residual Random Greedy (RRG) \citep{BFS2014} for $r=\lceil k/2\rceil$
iterations on the augmented instance. At iteration $i$, let $S_{i-1}$ be the
current set and $r_i=k-i+1$ the residual rank. In the contraction
$M/S_{i-1}$, compute a maximum-weight base $B_i$ with weights
$\widehat w_i(u)=\widehat f(u\mid S_{i-1})$, and choose $u_i$ uniformly from
$B_i$.

\begin{algorithm}[ht]
\caption{Residual Random Greedy, truncated to $\lceil k/2\rceil$ iterations \citep{BFS2014}}
\label{alg:rrg}
\begin{algorithmic}[1]
\Require Augmented matroid $M$, value oracle $h$
\State $S\gets\varnothing$
\For{$i=1,\ldots,\lceil k/2\rceil$}
    \State Compute a maximum-weight base $B_i$ of $M/S$ with weights $h(u\mid S)$
    \State Draw $u$ uniformly from $B_i$
    \State $S\gets S\cup\{u\}$
\EndFor
\State \Return $S\setminus D$
\end{algorithmic}
\end{algorithm}

We stop after $\lceil k/2\rceil$ iterations because the RRG analysis already
gives the constant-factor guarantee needed here at that point.

\subsection{Advanced Preprocessing and Contraction}
\label{sec:advanced-preprocessing}
For an independent set $Q\in\I$, define the residual marginal mass
\begin{equation}
\Mar_f(Q):=
\max_{\substack{T\subseteq U\setminus Q\\T\cup Q\in\I}}
\sum_{j\in T} f(j\mid Q),
\label{eq:residual-mass}
\end{equation}
and the global maximum singleton-marginal mass
$\Mar(f,\mathcal I):=\Mar_f(\varnothing)$.
For a contracted instance $(g,M')$ we write
$\Mar(g,M'):=\Mar\bigl(g,\mathcal I(M')\bigr)$.
The exact SGS-Poisson value-oracle implementation of
\citet[Lemma 4.3 and Appendix C]{Kulik2026} first constructs a constant-factor
upper estimate $V$ of $\OPT$ and then performs Advanced Preprocessing.
We use its exchange recursion with a strict continuation inequality:
starting from $Q_0=\varnothing$, while
$\Mar_f(Q_t)>20V$, it chooses a maximum-weight residual base under the true
marginals and adds a uniformly random element of that base.  The resulting set
is denoted by $\bar S$. Unlike the weak continuation inequality in the
source pseudocode, this convention stops at equality, including when both
the residual mass and the certificate are zero. The drift analysis needs
only the threshold inequality before stopping and is unaffected.

By Lemma~\ref{lem:dummy}(a) the process has at most $k$ iterations, and by
Lemma~\ref{lem:dummy}(b) the sum of marginals over the chosen base equals
$\Mar_f(Q_t)$.  The exact-oracle analysis shows that the stopping rule supplies
the bounded-residual-marginal condition required by the subsequent swap process.
In Section~\ref{sec:robustness} we show that the same preprocessing rule, with
exact values replaced by the given controlled oracle, retains this certificate
up to an additive $O(k\xierr)$ term.  Importantly, this is an analysis of the
true objective along the controlled-oracle preprocessing trajectory,
not an application of submodularity to the surrogate.

After preprocessing, contract the matroid at $\bar S$:
\begin{equation}
M':=M/\bar S,
\qquad
 g(T):=f(T\cup\bar S),
\qquad
\widehat g(T):=\widehat f(T\cup\bar S).
\label{eq:contraction}
\end{equation}
The contracted matroid has rank $k'\le k$; the SGS-Poisson process is run with
its actual rank $k'$, and throughout the analysis we upper bound $k'$ by $k$.
Then $g$ is nonnegative and submodular on $M'$, $\widehat g$ is a
$\xierr$-controlled oracle for $g$, and
$|\widehat g(i\mid T)-g(i\mid T)|\le2\xierr$.
Within a contracted instance, $F$ denotes the multilinear extension of $g$.
By Lemma~\ref{lem:dummy}(c) we may and do take $O_g$, a maximizer of $g$ over
$\mathcal I(M')$, to be a \emph{base} of $M'$.

\subsection{Value-Oracle Implementation of the Swap Rule}
\label{sec:swap-implementation}
We write $N_s$ for the number of Monte Carlo samples per swap.
At time $t$, for current set $A\in\I(M')$ and candidate element $i$, the
SGS-Poisson implementation of \citet{Kulik2026} uses the multilinear marginal
$w_i=F(t\one_A\vee\one_i)-F(t\one_A)$.
It estimates $w_i$ by drawing independent sets
$R_1,\ldots,R_{N_s}\sim t\one_A$ and averaging
\begin{equation}
\widetilde w_i
=
\frac1{N_s}\sum_{\ell=1}^{N_s}
\bigl[g(R_\ell\cup\{i\})-g(R_\ell)\bigr].
\label{eq:swap-estimator}
\end{equation}
The implementation then chooses a maximum-weight base $Z$ of the contracted matroid under these
estimates, constructs a matroid exchange map, and samples the entering element
uniformly from that base. Base ties and exchange-map choices use fixed
deterministic orders.
For completeness, complete $A$ to a base $B$ and choose a basis-exchange
bijection $h:Z\to B$ fixing $Z\cap B$ \citep{Schrijver2003}, with $B-h(j)+j$ a base for every
$j\in Z$. Set $h_A(j)=h(j)$ when $h(j)\in A$, and $h_A(j)=\bot$ otherwise.
Then $A-h_A(j)+j$ is independent, every element of $A$ has exactly one
preimage, and $h_A$ fixes $Z\cap A$. Drawing $J$ uniformly from $Z$ and
setting $I=h_A(J)$ therefore satisfies all valid-swap conditions.
The concentration analysis of the exact algorithm
controls the \emph{base sum} of these estimated marginals, which is the quantity
needed for the almost-above-average condition. Section~\ref{sec:robustness}
shows that this same implementation remains valid under the controlled oracle,
with $g$ replaced by $\widehat g$ in \eqref{eq:swap-estimator}.

When $k=1$ the direct enumeration described in Section~\ref{sec:main-theorem}
gives a stronger guarantee; the swap analysis therefore uses $k\ge2$.
Right-continuity of the transition kernel is proved in
Lemma~\ref{lem:right-continuity}. If the contracted rank is zero, return
$\bar S\setminus D$ directly. Otherwise the Poisson process is run in the contracted
matroid and returns $S_{\rm out}\in\I(M/\bar S)$, so
$A_{\rm out}:=(S_{\rm out}\cup\bar S)\setminus D$ is feasible in the original matroid, and
\begin{equation}
f(A_{\rm out})=g(S_{\rm out}).
\label{eq:output-identity}
\end{equation}

\subsection{A Deterministic Query Budget}
\label{sec:budget}

The number of Poisson events is random. As explained in
Remark~\ref{rem:deterministic-N}, the offline-to-online reduction needs a
deterministic bound. We truncate.

\begin{lemma}[Truncation]
\label{lem:budget}
Fix $\eps_0\in(0,1]$ and $\rho\in(0,1)$, and let $\Lambda:=k\ln(1/\eps_0)$ be an upper bound on the
mean number of Poisson events. Let
$N_{\max}:=\lceil\max\{2e\Lambda,\ \log_2(1/\rho)\}\rceil$
and modify Algorithm~\ref{alg:sgs} so that if the number of events reaches
$N_{\max}$ the process is aborted and the empty set is returned in the
contracted instance (so that the overall output is $\bar S\setminus D$). Let
$A_{\rm out}$ be the output of the truncated implementation and $A^{\rm full}$
that of the untruncated one, coupled on the same randomness. Then the number of
swap calls is at most $N_{\max}=O(1+k\log(1/\eps_0)+\log(1/\rho))$ deterministically, and
\begin{equation}
\E[f(A_{\rm out})]\ \ge\ \E[f(A^{\rm full})]-\rho\,\OPT .
\label{eq:truncation-loss}
\end{equation}
\end{lemma}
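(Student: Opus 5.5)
The plan is to prove the deterministic budget and the loss bound separately. The budget is immediate from the construction. Each Poisson event triggers exactly one call to \textsc{Swap}, and the modified process aborts as soon as the event count reaches $N_{\max}$. Hence at most $N_{\max}$ swap calls are ever made, whatever the clock realization. Since $\lceil\max\{a,b\}\rceil\le 1+a+b$, we get $N_{\max}=O(1+k\log(1/\eps_0)+\log(1/\rho))$.

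For \eqref{eq:truncation-loss}, couple the truncated and untruncated runs on the same clock draws, swap randomness and drop randomness. Let $N$ be the total number of Poisson events in $[\eps_0,1)$ and let $\mathcal E:=\{N\ge N_{\max}\}$. Off $\mathcal E$ the two runs coincide, so $A_{\rm out}=A^{\rm full}$. On $\mathcal E$, the truncated output is $\bar S\setminus D$, and $f(\bar S\setminus D)=g(\varnothing)\ge0$ by nonnegativity. Meanwhile $f(A^{\rm full})\le\OPT$, because $A^{\rm full}$ is feasible in the original matroid. Therefore
\[
\E[f(A_{\rm out})]\ \ge\ \E[f(A^{\rm full})]-\E[f(A^{\rm full})\one_{\mathcal E}]\ \ge\ \E[f(A^{\rm full})]-\OPT\,\Pr(\mathcal E),
\]
and it remains to show $\Pr(\mathcal E)\le\rho$.

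The main step is this tail bound. The clock is an inhomogeneous Poisson process with rate $r/t$ on $[\eps_0,1)$, where $r=k'\le k$ is the contracted rank. Its event count is Poisson with mean $\mu=r\ln(1/\eps_0)\le\Lambda$. Equivalently, in integrated time the increments $E/r$ with $E\sim\mathrm{Exp}(1)$ are independent of the swap and drop randomness. Two points need care here.
\begin{itemize}
\item The contracted rank $k'$ is random, since it depends on $\bar S$. We therefore condition on the preprocessing outcome, apply the bound with $\mu\le\Lambda$ uniformly, and then average.
\item Poisson tails are stochastically monotone in the mean, so the case $\mu<\Lambda$ is no harder than $\mu=\Lambda$.
\end{itemize}
We then use the Chernoff bound $\Pr(N\ge m)\le e^{-\mu}(e\mu/m)^m$ for $m>\mu$. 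With $m=N_{\max}\ge 2e\Lambda\ge2e\mu$, this gives $\Pr(N\ge N_{\max})\le 2^{-N_{\max}}$. Since also $N_{\max}\ge\log_2(1/\rho)$, this is at most $\rho$.

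One degenerate case needs checking: if $\Lambda=0$ (so $\eps_0=1$ or $r=0$), then $N=0$ almost surely and the bound holds trivially. Combining $\Pr(\mathcal E)\le\rho$ with the displayed inequality yields \eqref{eq:truncation-loss}.
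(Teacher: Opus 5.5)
Your proposal is correct and matches the paper's proof. You couple the truncated and untruncated runs, lose at most $\OPT\cdot\Prb(\text{truncation})$ because $A^{\rm full}$ is feasible and $f(\bar S\setminus D)\ge0$, and bound the truncation probability by $2^{-N_{\max}}\le\rho$ with the Poisson Chernoff bound, conditionally on the realized contracted rank $k'\le k$. Your separate $\Lambda=0$ case is harmless but unnecessary: $N_{\max}\ge1$, and the Chernoff bound already gives probability zero when the mean is zero.
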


\begin{proof}
Let $N_P$ be the number of events of a Poisson process with mean
$\Lambda'\le\Lambda$. The standard Chernoff bound
$\Prb(N_P\ge x)\le e^{-\Lambda'}(e\Lambda'/x)^{x}$ gives
$\Prb(N_P\ge x)\le 2^{-x}$ whenever $x\ge 2e\Lambda'$. With
$x=N_{\max}\ge\max\{2e\Lambda,\log_2(1/\rho)\}$ we get
$\Prb(N_P\ge N_{\max})\le 2^{-N_{\max}}\le\rho$.
Let $G$ be the event that truncation does not occur, so $\Prb(G^c)\le\rho$ and
$A_{\rm out}=A^{\rm full}$ on $G$. Since $A^{\rm full}$ is feasible,
$f(A^{\rm full})\le\OPT$, and since $f(\bar S)\ge0$,
\[
\E[f(A_{\rm out})]
\ge\E[f(A^{\rm full})\one_G]
=\E[f(A^{\rm full})]-\E[f(A^{\rm full})\one_{G^c}]
\ge\E[f(A^{\rm full})]-\rho\OPT.
\]
Finally, the Chernoff bound above holds conditionally on the contracted rank
$k'\le k$ and on all randomness generated before the Poisson stage, because the
Poisson clock is independent of that history and its mean is at most $\Lambda$
for every realization of $k'$. Hence \eqref{eq:truncation-loss} also holds
conditionally on any event measurable with respect to that history; this
conditional form is what the proof of Theorem~\ref{thm:main} uses.
\end{proof}

The bound \eqref{eq:truncation-loss} is relative to $\OPT$, not absolute; this
is essential, since Definition~\ref{def:resilience} demands a guarantee of the
form $(\alpha-\eps)\OPT-\delta\xierr$ and an absolute $-\rho$ loss would
destroy it when $\OPT$ is small.

\subsection{Every Queried Set is Almost Feasible}
\label{sec:feasible-queries}

In the offline value-oracle model the algorithm may query $f$ at arbitrary
subsets. In the CMAB application a query is answered by \emph{playing} the
queried super-arm, so queried sets must be admissible actions. Define
\begin{equation}
\Ipl:=\{S\subseteq U:\ S\in\I\ \text{ or }\ S=S'\cup\{i\}\ \text{for some }S'\in\I,\ i\in U\}.
\label{eq:Iplus}
\end{equation}

\begin{lemma}[Query feasibility]
\label{lem:query-feasibility}
Every set at which the implementation of Sections~\ref{sec:dummy}--\ref{sec:budget}
queries the oracle belongs to $\Ipl$ after dummy elements are deleted.
\end{lemma}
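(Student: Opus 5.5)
The plan is to enumerate every stage of the implementation that issues value-oracle calls and show that each queried set has the form $S'$ or $S'\cup\{i\}$ with $S'\cup D'$ independent in the augmented matroid $M^+$ for some dummy set $D'$. Deleting dummies then leaves $S'\setminus D\in\I$, because $\I^+$ requires $S\cap U\in\I$. Since $\I$ is downward closed, deleting dummies and possibly one more real element lands in $\I$, which is exactly membership in $\Ipl$. The first step is to record this reduction: if $X\in\I^+$, then $(X\cup\{i\})\setminus D$ equals either $X\setminus D$ or $(X\setminus D)\cup\{i\}$ with $X\setminus D\in\I$, so it lies in $\Ipl$. Queries of the contracted oracle $\widehat g(T)=\widehat f(T\cup\bar S)$ are queries of $\widehat f$ at $T\cup\bar S$, so everything reduces to sets of the form \emph{independent set of $M^+$ plus at most one element}.

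The stages are handled in order. For RRG (Algorithm~\ref{alg:rrg}) and Advanced Preprocessing, the weights $h(u\mid S)$ require $h(S)$ and $h(S\cup\{u\})$ for $u$ in the contraction $M^+/S$. The current set $S$ is in $\I^+$ because each added element comes from a base of the contraction, and $S\cup\{u\}\in\I^+$ for any non-loop $u$ of $M^+/S$. Loops of $M^+/S$ never need to be queried: if the implementation queried arbitrary $u\notin S$, the set $S\cup\{u\}$ would still be an independent set plus one element, so it is covered either way. The certificate/amplification step (Lemma~\ref{lem:certificate}) evaluates $\widehat f$ on RRG outputs, and possibly on their marginals, so the same argument applies. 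For the Poisson stage, the estimator \eqref{eq:swap-estimator} queries $\widehat g(R_\ell)$ and $\widehat g(R_\ell\cup\{i\})$ with $R_\ell\subseteq A$ and $A\in\I(M')$. Then $R_\ell\cup\bar S\subseteq A\cup\bar S\in\I^+$, so the first query is independent and the second is independent plus the single element $i$, which is either a candidate in the contracted ground set or one already in $R_\ell$. The final output evaluation, if any, is of a feasible set. Truncation (Lemma~\ref{lem:budget}) only removes queries.

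I expect the main obstacle to be completeness of the enumeration rather than any single inequality. I must check that no stage, especially the optimum-certificate estimator $\widehat V$ and the computation of $\Mar_{\widehat f}(Q_t)$ via a residual max-weight base, ever queries a union of two independent sets, or a set plus two elements, such as $f(Q_t\cup O_t)$. That quantity appears only in the analysis (potential \eqref{eq:intro-potential}) and is never queried. I would verify this by inspecting each oracle call in the pseudocode and the estimator \eqref{eq:certificate-estimator}, confirming that every call is $h(X)$ or $h(X\cup\{u\})$ with $X$ the current maintained independent set or a subset of it.
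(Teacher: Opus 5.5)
Your proposal is correct and follows essentially the same route as the paper. Both arguments reduce every query to the form $X$ or $X\cup\{i\}$ with $X\in\mathcal I^+$, delete dummies, and check RRG, the certificate evaluations, Advanced Preprocessing, and the swap estimator $\widehat g(R_\ell\cup\{i\})=\widehat f(\bar S\cup R_\ell\cup\{i\})$ in turn; the one query family the paper lists that you do not name, the rank-one enumeration of $\varnothing$ and feasible singletons, is trivially in $\I$.
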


\begin{proof}
Queries on the augmented instance are of the form $\widehat f^+(S)=\widehat f(S\setminus D)$,
so it suffices to check $S\setminus D$ for each query type, and $S\cap U\in\I$
whenever $S\in\mathcal I^+$.
RRG queries $S_{i-1}$ and $S_{i-1}+u$ with $S_{i-1}+u\in\mathcal I^+$; Advanced
Preprocessing queries $Q_t$ and $Q_t+j$ with $Q_t+j\in\mathcal I^+$; the $k=1$
enumeration queries $\varnothing$ and feasible singletons. The optimum
certificate also evaluates the feasible RRG outputs. All of these are in
$\I$ after deleting dummies. The swap estimator queries
$\widehat g(R_\ell)=\widehat f(\bar S\cup R_\ell)$ with
$\bar S\cup R_\ell\in\mathcal I^+$, hence in $\I$ after deleting dummies, and
$\widehat g(R_\ell\cup\{i\})=\widehat f(\bar S\cup R_\ell\cup\{i\})$, which is a
feasible set plus one element, hence in $\Ipl$.
\end{proof}

The last family is the only one that can leave $\I$. If $i\notin A$ and
$A+i$ contains a circuit $C$, then $R_\ell+i$ is dependent exactly when
$C\setminus\{i\}\subseteq R_\ell$, an event of probability
$t^{|C|-1}$. For example, under a rank-$k$ uniform matroid with $n>k$,
a current base $A$ and $i\notin A$ yield the infeasible query $A+i$ when
$R_\ell=A$, with probability $t^k$. Section~\ref{sec:cmab} explicitly
separates the enlarged exploration set from the feasible benchmark.

\section{Robustness of the Main Algorithm}
\label{sec:robustness}
We now analyze the algorithm of Section~\ref{sec:algorithms} under the controlled
oracle of Section~\ref{sec:setup}. The SGS-Poisson dynamics, valid-swap rule,
and spiteful drop are inherited from \citet{Kulik2026}; preprocessing uses
the exchange recursion with the stopping convention of
Section~\ref{sec:advanced-preprocessing}. RRG \citep{BFS2014} supplies the
error-aware certificate, and subsequent comparisons use the available
oracle. We prove that the approximation survives these perturbations and
the implementation's budget truncations. Until
Section~\ref{sec:main-theorem}, assume $k\ge2$; that section separately
handles ranks zero and one before stating the general theorem.

\subsection{Robust Residual Random Greedy and the Optimum Certificate}
\label{sec:robust-rrg}
We first show that the same Residual Random Greedy procedure remains a
constant-factor procedure when its value queries are answered by $\widehat f$.
The RRG exchange coupling is unchanged; only the maximum-base comparison is
affected by the oracle perturbation.

\begin{lemma}[Robust maximum-base comparison]
\label{lem:base}
At iteration $i$, for every base $B$ of the residual matroid $M/S_{i-1}$,
\begin{equation}
\sum_{u\in B_i}f(u\mid S_{i-1})
\ge
\sum_{u\in B}f(u\mid S_{i-1})-2r_i\xierr.
\label{eq:base-comparison}
\end{equation}
\end{lemma}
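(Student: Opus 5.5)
The argument combines the maximality of $B_i$ under the surrogate weights with the common-base comparison of Lemma~\ref{lem:common-base}. By construction, $B_i$ is a maximum-weight base of $M/S_{i-1}$ under $\widehat w_i(u)=\widehat f(u\mid S_{i-1})$. By Lemma~\ref{lem:dummy}(a), applied with $Q=S_{i-1}$ and $|S_{i-1}|=i-1$, every base of this contraction has exactly $r_i=k-i+1$ elements. Hence both $B_i$ and an arbitrary base $B$ are subsets of $U\setminus S_{i-1}$ of the common cardinality $r_i$. This is exactly the hypothesis $|X|=|Y|=r$ of Lemma~\ref{lem:common-base}. Dummy elements cause no difficulty, since $f$ and $\widehat f$ are both invariant under adding dummies.

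The steps, in order, are as follows. First, maximality of $B_i$ gives
\[
\sum_{u\in B_i}\widehat f(u\mid S_{i-1})\ \ge\ \sum_{u\in B}\widehat f(u\mid S_{i-1}).
\]
This needs no property of $\widehat f$ beyond its definition as the weight function. In particular it does not use submodularity of the surrogate; the matroid greedy algorithm returns a maximum-weight base for arbitrary real weights. Second, apply Lemma~\ref{lem:common-base} with $S=S_{i-1}$, $X=B_i$, $Y=B$ and $r=r_i$. This shows that the true difference $\sum_{B_i}f(u\mid S_{i-1})-\sum_{B}f(u\mid S_{i-1})$ is at least the surrogate difference minus $2r_i\xierr$. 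Since the surrogate difference is nonnegative, \eqref{eq:base-comparison} follows.

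There is essentially no obstacle here. The one point to check is that the constant is $2r_i\xierr$ and not $4r_i\xierr$, as the naive bound \eqref{eq:oracle-marginal} would give. The saving comes from the cancellation of the $r_i\,h(S_{i-1})$ terms, and that cancellation requires the equal-cardinality fact from Lemma~\ref{lem:dummy}(a). Without the dummy augmentation, bases of the contraction would still have equal size by matroid theory, so this point is routine. I would also note that the bound holds deterministically, conditionally on the history $S_{i-1}$, which is how it will be used inside the RRG exchange coupling.
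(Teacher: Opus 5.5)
Your proposal is correct and is essentially the paper's proof: optimality of $B_i$ under the $\widehat f(\cdot\mid S_{i-1})$ weights, both bases having exactly $r_i$ elements by Lemma~\ref{lem:dummy}(a), and then Lemma~\ref{lem:common-base} with $X=B_i$, $Y=B$, $S=S_{i-1}$. Your side remark is also accurate: equal cardinality of bases already follows from matroid theory, so the dummy augmentation is not what makes this particular step work.
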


\begin{proof}
By optimality of $B_i$ under $\widehat f$,
$\sum_{u\in B_i}\widehat f(u\mid S_{i-1})\ge\sum_{u\in B}\widehat f(u\mid S_{i-1})$.
Both bases have exactly $r_i$ elements by Lemma~\ref{lem:dummy}(a), so
Lemma~\ref{lem:common-base} with $X=B_i$, $Y=B$, $S=S_{i-1}$ applies and yields
\eqref{eq:base-comparison}.
\end{proof}

We recall the standard RRG exchange coupling (Observation 4.2 in the full
version of \citealp{BFS2014}), using the basis-exchange bijection lemma
\citep[Corollary 39.12a]{Schrijver2003}. The sampling fact used
both here and in preprocessing is \citet[Lemma 2.3]{Kulik2026}: for a
nonnegative submodular function $h$, if $p,q\in[0,1]$ and a random set $R$
satisfies $\Prb(a\in R)\ge p$ for $a\in B$ and $\Prb(a\in R)\le q$ for
$a\notin B$, then
\begin{equation}
\E[h(R)]\ge(p-q)h(B).
\label{eq:rrg-sampling-lemma}
\end{equation}
No independence between element inclusions is required. This is a standard
consequence of the Lov\'asz-extension bound \citep{Lovasz1983}; we use the
result without reproducing its proof.

\begin{lemma}[Standard RRG exchange coupling]
\label{lem:rrg-exchange}
Consider the augmented matroid $M^+$ and let $O^+$ be an optimal base
containing an optimal solution of the original instance (together with
dummy elements). Assume $k\ge2$. The RRG coupling can be chosen so that, for
$0\le i\le\lceil k/2\rceil$,
\begin{equation}
\mathbb E[f^+(S_i\cup O_i)]
\ge
\frac{(k-i)(k-i-1)}{k(k-1)}\OPT ,
\label{eq:rrg-exchange}
\end{equation}
where $O_0=O^+$, $O_i\subseteq O_{i-1}$, $S_i\cap O_i=\varnothing$, and $S_i\cup O_i$ is a base of
$M^+$ almost surely.
\end{lemma}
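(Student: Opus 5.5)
We construct the coupling inductively, following Observation 4.2 of \citet{BFS2014}. We maintain the invariant that $S_{i-1}\cup O_{i-1}$ is a base of $M^+$ with $S_{i-1}\cap O_{i-1}=\varnothing$. Then $O_{i-1}$ is a base of the contraction $M^+/S_{i-1}$, and $|O_{i-1}|=k-i+1=r_i$. The residual base $B_i$ selected by Algorithm~\ref{alg:rrg} is also a base of $M^+/S_{i-1}$, and it has the same size $r_i$ by Lemma~\ref{lem:dummy}(a).

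By the basis-exchange bijection lemma \citep[Corollary 39.12a]{Schrijver2003}, there is a bijection $\pi:B_i\to O_{i-1}$ fixing $B_i\cap O_{i-1}$ such that $O_{i-1}-\pi(u)+u$ is a base of $M^+/S_{i-1}$ for every $u\in B_i$. When the algorithm draws $u_i$ uniformly from $B_i$, we set $O_i:=O_{i-1}\setminus\{\pi(u_i)\}$. Then $S_i\cup O_i=(S_{i-1}\cup O_{i-1})-\pi(u_i)+u_i$ is a base of $M^+$, $O_i\subseteq O_{i-1}$, and $S_i\cap O_i=\varnothing$. This holds both when $u_i\notin O_{i-1}$ and when $u_i=\pi(u_i)\in O_{i-1}$. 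The construction uses only matroid structure and the chosen $B_i$. It therefore holds for whatever base the controlled oracle selects, so the noisy oracle plays no role in this lemma.

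For the value bound, we use the marginal distribution of the elements of $O^+$ that survive. Since $\pi$ is a bijection and $u_i$ is uniform on $B_i$, conditionally on the history each element of $O_{i-1}$ is removed with probability exactly $1/r_i$. Hence every $o\in O^+$ survives to $O_i$ with probability $\prod_{j=1}^{i}(1-\tfrac{1}{k-j+1})=\tfrac{k-i}{k}$. Each element $u\notin O^+$ must be added to $S$ in some round, and it enters in round $j$ with probability at most $1/r_j$. A union bound over rounds $j\le i$ is too crude, so we track the complement instead. An element $u\notin O^+$ that is not yet in $S_{j-1}$ stays out with conditional probability at least $1-1/(k-j+1)$. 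Hence $\Prb(u\notin S_i)\ge\tfrac{k-i}{k}$, that is, $\Prb(u\in S_i)\le \tfrac{i}{k}$.

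Now apply \eqref{eq:rrg-sampling-lemma} to the nonnegative submodular function $h(X):=f^+(X\cup O^+)$? This does not directly give the product form. Instead we follow the BFS route and apply \eqref{eq:rrg-sampling-lemma} to $h(X)=f^+(X)$, which is nonnegative and submodular, with $B=O^+$ and $R=S_i\cup O_i$. Here $\Prb(a\in R)\ge \Prb(a\in O_i)=\tfrac{k-i}{k}$ for $a\in O^+$, and $\Prb(a\in R)=\Prb(a\in S_i)\le\tfrac{i}{k}$ for $a\notin O^+$. This route gives only $(k-2i)/k$. That factor is positive for $i<k/2$ but weaker than the claimed $(k-i)(k-i-1)/(k(k-1))$.

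This is the main obstacle. To reach the stated product form, I would use the sharper bound on $\Prb(u\in S_i)$ for $u\notin O^+$ together with a pairwise argument, estimating via $f(O_i)$ and the terms of the Lovász extension. Concretely, I would write $\E f^+(S_i\cup O_i)\ge\E f^+(O_i\cup S_i)$ and combine the Lovász extension with the threshold profile induced by the two probabilities: survival probability $\tfrac{k-i}{k}$ on $O^+$, and membership probability at most $1-\tfrac{k-i}{k}\cdot\tfrac{k-i-1}{k-1}$-type bounds for elements of $S$. Tracking, for $u\notin O^+$, the probability that $u$ avoids being drawn in a round where it lies in $B_j$ should give $\Prb(u\in S_i)\le 1-\tfrac{(k-i)(k-i-1)}{k(k-1)}-\bigl(\tfrac{k-i}{k}-\tfrac{(k-i)(k-i-1)}{k(k-1)}\bigr)$. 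I would then verify algebraically that $p-q$ dominates the stated ratio for $i\le\lceil k/2\rceil$, using $k\ge2$ so that the denominator $k(k-1)$ is positive.

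If the algebra fails, I would fall back on the true inequality $p-q\ge (k-i)(k-i-1)/(k(k-1))$ for $p=(k-i)/k$ and $q=i/k$. This check is elementary: $(k-2i)k(k-1)\ge(k-i)(k-i-1)k$ is false in general, so the sharper probability tracking is genuinely needed.
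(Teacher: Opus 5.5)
Your coupling construction is correct: the bijection $\pi:B_i\to O_{i-1}$ and the update $O_i=O_{i-1}\setminus\{\pi(u_i)\}$ give all the structural properties, and they do not depend on how $B_i$ was selected. The gap is the value bound \eqref{eq:rrg-exchange}, which your proposal never proves.

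A single global application of \eqref{eq:rrg-sampling-lemma} with $B=O^+$ and the unconditional marginals $p=(k-i)/k$, $q=i/k$ yields only $(k-2i)/k$. This falls short of the target by exactly $i(i-1)/(k(k-1))$. That one-shot bound depends only on $p$ and $q$, and neither can be improved in general. An element of $O^+$ that never lies in any $B_j$ survives with probability exactly $(k-i)/k$. An element outside $O^+$ that lies in every $B_j$ until drawn is in $S_i$ with probability exactly $i/k$. Your proposed sharper estimate is in fact the same one: with $a=\frac{(k-i)(k-i-1)}{k(k-1)}$ and $b=\frac{k-i}{k}$, the expression $1-a-(b-a)$ equals $1-b=i/k$. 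The fallback is, as you observe, false.

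The missing idea is to apply \eqref{eq:rrg-sampling-lemma} conditionally at each step, with the \emph{current} base as the reference set, and then multiply. Given $\mathcal F_{i-1}$, put $X:=S_{i-1}\cup O_{i-1}$, so that $S_i\cup O_i=X-\pi(u_i)+u_i$. Then:
\begin{enumerate}
\item elements of $S_{i-1}$ are never removed;
\item an element $o\in O_{i-1}$ leaves $X$ only if $o=\pi(u_i)\neq u_i$, which has probability at most $1/r_i$;
\item an element outside $X$ enters only as $u_i$, again with probability at most $1/r_i$.
\end{enumerate}
Taking $h=f^+$, $B=X$, $p=1-1/r_i$ and $q=1/r_i$ gives the first inequality below. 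It is exactly the inequality \eqref{eq:contracted-instance} used in Lemma~\ref{lem:drift}:
\[
\E\bigl[f^+(S_i\cup O_i)\mid\mathcal F_{i-1}\bigr]\ge\Bigl(1-\frac{2}{r_i}\Bigr)f^+(S_{i-1}\cup O_{i-1}),
\qquad
\E\bigl[f^+(S_i\cup O_i)\bigr]\ge\prod_{j=1}^{i}\frac{k-j-1}{k-j+1}\,f^+(O^+)=\frac{(k-i)(k-i-1)}{k(k-1)}\OPT .
\]
For $k\ge2$ and $j\le i\le\lceil k/2\rceil$ we have $r_j=k-j+1\ge\lfloor k/2\rfloor+1\ge2$. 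So every factor $1-2/r_j$ is nonnegative, and the conditional bounds can be chained in expectation. The product then telescopes to the second display, using $f^+(O^+)=\OPT$.

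The stepwise argument beats the global one because an element inserted at an early step joins the reference base and is never removed afterwards. The unconditional marginals cannot see this correlation.
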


The cited exchange argument depends only on uniform sampling from a
residual base, not on optimality of that base under the true marginals.
Consequently the coupling is unchanged when the base is selected using
$\widehat f$. The controlled-oracle loss enters only through
Lemma~\ref{lem:base}, which is the additional ingredient in the next proof.

\begin{lemma}[Robust RRG]
\label{lem:rrg}
Assume $k\ge2$. For $1\le i\le\lceil k/2\rceil$,
\begin{equation}
\mathbb E[f^+(S_i)]
\ge
\frac{i(k-i)}{k(k-1)}\OPT-2i\xierr.
\label{eq:rrg-guarantee}
\end{equation}
In particular, for $r=\lceil k/2\rceil$ and $G:=S_r\setminus D\in\I$,
\begin{equation}
\mathbb E[f(G)]
\ge\frac14\OPT-2k\xierr.
\label{eq:rrg-half}
\end{equation}
\end{lemma}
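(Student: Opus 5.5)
The plan is to follow the standard RRG telescoping argument. The per-step gain is lower-bounded using the exchange coupling of Lemma~\ref{lem:rrg-exchange}, with the controlled-oracle loss paid once per step through Lemma~\ref{lem:base}.

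\emph{Per-step gain.} Fix iteration $i$ and condition on the history, so that $S_{i-1}$ and $O_{i-1}$ are determined. By Lemma~\ref{lem:rrg-exchange}, $O_{i-1}$ is a base of the residual matroid $M^+/S_{i-1}$. Since $u_i$ is uniform on $B_i$, which has $r_i=k-i+1$ elements by Lemma~\ref{lem:dummy}(a),
\[
\E[f^+(S_i)-f^+(S_{i-1})\mid\mathcal F_{i-1}]=\frac1{r_i}\sum_{u\in B_i}f^+(u\mid S_{i-1}).
\]
Applying Lemma~\ref{lem:base} with $B=O_{i-1}$ gives the lower bound
\[
\frac1{r_i}\sum_{u\in O_{i-1}}f^+(u\mid S_{i-1})-2\xierr .
\]
Submodularity of the true $f^+$, with $S_{i-1}\cap O_{i-1}=\varnothing$, then gives
\[
\sum_{u\in O_{i-1}}f^+(u\mid S_{i-1})\ge f^+(S_{i-1}\cup O_{i-1})-f^+(S_{i-1}),
\]
and nonnegativity lets us drop $-f^+(S_{i-1})$ at the price of a weaker bound. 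This is where the argument evaluates the true objective along the noisy trajectory: only the base choice uses $\widehat f$, and the coupling is indifferent to it.

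\emph{Recursion.} Taking expectations and using \eqref{eq:rrg-exchange} at index $i-1$,
\[
\E[f^+(S_i)]\ge\Bigl(1-\tfrac1{k-i+1}\Bigr)\E[f^+(S_{i-1})]+\frac1{k-i+1}\cdot\frac{(k-i+1)(k-i)}{k(k-1)}\OPT-2\xierr .
\]
I keep the factor $(1-1/r_i)$ rather than dropping $f^+(S_{i-1})$ entirely, because the target $i(k-i)/(k(k-1))$ requires it. The gain term simplifies to $\frac{k-i}{k(k-1)}\OPT$. I then prove \eqref{eq:rrg-guarantee} by induction on $i$, with base case $S_0=\varnothing$ and $f^+\ge0$. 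Assuming the bound at $i-1$, the $\OPT$ coefficient is
\[
\frac{k-i}{k-i+1}\cdot\frac{(i-1)(k-i+1)}{k(k-1)}+\frac{k-i}{k(k-1)}=\frac{(i-1)(k-i)+(k-i)}{k(k-1)}=\frac{i(k-i)}{k(k-1)},
\]
as required. The error term is $-\frac{k-i}{k-i+1}\cdot2(i-1)\xierr-2\xierr\ge-2i\xierr$. Because $\OPT\ge0$ and the inductive lower bound may be negative, the multiplication by $(1-1/r_i)\in[0,1]$ must be handled carefully. The fact needed is that a lower bound $a$ on $\E[f^+(S_{i-1})]$ implies $(1-1/r_i)\E[f^+(S_{i-1})]\ge(1-1/r_i)a$, which holds for any real $a$ since the factor is nonnegative. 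So no issue arises, and the error is in fact slightly better than $2i\xierr$.

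\emph{Specialization.} For $r=\lceil k/2\rceil$ I need $\frac{r(k-r)}{k(k-1)}\ge\frac14$. For even $k$ the left side is $\frac{k^2/4}{k(k-1)}\ge\frac14$. For odd $k$ it is $\frac{(k+1)(k-1)/4}{k(k-1)}=\frac{k+1}{4k}\ge\frac14$. For the error, $2r\xierr\le2k\xierr$ since $r\le k$ when $k\ge2$. Finally $f(G)=f^+(S_r)$ because $f^+$ ignores dummies, and $G\in\I$ by the definition of $\mathcal I^+$.

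The main obstacle is the per-step comparison. One must check that $O_{i-1}$ is a legitimate competitor base in $M^+/S_{i-1}$, which comes from the coupling invariant that $S_{i-1}\cup O_{i-1}$ is a base. One must also check that the coupling's distributional guarantee \eqref{eq:rrg-exchange} still holds when $B_i$ is chosen by $\widehat f$. As noted before the lemma, the exchange argument uses only uniform sampling from some base of the residual matroid, so I will take this as given from Lemma~\ref{lem:rrg-exchange}.
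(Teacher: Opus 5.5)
Your proposal is correct and follows essentially the same route as the paper. You apply Lemma~\ref{lem:base} with $B=O_{i-1}$, then submodularity, then the exchange bound at $i-1$, which yields the recursion $\E[f^+(S_i)]\ge\frac{k-i}{k-i+1}\E[f^+(S_{i-1})]+\frac{k-i}{k(k-1)}\OPT-2\xierr$, and you finish with the induction and the $\lceil k/2\rceil$ specialization. One aside is misstated: you cannot drop $-f^+(S_{i-1})$ from the lower bound $f^+(S_{i-1}\cup O_{i-1})-f^+(S_{i-1})$ by nonnegativity, since that would strengthen the bound rather than weaken it. This is harmless because you never use it and instead keep the $(1-1/r_i)$ factor.
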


\begin{proof}
The recurrence follows the proof of Theorem 1.5 in the full version of
\citet{BFS2014}. The point to
check is that the controlled comparison loses only $2r_i\xierr$ before
averaging over the residual base, so the accumulated error is $O(k\xierr)$.
Let $r_i=k-i+1$.  Conditional on the history through iteration $i-1$,
Lemma~\ref{lem:base} with $B=O_{i-1}$ and submodularity give
\[
\begin{aligned}
\mathbb E[f^+(S_i)-f^+(S_{i-1})\mid\mathcal F_{i-1}]
&=
\frac1{r_i}\sum_{u\in B_i}f^+(u\mid S_{i-1})\\
&\ge
\frac{f^+(S_{i-1}\cup O_{i-1})-f^+(S_{i-1})}{r_i}
-2\xierr .
\end{aligned}
\]
Taking expectations and applying Lemma~\ref{lem:rrg-exchange} at $i-1$,
\[
\mathbb E[f^+(S_i)]
\ge
\frac{k-i}{k-i+1}\mathbb E[f^+(S_{i-1})]
+
\frac{k-i}{k(k-1)}\OPT
-2\xierr.
\]
An induction on $i$, starting from $S_0=\varnothing$, gives
\eqref{eq:rrg-guarantee}.  Finally, $\frac{r(k-r)}{k(k-1)}\ge\frac14$
for $r=\lceil k/2\rceil$ and $k\ge2$, $r\le k$, and $f^+(S_r)=f(S_r\setminus D)$.
\end{proof}

For $k=1$, querying $\varnothing$ and all feasible singletons gives a stronger
$\OPT-2\xierr$ estimate, so the remainder assumes $k\ge2$.

\subsection{High-Probability Optimum Certificate}

The expectation guarantee in Lemma~\ref{lem:rrg} is not enough because
Advanced Preprocessing uses a \emph{realized} threshold. Fix $\rho\in(0,1)$
and amplify by running Algorithm~\ref{alg:rrg} independently
$
R=\left\lceil 14\ln(1/\rho)\right\rceil
$
times, and let $G^\star$ be the returned set with the largest $\widehat f$ value.
Define
\begin{equation}
\widehat V
:=
\max\left\{
32k\xierr,\;
8\widehat f(G^\star)+16\xierr
\right\}.
\label{eq:certificate-estimator}
\end{equation}

\begin{lemma}[Optimum certificate]
\label{lem:certificate}
Deterministically, $\widehat V\le8\OPT+32k\xierr$. Moreover
$\Prb(\widehat V\ge\OPT)\ge1-\rho$, so that
\begin{equation}
\OPT\le\widehat V\le8\OPT+32k\xierr
\label{eq:certificate-bound}
\end{equation}
with probability at least $1-\rho$.
\end{lemma}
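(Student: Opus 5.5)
The proof splits into a deterministic upper bound and a probabilistic lower bound.

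\emph{Upper bound.} The set $G^\star$ is an output of Algorithm~\ref{alg:rrg} with dummies removed, so it lies in $\I$. Hence $f(G^\star)\le\OPT$, and \eqref{eq:controlled-oracle} gives $\widehat f(G^\star)\le\OPT+\xierr$. Then
\[
8\widehat f(G^\star)+16\xierr\le 8\OPT+24\xierr .
\]
Since $k\ge1$, this is at most $8\OPT+32k\xierr$. The other term in the maximum is $32k\xierr$, which also satisfies the bound, so $\widehat V\le8\OPT+32k\xierr$ deterministically.

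\emph{Lower bound.} It suffices to show that, with probability at least $1-\rho$, either $32k\xierr\ge\OPT$ or $8\widehat f(G^\star)+16\xierr\ge\OPT$. First dispose of the easy regime: if $\OPT\le 32k\xierr$, the first term already certifies $\widehat V\ge\OPT$. So assume $\OPT>32k\xierr$, i.e.\ $k\xierr<\OPT/32$.

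Next, get a per-run constant-probability bound. For one run, Lemma~\ref{lem:rrg} gives
\[
\E[f(G)]\ge\tfrac14\OPT-2k\xierr>\tfrac14\OPT-\tfrac1{16}\OPT=\tfrac3{16}\OPT .
\]
Since $0\le f(G)\le\OPT$, a reverse Markov (Paley--Zygmund type) argument gives, for a threshold $\theta\OPT$,
\[
\Prb\bigl(f(G)\ge\theta\OPT\bigr)\ge\frac{3/16-\theta}{1-\theta}.
\]
With $\theta=1/8$ this is at least $(1/16)/(7/8)=1/14$. These constants fit $R=\lceil14\ln(1/\rho)\rceil$ exactly.

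Now amplify. The $R$ runs are independent, so the probability that no run has $f(G_j)\ge\OPT/8$ is at most
\[
(1-1/14)^R\le e^{-R/14}\le\rho .
\]

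Finally, transfer to the selected set. On the complementary event, some run $j$ has $f(G_j)\ge\OPT/8$. Because $G^\star$ maximizes $\widehat f$ among the runs,
\[
\widehat f(G^\star)\ge\widehat f(G_j)\ge f(G_j)-\xierr\ge\OPT/8-\xierr .
\]
Therefore
\[
8\widehat f(G^\star)+16\xierr\ge\OPT-8\xierr+16\xierr\ge\OPT ,
\]
so $\widehat V\ge\OPT$. This yields \eqref{eq:certificate-bound} with probability at least $1-\rho$.

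The main obstacle is the reverse-Markov step. One must check that the expectation bound, after the $k\xierr$ loss is absorbed using the case split, still leaves a probability of at least $1/14$, matching the constant in $R$. The oracle error on the selected set costs only $\xierr$, which the $+16\xierr$ slack absorbs.
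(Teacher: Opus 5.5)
Your proposal is correct and follows essentially the same route as the paper's proof. The paper uses the same deterministic upper bound via $\widehat f(G^\star)\le\OPT+\xierr$ and the same case split at $\OPT\le32k\xierr$. It then applies the same reverse-Markov step at threshold $\OPT/8$ to get per-run probability $1/14$ (your $\theta$-parametrized form is just this inequality written generally). Finally it amplifies over $R=\lceil14\ln(1/\rho)\rceil$ independent runs and transfers the bound to $G^\star$ through $\widehat f(G^\star)\ge\widehat f(G_j)\ge f(G_j)-\xierr$.
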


\begin{proof}
\emph{Upper bound.} Since $G^\star\in\I$ we have
$\widehat f(G^\star)\le f(G^\star)+\xierr\le\OPT+\xierr$, so
$8\widehat f(G^\star)+16\xierr\le8\OPT+24\xierr$. As $k\ge1$,
$\max\{32k\xierr,\,8\OPT+24\xierr\}\le8\OPT+32k\xierr$.

\emph{Lower bound.} If $\OPT\le32k\xierr$ the first term of
\eqref{eq:certificate-estimator} already gives $\widehat V\ge\OPT$
deterministically; this covers $\OPT=0$. So assume $\OPT>32k\xierr$. Then
$2k\xierr\le\OPT/16$ and \eqref{eq:rrg-half} gives
$\E[f(G)]\ge\frac14\OPT-\frac1{16}\OPT=\frac3{16}\OPT$ for a single run.
Let $p:=\Prb(f(G)\ge\OPT/8)$. Since $0\le f(G)\le\OPT$,
$\frac3{16}\OPT\le p\OPT+(1-p)\frac18\OPT$, i.e.\ $p\ge\frac1{14}$.
Hence with probability at least $1-(1-\frac1{14})^{R}\ge1-e^{-R/14}\ge1-\rho$
some run $j$ has $f(G^{(j)})\ge\OPT/8$, and then
$\widehat f(G^\star)\ge\widehat f(G^{(j)})\ge f(G^{(j)})-\xierr\ge\OPT/8-\xierr$,
so $8\widehat f(G^\star)+16\xierr\ge\OPT+8\xierr\ge\OPT$.
\end{proof}

\subsection{Robust Adaptive Preprocessing}
\begin{lemma}[Marginal-mass perturbation]
\label{lem:mar}
For every $Q\in\I$, $|\Mar_{\widehat f}(Q)-\Mar_f(Q)|\le2k\xierr$.
\end{lemma}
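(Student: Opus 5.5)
The plan is to compare the two maxima term by term through a common feasible set. Write
\[
\Mar_h(Q)=\max_{T\in\mathcal T(Q)}\sum_{j\in T}h(j\mid Q),
\qquad
\mathcal T(Q):=\{T\subseteq U\setminus Q:\ T\cup Q\in\I\},
\]
for $h\in\{f,\widehat f\}$. Both maxima range over the same finite family $\mathcal T(Q)$, which contains $\varnothing$. So it suffices to show that for every fixed $T\in\mathcal T(Q)$,
\[
\Bigl|\sum_{j\in T}\widehat f(j\mid Q)-\sum_{j\in T}f(j\mid Q)\Bigr|\le 2k\xierr .
\]
The standard fact $|\max_T a_T-\max_T b_T|\le\max_T|a_T-b_T|$ then gives the lemma.

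For a fixed $T$, every $j\in T$ satisfies $|\widehat f(j\mid Q)-f(j\mid Q)|\le2\xierr$ by \eqref{eq:oracle-marginal}, so the discrepancy is at most $2|T|\xierr$. It remains to check $|T|\le k$. Here I must note which instance is meant. On the original instance, $T\cup Q\in\I$ with $T\cap Q=\varnothing$ gives $|T|\le k-|Q|\le k$. On the augmented instance, $\I^+$ caps cardinality at $k$, so the same bound holds.

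I also want to confirm that no hidden issue arises from $\widehat f$ being non-submodular or non-monotone. Lemma~\ref{lem:dummy}(b) lets one replace either maximum by a maximum-weight residual base. This is not needed for the present statement, because the definition \eqref{eq:residual-mass} is already a maximum over feasible $T$ for both functions.

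The only mildly delicate point is this bookkeeping: making sure the feasible family is identical for $f$ and $\widehat f$, and that the cardinality bound uses the rank $k$ of whichever instance is being processed. Both checks are immediate. A sharper bound of $2(k-|Q|)\xierr$ also follows, but it is not needed.
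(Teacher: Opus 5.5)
Your proposal is correct and follows essentially the same argument as the paper: bound the discrepancy of each feasible $T$'s marginal sum by $2|T|\xierr\le2k\xierr$ via \eqref{eq:oracle-marginal}, then pass to the maxima over the common feasible family in both directions. Your extra check that $|T|\le k$ holds on both the original and the augmented instance is a harmless but welcome piece of bookkeeping that the paper leaves implicit.
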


\begin{proof}
Every feasible $T$ in \eqref{eq:residual-mass} has at most $k$ elements, and the
difference between its estimated and true marginal sum is at most $2k\xierr$ by
\eqref{eq:oracle-marginal}. Taking maxima in both directions proves the result.
\end{proof}

Starting from $Q_0=\varnothing$, define the stopping time
\begin{equation}
\tau:=\inf\{t:\Mar_{\widehat f}(Q_t)\le20\widehat V\}.
\label{eq:stopping-time}
\end{equation}
For $t<\tau$, choose a maximum-weight residual base $Z_t$ of $M/Q_t$ under the
estimated marginals $\widehat f(\cdot\mid Q_t)$ and draw $j_t$ uniformly from
$Z_t$; set $Q_{t+1}=Q_t+j_t$. By Lemma~\ref{lem:dummy}(a) the residual matroid
always has a base, so $\tau\le k$; and if $t=k$ then the residual rank is $0$
and $\Mar_{\widehat f}(Q_k)=0\le20\widehat V$, so the process indeed stops.
We write $\bar S:=Q_\tau$.

\begin{lemma}[Residual marginal bound]
\label{lem:residual}
Deterministically,
\begin{equation}
\Mar_f(\bar S)
\le
160\OPT+642\,k\xierr .
\label{eq:residual-bound}
\end{equation}
\end{lemma}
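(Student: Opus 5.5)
The bound follows by chaining three facts already established: the stopping rule \eqref{eq:stopping-time}, the deterministic half of Lemma~\ref{lem:certificate}, and Lemma~\ref{lem:mar}. No probabilistic argument is needed. This is why the statement holds deterministically, on every realization of the RRG runs and of the preprocessing trajectory, and not only on the certificate event.

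First, $\bar S=Q_\tau$ is well defined. As noted after \eqref{eq:stopping-time}, Lemma~\ref{lem:dummy}(a) guarantees that a residual base exists at every step. Moreover $\Mar_{\widehat f}(Q_k)=0\le20\widehat V$, because $\widehat V\ge32k\xierr\ge0$ by \eqref{eq:certificate-estimator}. Hence $\tau\le k$ always, and $\bar S\in\I$ (in the augmented matroid, with $|\bar S|\le k$). By the definition of $\tau$ we have $\Mar_{\widehat f}(\bar S)\le20\widehat V$ pointwise.

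Second, we apply the deterministic upper bound $\widehat V\le8\OPT+32k\xierr$ from Lemma~\ref{lem:certificate}. This uses only $G^\star\in\I$ and the oracle control, not the event $\widehat V\ge\OPT$. It gives
\[
\Mar_{\widehat f}(\bar S)\le 160\OPT+640k\xierr .
\]

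Third, we transfer this bound from $\widehat f$ to $f$ with Lemma~\ref{lem:mar} at $Q=\bar S$. That lemma applies in the augmented instance: it has rank $k$, so every feasible $T$ in \eqref{eq:residual-mass} has at most $k$ elements. The marginal error bound \eqref{eq:oracle-marginal} also holds there, since $\widehat f^+$ is $\xierr$-controlled and has been projected to $[0,1]$. Lemma~\ref{lem:mar} adds at most $2k\xierr$, which yields $\Mar_f(\bar S)\le160\OPT+642k\xierr$, i.e.\ \eqref{eq:residual-bound}.

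There is no genuine obstacle. The only points to check are that nothing in the chain depends on the high-probability certificate event, and that Lemma~\ref{lem:mar} is valid for the possibly non-submodular surrogate. The latter holds because its proof compares the two objectives set-by-set over the same feasible family and never uses submodularity of $\widehat f$.
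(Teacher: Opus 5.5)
Your proof is correct and is the paper's argument: the stopping rule gives $\Mar_{\widehat f}(\bar S)\le 20\widehat V$, the deterministic half of Lemma~\ref{lem:certificate} bounds this by $160\OPT+640k\xierr$, and Lemma~\ref{lem:mar} adds the final $2k\xierr$. Your extra checks, that $\tau\le k$ because $\widehat V\ge 0$ and that Lemma~\ref{lem:mar} never uses submodularity of $\widehat f$, are valid and make explicit what the paper leaves implicit.
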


\begin{proof}
At stopping, $\Mar_{\widehat f}(\bar S)\le20\widehat V$. By Lemma~\ref{lem:mar}
and the deterministic upper bound in Lemma~\ref{lem:certificate},
$\Mar_f(\bar S)\le20(8\OPT+32k\xierr)+2k\xierr$.
\end{proof}

The crucial result is that the preprocessing certificate itself is resilient.
Let $O_0=O$ be an optimal base of the augmented matroid. By basis exchange,
construct $O_t\subseteq O$ so that $Q_t\cap O_t=\varnothing$ and
$Q_t\cup O_t$ is a base at every time. Let $\mathcal F_t$ contain the
initial RRG randomness and all preprocessing choices before step $t$. Define
the exchange potential
\begin{equation}
\Phi_t:=f(Q_t\cup O_t)+\tfrac12f(Q_t).
\label{eq:potential}
\end{equation}

\begin{lemma}[Resilient adaptive-preprocessing drift]
\label{lem:drift}
On the event $\{\widehat V\ge\OPT\}$, for every $t<\tau$,
\begin{equation}
\mathbb E[\Phi_{t+1}-\Phi_t\mid\mathcal F_t]
\ge
\frac{8\OPT-k\xierr}{k-t}.
\label{eq:drift}
\end{equation}
\end{lemma}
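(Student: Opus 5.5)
The plan is a one-step exchange argument at time $t<\tau$. It has three parts: lower-bound the change of the first term of $\Phi$ by $-2\OPT/r$, lower-bound the $\tfrac12 f(Q_t)$ term by the stopping rule, and let the oracle error enter only once, through Lemma~\ref{lem:common-base}.

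Fix $\mathcal F_t$ and put $r:=k-t$. Since $Q_t\cup O_t$ is a base of the augmented matroid, $O_t$ is a base of $M/Q_t$ with $r$ elements, and $Z_t$ also has $r$ elements by Lemma~\ref{lem:dummy}(a). I would take a basis-exchange bijection $h:Z_t\to O_t$ fixing $Z_t\cap O_t$, with $O_t-h(j)+j$ a base of $M/Q_t$ for every $j$ \citep[Corollary 39.12a]{Schrijver2003}. Then set $O_{t+1}:=O_t-h(j_t)$. This keeps the invariants: $Q_{t+1}\cap O_{t+1}=\varnothing$ and $Q_{t+1}\cup O_{t+1}$ is a base. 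Writing $P:=Q_t\cup O_t$, the increment is
\[
\Phi_{t+1}-\Phi_t=\bigl[f(P+j_t-h(j_t))-f(P)\bigr]+\tfrac12 f(j_t\mid Q_t).
\]

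\emph{First term.} If $j\in Z_t\cap O_t$, then $h(j)=j$ and the bracket is $0$. Otherwise the bracket equals $f(j\mid P-h(j))-f(h(j)\mid P-h(j))$, and submodularity bounds it below by $f(j\mid P)-f(h(j)\mid P-h(j))$. Now average over $j$ uniform on $Z_t$.
\begin{itemize}
\item Gains: submodularity and nonnegativity give $\sum_{j\in Z_t\setminus O_t}f(j\mid P)\ge f(P\cup Z_t)-f(P)\ge -f(P)$.
\item Losses: the standard telescoping bound over $O_t$ gives $\sum_{o\in O_t\setminus Z_t}f(o\mid P-o)\le f(P)-f(Q_t)$.
\end{itemize}
Hence the expected first bracket is at least $-(2f(P)-f(Q_t))/r\ge-2\OPT/r$. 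Here $f(P)\le\OPT$ because $P$ is a feasible base (the augmentation preserves $\OPT$), and $f(Q_t)\ge0$.

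\emph{Second term.} $\E[\tfrac12 f(j_t\mid Q_t)\mid\mathcal F_t]=\frac1{2r}\sum_{j\in Z_t}f(j\mid Q_t)$. Applying Lemma~\ref{lem:dummy}(b) to $h=\widehat f$, which ignores dummies, gives $\sum_{j\in Z_t}\widehat f(j\mid Q_t)=\Mar_{\widehat f}(Q_t)$. Since $t<\tau$, this exceeds $20\widehat V\ge20\OPT$ on the event $\{\widehat V\ge\OPT\}$. Summing the $r$ marginals at the common base point $Q_t$ and cancelling the $r\,h(Q_t)$ terms as in Lemma~\ref{lem:common-base} (equivalently, summing \eqref{eq:oracle-marginal} with the cancellation) gives $\sum_{Z_t}f(j\mid Q_t)\ge\sum_{Z_t}\widehat f(j\mid Q_t)-2r\xierr$. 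So this term is at least $(10\OPT-r\xierr)/r$.

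Adding the two parts gives $(8\OPT-r\xierr)/r$, and $r\le k$ yields \eqref{eq:drift}.

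I expect the first term to be the main obstacle. For non-monotone $f$, the marginals $f(j\mid P)$ may be negative, and elements of $Z_t\cap O_t$ need the separate fixed-point treatment. I would check carefully that nonnegativity of $f$ alone absorbs the gain sum, and that the error enters only via the common-base sum and not per decision. The point is that $Z_t$ is arbitrary from the true objective's perspective; only its uniform sampling and the exchange bijection are used.
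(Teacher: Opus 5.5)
Your proof is correct and follows the paper's skeleton. It uses the same bijection $h_t$ and update $O_{t+1}=O_t\setminus\{h_t(j_t)\}$. It gets the same lower bound $(10\OPT-r\xierr)/r$ on the $\tfrac12 f(Q_t)$ increment. It also finds the same $-2\OPT/r$ deficit for the first term of $\Phi$.

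For the $2r\xierr$ term, the paper simply sums \eqref{eq:oracle-marginal}. When you compare one sum under $\widehat f$ with the same sum under $f$, no $r\,h(Q_t)$ terms actually cancel. The $2r\xierr$ bound is nevertheless right.

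The genuine difference is how you obtain the $-2\OPT/r$ deficit. The paper notes that each element of $Q_t\cup O_t$ leaves with probability at most $1/r$ and each outside element enters with probability at most $1/r$. It then applies the imported sampling lemma \eqref{eq:rrg-sampling-lemma}, a Lov\'asz-extension fact, to get $\E[f(Q_{t+1}\cup O_{t+1})\mid\mathcal F_t]\ge(1-2/r)f(Q_t\cup O_t)$ in one line. That needs a separate nonnegativity remark for $r=1$. You instead do the exchange accounting by hand, using only diminishing returns and nonnegativity. Your version is self-contained and handles $r=1$ automatically. The paper's version is shorter and depends only on inclusion and removal probabilities, not on the pairing structure.

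One intermediate step needs repair. The telescoping bound $\sum_{o\in O_t}f(o\mid P-o)\le f(P)-f(Q_t)$ holds over all of $O_t$, but it does not pass to the subset $O_t\setminus Z_t$ when $f$ is non-monotone. The omitted terms with $o\in O_t\cap Z_t$ can be negative. For instance, take $f(S)=1-|S\cap\{a\}|$, $Q_t=\varnothing$, $O_t=\{a,b\}$, and $a\in Z_t\not\ni b$. Then $f(b\mid\{a\})=0>-1=f(P)-f(Q_t)$. The fix is to telescope only over $O_t\setminus Z_t$, starting from $P':=Q_t\cup(O_t\cap Z_t)$. This gives $\sum_{o\in O_t\setminus Z_t}f(o\mid P-o)\le f(P)-f(P')\le f(P)$. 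Since you discard $f(Q_t)$ immediately anyway, the first-term bound $-2f(P)/r\ge-2\OPT/r$ and hence \eqref{eq:drift} are unaffected.
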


\begin{proof}
We adapt the exchange-potential argument of \citet[Appendix C]{Kulik2026}.
The additional step is to justify the drift under a base and a stopping
decision chosen using $\widehat f$, while every submodularity inequality
below is applied to $f$.
Fix $t<\tau$ and write $r_t=k-t$.  Conditional on $\mathcal F_t$, the
current set $Q_t$, the estimated marginal weights, and the selected maximum
weight base $Z_t$ are fixed.  Because the stopping condition has not fired,
and by Lemma~\ref{lem:dummy}(b),
\[
\sum_{j\in Z_t}\widehat f(j\mid Q_t)
=\Mar_{\widehat f}(Q_t)
>
20\widehat V
\ge20\OPT .
\]
The base $Z_t$ has $r_t$ elements, so by \eqref{eq:oracle-marginal},
$\sum_{j\in Z_t}f(j\mid Q_t)\ge20\OPT-2r_t\xierr$.
Since $j_t$ is uniform in $Z_t$,
\begin{equation}
\mathbb E[f(Q_{t+1})-f(Q_t)\mid\mathcal F_t]
\ge
\frac{20\OPT-2r_t\xierr}{r_t}.
\label{eq:preprocess-certificate}
\end{equation}

We next make the exchange argument explicit.  Let $O_t$ be the current
exchange set, so $Q_t\cup O_t$ is a base.  By the basis-exchange bijection
lemma there is a bijection $h_t:Z_t\to O_t$ such that
$Q_t\cup O_t-h_t(j)+j\in\mathcal I$ for all $j\in Z_t$, with $h_t(j)=j$ on
$Z_t\cap O_t$.  After drawing $j_t$ uniformly from $Z_t$, set
$O_{t+1}:=O_t\setminus\{h_t(j_t)\}$; then $Q_{t+1}\cup O_{t+1}$ is a base.
Every element of $Q_t\cup O_t$ is removed from the next base with probability at
most $1/r_t$, while every element outside $Q_t\cup O_t$ is inserted with
probability at most $1/r_t$.  Applying \eqref{eq:rrg-sampling-lemma} with
$p=1-1/r_t$ and $q=1/r_t$ gives
\begin{equation}
\mathbb E[
f(Q_{t+1}\cup O_{t+1})\mid\mathcal F_t]
\ge
\left(1-\frac{2}{r_t}\right)f(Q_t\cup O_t).
\label{eq:contracted-instance}
\end{equation}
For $r_t=1$ the right-hand side is nonpositive, so the same inequality
holds trivially by nonnegativity.
Combining \eqref{eq:preprocess-certificate} and \eqref{eq:contracted-instance},
\[
\mathbb E[\Phi_{t+1}-\Phi_t\mid\mathcal F_t]
\ge
\frac{10\OPT-r_t\xierr-2f(Q_t\cup O_t)}{r_t}
\ge
\frac{8\OPT-r_t\xierr}{r_t}
\ge
\frac{8\OPT-k\xierr}{k-t},
\]
where we used $f(Q_t\cup O_t)\le\OPT$ by feasibility.
\end{proof}

\begin{theorem}[Resilient preprocessing certificate]
\label{thm:preprocess}
Let $\mathcal E$ be any event measurable with respect to the initial
RRG randomness with $\Prb(\mathcal E)>0$ such that
$\mathcal E\subseteq\{\widehat V\ge\OPT\}$.  Then the
output $\bar S$ of Advanced Preprocessing satisfies
\begin{equation}
\mathbb E\left[
\max_{T:\,T\cup\bar S\in\I}
f(T\cup\bar S)
+\tfrac12f(\bar S)
\;\middle|\;\mathcal E
\right]
\ge
\OPT-\tfrac18k\xierr .
\label{eq:preprocess-contracted-certificate}
\end{equation}
\end{theorem}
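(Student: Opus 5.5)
The plan is to telescope the exchange potential $\Phi_t$ of \eqref{eq:potential} from $t=0$ up to the stopping time $\tau$, using the drift bound of Lemma~\ref{lem:drift}. First I would split on the sign of $8\OPT-k\xierr$.

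If $k\xierr>8\OPT$, then $\OPT-\tfrac18k\xierr<0$. The left-hand side of \eqref{eq:preprocess-contracted-certificate} is nonnegative because $f\ge0$, so the claim holds trivially. Hence I would assume $8\OPT\ge k\xierr$. In that case the right-hand side of \eqref{eq:drift} is nonnegative for every $t<\tau$, and I will aim for the stronger bound $\OPT$.

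The two endpoints of the telescope are handled as follows.
\begin{itemize}[leftmargin=2em]
\item At time $0$: $Q_0=\varnothing$ and $O_0=O$ is an optimal base of the augmented matroid, which has value $\OPT$ by Lemma~\ref{lem:dummy}(c). Since $f(\varnothing)\ge0$, we get $\Phi_0=f(O)+\tfrac12f(\varnothing)\ge\OPT$ deterministically.
\item At time $\tau$: $Q_\tau\cup O_\tau$ is a base, and $O_\tau$ is disjoint from $Q_\tau=\bar S$. So $T=O_\tau$ is admissible in the inner maximum, giving
\[
\Phi_\tau\le\max_{T:\,T\cup\bar S\in\I}f(T\cup\bar S)+\tfrac12f(\bar S)
\]
pointwise.
\end{itemize}
It therefore suffices to show $\E[\Phi_\tau\mid\mathcal E]\ge\E[\Phi_0\mid\mathcal E]$.

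For this I would write
\[
\Phi_\tau-\Phi_0=\sum_{t=0}^{k-1}\one_{\{t<\tau\}}(\Phi_{t+1}-\Phi_t).
\]
This uses $\tau\le k$ deterministically, by Lemma~\ref{lem:dummy}(a). Three measurability facts are needed:
\begin{itemize}[leftmargin=2em]
\item The event $\{t<\tau\}$ is $\mathcal F_t$-measurable, because the stopping rule \eqref{eq:stopping-time} depends only on $Q_t$ and on $\widehat V$, and $\widehat V$ is a function of the initial RRG randomness.
\item The event $\mathcal E$ is $\mathcal F_0\subseteq\mathcal F_t$-measurable.
\item On $\mathcal E\subseteq\{\widehat V\ge\OPT\}$, Lemma~\ref{lem:drift} applies.
\end{itemize}
Together these give $\E[\one_{\mathcal E}\one_{\{t<\tau\}}(\Phi_{t+1}-\Phi_t)]\ge0$ for each $t$ by the tower property. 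All terms are bounded, since $f\in[0,1]$, and the sum is finite, so summing over $t$ and dividing by $\Prb(\mathcal E)>0$ yields the claim.

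The main obstacle is making this optional-stopping step rigorous under a stopping rule driven by the noisy oracle. The point to verify is that $\tau$ is a genuine stopping time for the filtration $(\mathcal F_t)$ in which the drift is stated, and that conditioning on $\mathcal E$ does not break the drift. Both hold because $\mathcal E$ and $\widehat V$ are fixed at time $0$. The case split is what avoids an accumulated harmonic loss of order $k\xierr H_k$ from summing the negative parts of $(8\OPT-k\xierr)/(k-t)$.
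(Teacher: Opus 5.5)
Your proposal is correct and follows essentially the paper's argument: the same case split on $8\OPT$ versus $k\xierr$, the same endpoint bounds $\Phi_0\ge\OPT$ and $\Phi_\tau\le\max_{T:\,T\cup\bar S\in\I}f(T\cup\bar S)+\tfrac12 f(\bar S)$, and the drift of Lemma~\ref{lem:drift} on $\mathcal E$. The only difference is presentational. The paper invokes bounded optional stopping for the stopped submartingale $\Phi_{t\wedge\tau}$ under $\Prb(\cdot\mid\mathcal E)$, whereas you unroll that theorem into the indicator-weighted telescoping sum, which makes explicit that $\{t<\tau\}$ and $\mathcal E$ are $\mathcal F_t$-measurable.
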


\begin{proof}
Condition on $\mathcal E$ and set $\widetilde\Phi_t:=\Phi_{t\wedge\tau}$.
If $\OPT\ge k\xierr/8$, then Lemma~\ref{lem:drift} shows that
$(\widetilde\Phi_t)_{t\ge0}$ is a submartingale under the conditional
probability given $\mathcal E$. Since $\tau\le k$ and $0\le\Phi_t\le3/2$, the
stopped submartingale is bounded, so the bounded optional-stopping theorem gives
\[
\mathbb E[\Phi_\tau\mid\mathcal E]
\ge
\Phi_0
=
f(O)+\tfrac12 f(\varnothing)
\ge\OPT.
\]
If $\OPT<k\xierr/8$, nonnegativity gives
$\mathbb E[\Phi_\tau\mid\mathcal E]\ge0\ge\OPT-\frac18k\xierr$.
Thus in both cases $\mathbb E[\Phi_\tau\mid\mathcal E]\ge\OPT-\frac18k\xierr$.
Since $Q_\tau=\bar S$ and $O_\tau$ is feasible in the contraction,
$f(\bar S\cup O_\tau)\le\max_{T:T\cup\bar S\in\I}f(T\cup\bar S)$, and
$\Phi_\tau=f(\bar S\cup O_\tau)+\frac12f(\bar S)$, so the claimed certificate
follows.
\end{proof}

\paragraph{Why this lemma is the technical centerpiece.}
The proof does not compare the controlled-oracle trajectory to the exact
trajectory. The two trajectories may diverge completely. Instead, the potential
is shown to retain nonnegative drift in the nontrivial regime
$\OPT\ge k\xierr/8$, \emph{on the controlled-oracle trajectory itself}. This is the reason the result is not a routine Lipschitz perturbation
argument.

\subsection{Robust Almost-Above-Average Swaps}
\label{sec:robust-swap}
Recall the contraction \eqref{eq:contraction}. By Theorem~\ref{thm:preprocess},
on the event $\mathcal E$,
\begin{equation}
\E\left[g(O_g)+\tfrac12g(\varnothing)\;\middle|\;\mathcal E\right]
\ge
\OPT-\tfrac18k\xierr,
\label{eq:contracted-certificate}
\end{equation}
and by Lemma~\ref{lem:residual}, deterministically,
\begin{equation}
\Mar(g,M')=\Mar_f(\bar S)
\le
160\OPT+642\,k\xierr.
\label{eq:residual-contracted}
\end{equation}

\begin{lemma}[Right-continuity of the controlled swap]
\label{lem:right-continuity}
On a contracted instance of rank $k'\ge1$, the controlled-oracle
implementation of the SGS-Poisson swap is right-continuous in the sense required by Proposition~\ref{prop:sgs}.
\end{lemma}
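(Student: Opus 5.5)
The plan is to show that the transition kernel of the controlled swap, viewed as a function of $t$ for a fixed current set $A$, is piecewise constant and right-continuous. I take ``right-continuous'' in the sense of \citet{Kulik2026}: for every $A\in\I(M')$, the distribution of $(I,J)=\textsc{Swap}(t,A)$ is right-continuous in $t\in(0,1]$. I will realize all the randomness through a common auxiliary probability space indexed by $t$.

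\textbf{Coupling.} Draw, once per swap call, i.i.d.\ uniforms $V_{\ell,a}\sim\mathrm{Unif}[0,1]$ for $\ell\le N_s$ and $a\in A$, and set $R_\ell(t):=\{a\in A: V_{\ell,a}<t\}$. Then $R_\ell(t)\sim t\one_A$, as required by \eqref{eq:swap-estimator}. Also draw an independent uniform $W$ that selects $J$ from the chosen base $Z$ according to a fixed deterministic order.

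\textbf{Piecewise constancy.} With $\widehat g$ in place of $g$, the estimates $\widetilde w_i(t)$ depend on $t$ only through the sets $R_\ell(t)$. The remaining steps are deterministic functions of the vector $(\widetilde w_i)_i$ and of $A$, because base ties and exchange maps are broken by fixed orders. These steps are the choice of the maximum-weight base $Z$, the completion $B$ of $A$, the bijection $h$, and the map $h_A$. Hence, for fixed $V$ and $W$, the map $t\mapsto (I,J)$ is a function of $(R_\ell(t))_\ell$. Each $R_\ell(t)=\{a: V_{\ell,a}<t\}$ is right-continuous and piecewise constant in $t$: for any $t$, if $t'\in[t,\,t+\delta)$ with $\delta<\min\{V_{\ell,a}-t: V_{\ell,a}\ge t\}$, then $R_\ell(t')=R_\ell(t)$. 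This uses the strict inequality $V<t$, which makes membership change only at $t'=V$ from the right side. Therefore $(I(t'),J(t'))=(I(t),J(t))$ for all $t'$ sufficiently close to $t$ from the right, almost surely.

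\textbf{Conclusion.} By bounded convergence, $\Pr((I(t'),J(t'))=(i,j))\to\Pr((I(t),J(t))=(i,j))$ as $t'\downarrow t$ for every pair $(i,j)$. Since there are finitely many pairs, the kernel converges in total variation. Persistence of $\widehat f$ as a fixed function matters here: identical sets give identical oracle values, so the estimates really are functions of $(R_\ell(t))_\ell$. No regularity of $\widehat f$ itself is needed, which is why the argument tolerates a non-submodular, adversarial surrogate.

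\textbf{Main obstacle.} The delicate point is matching the exact notion of right-continuity required by Proposition~\ref{prop:sgs}. If \citet{Kulik2026} require more, for example right-continuity of $t\mapsto\E[F(t\one_A\vee\one_J)-F(t\one_A)]$ jointly in $t$, I would add the following step. $F(t\one_A\vee\one_j)$ is a polynomial in $t$, hence continuous, so the expectation is a finite sum of continuous functions weighted by right-continuous probabilities, and is therefore right-continuous. I would also check that the valid-swap conditions hold for every $t$ uniformly, which follows from the exchange-map construction in Section~\ref{sec:swap-implementation} regardless of the weights.
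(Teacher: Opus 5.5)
Your proof is correct, but it takes a different route from the paper's.

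\emph{The paper's route.} The paper never couples different times. It fixes $A$ and notes that there are finitely many tuples $(R_1,\ldots,R_{N_s})$ of subsets of $A$. Each tuple is drawn with probability $\prod_{\ell}t^{|R_\ell|}(1-t)^{|A|-|R_\ell|}$. Conditional on a tuple, the persistent oracle and the fixed tie-breaking make the law of $(I,J)$ constant in $t$. Hence $p_{ij}(t,A)$ is a finite sum of polynomials in $t$, and is therefore continuous.

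\emph{Your route.} You build a threshold coupling $R_\ell(t)=\{a:V_{\ell,a}<t\}$ across all $t$. You then argue that $(I,J)$ is almost surely locally constant in $t$, and pass to the limit by bounded convergence.

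\emph{Comparison.} Both proofs rest on the same observation: the swap depends on $t$ only through the sampled sets. The paper's enumeration is shorter and needs no limit theorem. It also gives an explicit polynomial, hence two-sided continuity, at once. Your coupling yields a pathwise statement, which is more than the lemma needs.

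\emph{One local slip.} The map $t\mapsto\one[V<t]$ is left-continuous, not right-continuous: it equals $0$ at $t=V$ and $1$ for every $t'>V$. So your claim that the strict inequality gives pathwise right-continuity of $R_\ell(t)$ is backwards. Likewise, your condition on $\delta$ cannot be met when some $V_{\ell,a}=t$. This is harmless. For fixed $t$, a tie $V_{\ell,a}=t$ has probability zero, so your ``almost surely'' local constancy still holds, in fact from both sides. Bounded convergence then goes through unchanged. Alternatively, you could define $R_\ell(t)$ with $V_{\ell,a}\le t$.
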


\begin{proof}
Fix a state $A$ and the sample size $N_s$. For each possible tuple
$(R_1,\ldots,R_{N_s})$ of subsets of $A$, its sampling probability is
\[
\prod_{\ell=1}^{N_s}t^{|R_\ell|}(1-t)^{|A|-|R_\ell|},
\]
a polynomial in $t$. Given this tuple, the persistent oracle fixes all
estimated weights, and deterministic tie-breaking fixes the selected base
and exchange map independently of $t$. The conditional probability of each
swap is therefore constant for that tuple. Consequently $p_{ij}(t,A)$ is a
finite sum of polynomials and is continuous, hence right-continuous.
This argument concerns the transition kernel; fresh samples are drawn at
each actual Poisson event. It requires no submodularity or continuity of
$\widehat g$.
\end{proof}

\begin{lemma}[Robust swap concentration]
\label{lem:swap}
Fix a contracted instance of rank $k'\ge1$ and $\delta_s\in(0,1/2)$.
Use the same value-oracle sampling,
maximum-weight-base selection, exchange-map construction, and fixed
tie-breaking convention as the exact SGS-Poisson implementation of
\citet{Kulik2026}, but evaluate every set through the controlled oracle
$\widehat g$.  If
\begin{equation}
N_s=\left\lceil
\frac{25}{2\delta_s^2}
\left(k\ln(2en)+\ln\frac{10}{\delta_s}\right)
\right\rceil
=O\!\left(\frac{k\log n+\log(1/\delta_s)}{\delta_s^2}\right),
\label{eq:swap-samples}
\end{equation}
then the resulting swap is valid and right-continuous and satisfies
\begin{equation}
\eta
\le
\tfrac45\,\delta_s L
+4k\xierr,
\qquad
L:=\Mar(g,M')+g(O_g).
\label{eq:swap-error}
\end{equation}
\end{lemma}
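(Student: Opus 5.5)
Validity and right-continuity are already available: validity follows from the exchange-map construction in Section~\ref{sec:swap-implementation}, which uses only the matroid structure and the selected base $Z$, never submodularity of the weights. Right-continuity is Lemma~\ref{lem:right-continuity}. What remains is the bound \eqref{eq:swap-error} on $\eta$.

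Fix $t$ and $A$, and write $w_i=F(t\one_A\vee\one_i)-F(t\one_A)$ for the true multilinear marginals. Let $\widetilde w_i$ be the sample average \eqref{eq:swap-estimator} computed with the true $g$, and $\widehat w_i$ the same average computed with $\widehat g$, using the same samples $R_\ell$. Since $J$ is uniform on $Z$, we have $\E[w_J\mid Z]=\frac1{k'}\sum_{j\in Z}w_j$. Submodularity of $g$ gives
\[
\sum_{j\in O_g}w_j\ge F(t\one_A\vee\one_{O_g})-F(t\one_A).
\]
It therefore suffices to show that, outside an event of small probability,
\[
\sum_{j\in Z}w_j\ge\sum_{j\in O_g}w_j-\tfrac45\delta_sL-4k\xierr .
\]
I split this into two comparisons.

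First, the oracle bias. For each sample, $\widehat g(R_\ell+i)-\widehat g(R_\ell)$ is a marginal at the common base point $R_\ell$. Averaging Lemma~\ref{lem:common-base} over $\ell$, with $X=Z\setminus A$, $Y=O_g\setminus A$ padded to equal size (elements of $A$ contribute zero marginal), bounds the difference between the $\widehat w$-comparison and the $\widetilde w$-comparison of two bases by $2k'\xierr$. Optimality of $Z$ under $\widehat w$ then gives
\[
\sum_{Z}\widetilde w\ge\sum_{O_g}\widetilde w-2k\xierr .
\]
The constant $4k\xierr$ in \eqref{eq:swap-error} leaves room for the cruder per-marginal bound \eqref{eq:oracle-marginal} if the padding is awkward.

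Second, the sampling error in the true objective, which I expect to be the main obstacle. Following the base-sum concentration of \citet{Kulik2026}, I need $\bigl|\sum_{j\in B}(\widetilde w_j-w_j)\bigr|\le\tfrac25\delta_sL$ simultaneously for all bases $B$, since $Z$ is data-dependent. For fixed $B$, the per-sample quantity $X_\ell(B)=\sum_{j\in B}g(j\mid R_\ell)$ must lie in a range of width $O(L)$. The upper bound comes from submodularity: $g(j\mid R_\ell)\le g(j\mid\varnothing)$ for $j\notin R_\ell$, so the positive part is at most $\Mar(g,M')$. The lower bound comes from $\sum_j g(j\mid R_\ell)\ge g(R_\ell\cup B)-g(R_\ell)\ge -g(R_\ell)\ge -g(O_g)$. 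Combining the two ranges should give width at most $L$ up to constants; a careful version may need the sign-splitting used in \citet{Kulik2026}. Hoeffding then gives failure probability $2\exp(-2N_s(2\delta_s/5)^2)$. A union bound over at most $\binom{2n}{k}\le(2en/k)^k$ bases matches the $k\ln(2en)$ term in \eqref{eq:swap-samples}, so the total failure probability is at most $\delta_s/5$.

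On the bad event I bound the loss crudely. Since $w_J\ge -g(O_g)$-type lower bounds hold and $\frac1{k'}\sum_{O_g}w\le\frac1{k'}\Mar$, the loss is at most $\tfrac{\delta_s}{5}\cdot\tfrac{2L}{k'}$, which is absorbed into the $\tfrac45\delta_sL/k'$ budget. Summing gives $\tfrac25+\tfrac25=\tfrac45$ of $\delta_sL$ plus $4k\xierr$, after dividing by $r=k'$ as in \eqref{eq:valid-swap}.
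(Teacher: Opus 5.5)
Your route is the paper's: a deterministic oracle-bias bound on base sums, Hoeffding concentration of the \emph{true} empirical base sums uniformly over all bases (so it covers the data-dependent $Z$), the comparison chain from $Z$ to $O_g$, and a failure-event bound from the range $[-g(O_g),\Mar(g,M')]$ of the per-sample base sum. Your two range bounds already give width exactly $L$ (the upper one via the independent set $\{i\in Z\setminus R_\ell: g(i\mid R_\ell)\ge0\}$). So no ``up to constants'' or sign-splitting is needed, and the stated $N_s$ leaves no room for an extra constant.

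What does not close is the constant bookkeeping. If each base sum may deviate by $\tfrac25\delta_sL$, then comparing $Z$ with $O_g$ already costs $\tfrac45\delta_sL$ on the good event. The inequality you say ``suffices'' therefore leaves nothing for the failure event, and adding that event puts you above $\tfrac45\delta_sL+4k\xierr$. Your tally $\tfrac25+\tfrac25$ charges the sampling error of only one of the two bases. Instead, take per-base deviation $\tfrac15\delta_sL$, which is exactly what the stated $N_s$ supports. Hoeffding gives $2\exp(-2N_s\delta_s^2/25)\le\tfrac{\delta_s}{5}(2en)^{-k}$ per set. Bases of the contracted matroid number at most $\binom{2n}{k'}\le(2en)^k$ (not $\binom{2n}{k}$), so the failure probability is $p\le\delta_s/5$. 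The good event then costs $\tfrac25\delta_sL+4k\xierr$ and the failure event at most $pL\le\tfrac15\delta_sL$, giving $\eta\le\tfrac35\delta_sL+4k\xierr$, as in the paper.

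Two smaller repairs are needed. First, on the failure event the factor $1/k'$ requires the \emph{base-sum} lower bound $\sum_{i\in\widehat Z}w_i\ge-g(O_g)$ for the selected base, which is the lower end of your range applied to $\widehat Z$. This gives $\E[w_J\mid R_1,\ldots,R_{N_s}]\ge-g(O_g)/k'$; a per-element bound $w_J\ge-g(O_g)$ would lose that factor. Combined with $F(t\one_A\vee\one_{O_g})-F(t\one_A)\le\sum_{i\in O_g}w_i\le\Mar(g,M')$, the failure-event loss is at most $pL/k'$, so your factor $2$ is unnecessary. Second, in the bias step, elements of $A$ do not have zero marginal at $R_\ell$; only elements of $R_\ell$ do. Apply the comparison per sample with $X=Z\setminus R_\ell$ and $Y=O_g\setminus R_\ell$, whose sizes may differ. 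The uncancelled $(|X|-|Y|)$ multiple of the value at $R_\ell$ still leaves an error of at most $2\max\{|X|,|Y|\}\xierr\le2k'\xierr$. Done this way, your argument even improves the oracle term to $2k\xierr$, while your fallback through \eqref{eq:oracle-marginal} reproduces the paper's $4k\xierr$.
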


\begin{proof}
The proof follows the base-sum concentration method of
\citet[Lemma 4.5]{Kulik2026}. We retain the details needed to separate true
sampling error from controlled-oracle bias: the uniform event concerns the
true objective, while the selected base is chosen using perturbed weights.
This separation is what avoids assuming submodularity of $\widehat g$.
Validity is purely combinatorial.  Once a maximum-weight base $Z$ has been
selected, the deterministic exchange map used by the exact implementation
satisfies the valid-swap conditions.  Right-continuity follows from
Lemma~\ref{lem:right-continuity}.

Fix a swap time and condition on the complete history $\mathcal H$ immediately
before the fresh samples are drawn; thus $A$, $t$ and the contracted matroid are
fixed.  For every independent set $Z$ of the contracted matroid, define
\[
X_\ell(Z):=
\sum_{i\in Z}
\bigl(g(R_\ell\cup\{i\})-g(R_\ell)\bigr),
\qquad R_\ell\sim t\one_A\ \text{i.i.d.},
\]
with population expectation
\[
W_g(Z):=\mathbb E[X_\ell(Z)\mid\mathcal H]
=
\sum_{i\in Z}
\left(F(t\one_A\vee\one_i)-F(t\one_A)\right),
\]
and empirical \emph{true} base sum
$\overline W_g(Z):=\frac1{N_s}\sum_{\ell=1}^{N_s} X_\ell(Z)$.

\emph{Step 1: range.} Let $Z^+(R):=\{i\in Z\setminus R:g(i\mid R)\ge0\}$.
Terms with $i\in R$ vanish. For $i\notin R$, diminishing returns gives
$g(i\mid R)\le g(i\mid\varnothing)$, and $Z^+(R)$ is independent. Thus
\[
X_\ell(Z)
\le
\sum_{i\in Z^+(R_\ell)}g(i\mid R_\ell)
\le
\Mar(g,M').
\]
For the lower bound, submodularity gives
$X_\ell(Z)\ge g(R_\ell\cup Z)-g(R_\ell)\ge-g(R_\ell)\ge-g(O_g)$,
because $R_\ell\subseteq A$ is independent and $g(O_g)$ is the optimum
value of the contracted instance.  Hence
\begin{equation}
-g(O_g)\le X_\ell(Z)\le\Mar(g,M')
\qquad\forall Z,\ell,
\label{eq:true-base-sum-range}
\end{equation}
so $X_\ell(Z)$ has range at most $L$.
If $L=0$ then $g(O_g)=0$, the contracted instance has zero optimum, and the
almost-above-average condition is immediate; so assume $L>0$.

\emph{Step 2: uniform concentration.} Hoeffding's inequality
\citep{Hoeffding1963} and \eqref{eq:true-base-sum-range} give, for every fixed
independent set $Z$,
\[
\Prb\left(
\left|
\overline W_g(Z)-W_g(Z)
\right|
>\frac{\delta_s}{5}L
\,\middle|\,\mathcal H
\right)
\le
2\exp\left(-\frac{2N_s\delta_s^2}{25}\right).
\]
The contracted augmented matroid has rank at most $k$ and ground-set size at
most $2n$, so its number of independent sets is at most
$\sum_{j=0}^k\binom{2n}{j}\le(2en)^k$,
whose logarithm is $O(k\log n)$. Thus the explicit sample size in \eqref{eq:swap-samples} ensures that the event
\[
\mathcal E_{\rm sw}:=
\left\{
\sup_{Z\in\mathcal I(M')}
\left|
\overline W_g(Z)-W_g(Z)
\right|
\le\frac{\delta_s}{5}L
\right\}
\]
satisfies
$\Prb(\mathcal E_{\rm sw}^c\mid\mathcal H)\le\delta_s/5$.
Note that $\mathcal E_{\rm sw}$ is defined \emph{before} the data-dependent base
is selected, which is what makes the argument valid for an adaptively chosen
comparison.

\emph{Step 3: oracle bias.} For every sampled set $R$ and candidate element $i$,
$\bigl|[\widehat g(R\cup\{i\})-\widehat g(R)]-[g(R\cup\{i\})-g(R)]\bigr|\le2\xierr$,
hence for every independent set $Z$,
\begin{equation}
\Bigl|
\textstyle\sum_{i\in Z}
[\widehat g(R\cup\{i\})-\widehat g(R)]
-
\sum_{i\in Z}
[g(R\cup\{i\})-g(R)]
\Bigr|
\le2k\xierr,
\label{eq:swap-oracle-bias}
\end{equation}
deterministically and simultaneously for all $Z$. Writing $\widehat W(Z)$ for
the empirical base sum computed from the controlled oracle, on
$\mathcal E_{\rm sw}$,
\begin{equation}
\sup_{Z\in\mathcal I(M')}
|\widehat W(Z)-W_g(Z)|
\le
\Delta,
\qquad
\Delta:=\frac{\delta_s}{5}L+2k\xierr.
\label{eq:swap-uniform-controlled}
\end{equation}

\emph{Step 4: comparison against the optimal base.} Let $\widehat Z$ be the
data-dependent maximum-weight base selected by the algorithm; $O_g$ is an
admissible comparison base by Lemma~\ref{lem:dummy}(c). Because
\eqref{eq:swap-uniform-controlled} holds simultaneously for every independent
set, it applies to $\widehat Z$ and to $O_g$, even though $\widehat Z$ depends
on the same samples. Therefore, on $\mathcal E_{\rm sw}$,
\[
\begin{aligned}
W_g(\widehat Z)
&\ge \widehat W(\widehat Z)-\Delta
&&\text{by \eqref{eq:swap-uniform-controlled}}\\
&\ge \widehat W(O_g)-\Delta
&&\text{by optimality of $\widehat Z$ under $\widehat W$}\\
&\ge W_g(O_g)-2\Delta
&&\text{by \eqref{eq:swap-uniform-controlled}}\\
&\ge \Gamma-2\Delta,
&&
\end{aligned}
\]
where $\Gamma:=F(t\one_A\vee\one_{O_g})-F(t\one_A)$ and the last step is the standard
submodularity bound
$W_g(O_g)=\sum_{i\in O_g}\bigl(F(t\one_A\vee\one_i)-F(t\one_A)\bigr)\ge \Gamma$.

\emph{Step 5: averaging.} Conditional on the realized samples, the entering
element is uniform in $\widehat Z$, so its conditional expected gain is
$W_g(\widehat Z)/k'$ where $k'\le k$ is the contracted rank. The empirical
quantities are used only to lower bound $W_g(\widehat Z)$. Hence
\[
\mathbb E\bigl[
F(t\one_A\vee\one_J)-F(t\one_A)
\mid\mathcal H,\mathcal E_{\rm sw},R_1,\ldots,R_{N_s}\bigr]
\ge
\frac{\Gamma-2\Delta}{k'} .
\]
On the failure event, \eqref{eq:true-base-sum-range} gives
$W_g(Z)\ge-g(O_g)$ for every $Z$, hence the same conditional expectation is at
least $-g(O_g)/k'$. Let $p:=\Prb(\mathcal E_{\rm sw}^c\mid\mathcal H)\le\delta_s/5$.
Averaging over the fresh samples,
\[
\mathbb E\bigl[
F(t\one_A\vee\one_J)-F(t\one_A)\mid\mathcal H\bigr]
\ge
\frac{(1-p)(\Gamma-2\Delta)-p\,g(O_g)}{k'}
\ge
\frac{\Gamma-2\Delta-pL}{k'},
\]
because $\Gamma\le W_g(O_g)\le\Mar(g,M')$ by submodularity and
\eqref{eq:true-base-sum-range}, so $\Gamma+g(O_g)\le L$, and $\Delta\ge0$.
This argument uses the residual-mass bound rather than feasibility of
$R_\ell\cup O_g$, which need not hold. Comparing with \eqref{eq:valid-swap} gives
\[
\eta\le2\Delta+pL
\le\frac35\delta_sL+4k\xierr
\le\frac45\delta_sL+4k\xierr,
\]
which is \eqref{eq:swap-error}. No submodularity, monotonicity, or unbiasedness
of $\widehat g$ is used.
\end{proof}

Combining \eqref{eq:residual-contracted}, $g(O_g)\le\OPT$, and
Lemma~\ref{lem:swap}, we obtain
$\eta\le\frac45\delta_s(161\OPT+642k\xierr)+4k\xierr$. Setting
\begin{equation}
\delta_s=\frac{\eps}{250}
\label{eq:delta-s}
\end{equation}
and using $\eps\le1/2$ gives the explicit bound
\begin{equation}
\eta\le \frac{13}{25}\,\eps\,\OPT+6k\xierr .
\label{eq:eta-bound}
\end{equation}
Indeed $\tfrac45\cdot\tfrac{\eps}{250}\cdot161=0.5152\,\eps\le\tfrac{13}{25}\eps$
and $\tfrac45\cdot\tfrac{\eps}{250}\cdot642+4\le 2.06\cdot\tfrac12+4<6$.

Notice the important \emph{relative} form of the first term. We do not replace
it by $O(\eps)$: doing so would lose the desired $(\alpha-\eps)\OPT$
resilience statement when $\OPT$ is small.

\begin{remark}[Explicit constants]
The chain above keeps every constant explicit. Two choices matter for the final
oracle complexity: the amplification threshold in
\eqref{eq:certificate-estimator}, which controls the factor $8$ in
$\widehat V\le8\OPT+\ldots$ and hence the factor $161$ in $L$; and the resulting
choice \eqref{eq:delta-s} of $\delta_s$. The sampling budget in
\eqref{eq:swap-samples} depends on
$\delta_s^{-2}\bigl(k\log n+\log(1/\delta_s)\bigr)$, so decreasing
$\delta_s$ increases the required sample budget. The choice
$\delta_s=\eps/250$ yields the error bound \eqref{eq:eta-bound} while
preserving the stated asymptotic oracle complexity.
\end{remark}

\subsection{Oracle Complexity}

\begin{lemma}[Controlled-oracle complexity]
\label{lem:oracle-complexity}
With $\rho=\eps/100$, $\delta_s=\eps/250$ and
$\eps_0=\eps/100$, the number of controlled-oracle evaluations
used by the truncated SGS-Poisson value-oracle implementation is bounded
\emph{deterministically} by
\begin{equation}
\begin{aligned}
N(\eps)
&=O\left(
 nk\log\frac1\eps
 +\frac{nk^2\log n\log(1/\eps)}{\eps^2}
 +\frac{nk\log^2(1/\eps)}{\eps^2}
\right)\\
&=O\bigl(nk^2\log n\;\eps^{-2}\log^2(1/\eps)\bigr).
\end{aligned}
\label{eq:oracle-complexity}
\end{equation}
Independence-oracle queries and maximum-weight-base computation are not counted.
\end{lemma}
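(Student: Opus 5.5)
The count is a sum over the stages of the implementation, each bounded deterministically: the amplified RRG runs with the certificate \eqref{eq:certificate-estimator}, Advanced Preprocessing, the truncated Poisson stage, and the $k=1$ enumeration when relevant. I bound each stage's worst-case number of calls and substitute $\rho=\eps/100$, $\delta_s=\eps/250$, $\eps_0=\eps/100$.

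\emph{Stage 1: amplified RRG and certificate.} A single run of Algorithm~\ref{alg:rrg} has $\lceil k/2\rceil$ iterations. At each iteration the weights $\widehat f(u\mid S)$ are needed for all $u$ in the augmented ground set. That is at most $2n+1$ evaluations, counting $\widehat f(S)$ once, so one run costs $O(nk)$. There are $R=\lceil14\ln(1/\rho)\rceil=O(\log(1/\eps))$ independent runs, plus $R$ further evaluations $\widehat f(G^{(j)})$ to pick $G^\star$. This stage therefore costs $O(nk\log(1/\eps))$.

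\emph{Stage 2: preprocessing.} By the stopping-time argument after \eqref{eq:stopping-time}, $\tau\le k$. Each step evaluates $\Mar_{\widehat f}(Q_t)$ and $Z_t$ from the $O(n)$ marginals $\widehat f(j\mid Q_t)$, and Lemma~\ref{lem:dummy}(b) lets both be computed from these weights alone. This stage costs $O(nk)$, which is absorbed. The case $k=1$ costs $O(n)$ by direct enumeration.

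\emph{Stage 3: truncated Poisson.} By Lemma~\ref{lem:budget}, the number of swap calls is deterministically at most
\[
N_{\max}=O\bigl(1+k\log(1/\eps_0)+\log(1/\rho)\bigr)=O\bigl(k\log(1/\eps)\bigr),
\]
where $\eps\le1/2$ makes $\log(1/\eps)\ge\ln 2$. Each swap draws $N_s$ sets $R_\ell$ and, for each, queries $\widehat g(R_\ell)$ and $\widehat g(R_\ell\cup\{i\})$ for every candidate $i$ in the contracted ground set (size $\le2n$). A swap thus costs $O(nN_s)$. By \eqref{eq:swap-samples} with $\delta_s=\eps/250$,
\[
N_s=O\bigl(\eps^{-2}(k\log n+\log(1/\eps))\bigr).
\]
Multiplying gives
\[
O\bigl(nk\log(1/\eps)\,\eps^{-2}(k\log n+\log(1/\eps))\bigr),
\]
which is exactly the second and third terms of \eqref{eq:oracle-complexity}. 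The last simplification uses $\log n\ge\log 2$ and $\log(1/\eps)\ge\ln2$ to absorb all three terms into $nk^2\log n\,\eps^{-2}\log^2(1/\eps)$.

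\emph{Main obstacle.} The delicate point is determinism, not the arithmetic. I must check that every loop has a worst-case iteration bound independent of the randomness and of $\widehat f$: RRG has a fixed length, $\tau\le k$ by Lemma~\ref{lem:dummy}(a), and the Poisson loop is capped at $N_{\max}$ events. Truncation must be counted at the number of swap calls actually performed, which is at most $N_{\max}$, so it does not rely on the Poisson tail. The per-swap cost is fixed because $N_s$ is fixed in advance and is not adaptive. Caching repeated identical queries can only reduce the count.
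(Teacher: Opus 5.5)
Your proposal is correct and follows the paper's own proof essentially step for step. It uses the same three-stage count: amplified RRG at $O(nk\log(1/\eps))$, preprocessing at $O(nk)$, and at most $N_{\max}=O(k\log(1/\eps))$ Poisson events each costing $O(nN_s)$ with $N_s=O\bigl(\eps^{-2}(k\log n+\log(1/\eps))\bigr)$, followed by the same absorption using $\eps\le1/2$ (your extra bookkeeping of the $R$ certificate evaluations and the $k=1$ enumeration just makes explicit what the paper leaves implicit).
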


\begin{proof}
Each RRG run has $\lceil k/2\rceil$ iterations, and each iteration requires
$O(n)$ marginal-value queries to form a maximum-weight base on a ground set of
size at most $2n$. The amplified certificate uses $R=O(\log(1/\rho))=O(\log(1/\eps))$
independent runs, hence $O(nk\log(1/\eps))$ queries. Advanced Preprocessing
performs at most $k$ iterations, each requiring $O(n)$ marginal queries, hence
$O(nk)$ additional queries; since $\eps\le1/2$ this is absorbed by the first term.

For a swap, \eqref{eq:swap-samples} with $\delta_s=\Theta(\eps)$ gives
$N_s=O\bigl((k\log n+\log(1/\eps))/\eps^2\bigr)$.
There are $O(n)$ candidate entering elements, and each sampled marginal
$g(R_\ell\cup\{i\})-g(R_\ell)$ uses two controlled-oracle evaluations; the
$N_s$ sets $R_\ell$ are shared across candidates, so one Poisson event costs
$O(nN_s)$ evaluations. By Lemma~\ref{lem:budget} the number of events is at most
$N_{\max}=O(1+k\log(1/\eps_0)+\log(1/\rho))=O(k\log(1/\eps))$ deterministically.
Multiplying gives
$O\bigl(nk^2\log n\log(1/\eps)\eps^{-2}+nk\log^2(1/\eps)\eps^{-2}\bigr)$.
Adding the RRG and preprocessing costs proves the claim; the second display
follows since $k\ge1$ and $\log(1/\eps)\ge\log 2$.
\end{proof}

\subsection{Combining the Pieces: The Resilience Theorem}
\label{sec:main-theorem}
We now combine the preprocessing and swap results with the unchanged
SGS-Poisson dynamics. We first dispose of the two degenerate ranks.

\begin{lemma}[Small rank]
\label{lem:small-rank}
If $k=0$, returning $\varnothing$ is optimal. If $k=1$, querying $\widehat f$ on
$\varnothing$ and on every feasible singleton and returning the best queried set
uses $O(n)$ queries and yields $f(A_{\rm out})\ge\OPT-2\xierr$.
\end{lemma}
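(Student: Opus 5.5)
The plan is to handle the two ranks separately, with the rank-one case reduced to a single comparison using the oracle control \eqref{eq:controlled-oracle}.

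\emph{Rank zero.} If $k=0$, the only independent set is $\varnothing$. Hence $\OPT=f(\varnothing)$, and returning $\varnothing$ is trivially optimal with no queries.

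\emph{Rank one.} If $k=1$, every independent set is either $\varnothing$ or a singleton $\{i\}$ with $\{i\}\in\I$. The candidate family $\mathcal C:=\{\varnothing\}\cup\{\{i\}:\{i\}\in\I\}$ is therefore exactly $\I$. It has at most $n+1$ members, so querying $\widehat f$ on each costs $O(n)$ value-oracle calls; identifying feasible singletons uses only the uncounted independence oracle. Let $S^\star\in\mathcal C$ be a true maximizer, so $f(S^\star)=\OPT$, and let $A_{\rm out}$ be the returned $\widehat f$-maximizer. Chaining the oracle control gives
\[
f(A_{\rm out})\ge\widehat f(A_{\rm out})-\xierr\ge\widehat f(S^\star)-\xierr\ge f(S^\star)-2\xierr=\OPT-2\xierr .
\]
The first and last inequalities are \eqref{eq:controlled-oracle}, and the middle one is the optimality of $A_{\rm out}$ under $\widehat f$. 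The bound is deterministic and therefore also holds in expectation.

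Two bookkeeping points remain. First, the enumeration runs on the original instance, not the augmented one; if it were run on the augmented instance, dummies would have value $f(\varnothing)$ and would be discarded at the end without changing the value. Second, every queried set lies in $\I\subseteq\Ipl$, which matches Lemma~\ref{lem:query-feasibility}.

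There is no real obstacle here. The only point needing care is that $\mathcal C$ exhausts $\I$ when the rank is one, so the enumeration maximum coincides with $\OPT$. The loss of $2\xierr$ is then forced by one application of the controlled-oracle bound at each of the two sets being compared, and it is within the $Ck\xierr$ allowance of Theorem~\ref{thm:main}.
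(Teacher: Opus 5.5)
Your proposal is correct and follows the paper's proof: for rank one the queried family ($\varnothing$ together with the feasible singletons) contains an optimum, and the chain $f(A_{\rm out})\ge\widehat f(A_{\rm out})-\xierr\ge\widehat f(O)-\xierr\ge f(O)-2\xierr$ gives $\OPT-2\xierr$. Your $k=0$ case and the remarks on the augmented instance and query feasibility are routine points that the paper leaves implicit.
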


\begin{proof}
For $k=1$ every feasible set is $\varnothing$ or a singleton, so the optimum is
among the queried sets. If $A_{\rm out}$ maximizes $\widehat f$ over that family
and $O$ is optimal, then
$f(A_{\rm out})\ge\widehat f(A_{\rm out})-\xierr\ge\widehat f(O)-\xierr\ge f(O)-2\xierr$.
\end{proof}

\begin{theorem}[Resilience of SGS-Poisson]
\label{thm:main}
Let $f:2^U\to[0,1]$ be nonnegative submodular and let $M=(U,\I)$ be a matroid
of rank $k$. For every $\eps\in(0,1/2]$, supplied error bound
$\xierr\ge0$, and persistent $\xierr$-controlled oracle, the truncated SGS-Poisson value-oracle
implementation of Section~\ref{sec:algorithms}, run with
$\rho=\eps_0=\eps/100$ and $\delta_s=\eps/250$, outputs
$A_{\rm out}\in\I$ satisfying
\begin{equation}
\E[f(A_{\rm out})]
\ge
\begin{cases}
(1/e-\eps)\OPT-Ck\xierr,
& f\text{ non-monotone},\\[1mm]
(1-1/e-\eps)\OPT-Ck\xierr,
& f\text{ monotone},
\end{cases}
\label{eq:resilience-main}
\end{equation}
with $C=8$, and makes at most
\begin{equation}
N(\eps)=
O\left(
nk\log\frac1\eps+
\frac{nk^2\log n\log(1/\eps)}{\eps^2}
+
\frac{nk\log^2(1/\eps)}{\eps^2}
\right)
\label{eq:main-complexity-2}
\end{equation}
oracle calls, deterministically.
\end{theorem}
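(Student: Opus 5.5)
The proof first disposes of small ranks. For $k=0$ and $k=1$, Lemma~\ref{lem:small-rank} already gives $f(A_{\rm out})\ge\OPT-2\xierr$ with $O(n)$ queries, which is stronger than \eqref{eq:resilience-main}. So assume $k\ge2$. The query bound is exactly Lemma~\ref{lem:oracle-complexity}, with the choices $\rho=\eps_0=\eps/100$ and $\delta_s=\eps/250$. Feasibility holds because $S_{\rm out}\in\I(M/\bar S)$, deleting dummies preserves independence, and every fallback output is $\bar S\setminus D\in\I$. Only the value bound remains.

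\textbf{Conditioning and the Poisson step.} Let $\mathcal E=\{\widehat V\ge\OPT\}$. This event is measurable with respect to the initial RRG randomness and has $\Prb(\mathcal E^c)\le\rho$ by Lemma~\ref{lem:certificate}. If $\Prb(\mathcal E)=0$ the bound is trivial, so assume otherwise. Condition on the whole preprocessing history $\mathcal H_0$ on $\mathcal E$; this fixes $\bar S$, $g$, and $k'$. If $k'=0$, output $\bar S$; the certificate then says $\tfrac32 f(\bar S)$ dominates the potential, which is handled like the bound below. If $k'\ge1$, Lemma~\ref{lem:swap} and \eqref{eq:eta-bound} give a valid, right-continuous swap with $\eta\le\frac{13}{25}\eps\OPT+6k\xierr$. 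This holds because \eqref{eq:residual-contracted} is deterministic and $g(O_g)\le\OPT$. Proposition~\ref{prop:sgs} then applies to the untruncated process on $(g,M')$. In the non-monotone case,
\[
\E[f(A^{\rm full})\mid\mathcal H_0]\ge(1-\eps_0)e^{-1}g(O_g)+e^{-1}g(\varnothing)-\eta .
\]

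\textbf{Using the certificate.} The right-hand side is at least $(1-\eps_0)e^{-1}\bigl(g(O_g)+\tfrac12g(\varnothing)\bigr)-\eta$. This uses $g(\varnothing)\ge0$ and $e^{-1}\ge\frac12(1-\eps_0)e^{-1}$. Taking conditional expectation given $\mathcal E$ and applying \eqref{eq:contracted-certificate} yields $(1-\eps_0)e^{-1}(\OPT-\frac18k\xierr)-\E[\eta]$. The monotone case is identical with factor $1-e^{-1}$; there $(1-e^{-1})/2\le e^{-1}$ still holds because $1-e^{-1}\approx0.632\le 2e^{-1}\approx0.736$.

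Next, Lemma~\ref{lem:budget} in its conditional form subtracts $\rho\OPT$ for truncation. On $\mathcal E^c$ we use only $f\ge0$, which gives
\[
\E[f(A_{\rm out})]\ge\Prb(\mathcal E)\,\E[f(A_{\rm out})\mid\mathcal E]\ge(1-\rho)\E[f(A_{\rm out})\mid\mathcal E].
\]

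\textbf{Constant bookkeeping.} Collecting terms, the $\OPT$ coefficient is at least $\alpha(1-\eps_0)(1-\rho)-\frac{13}{25}\eps-\rho\ge\alpha-\eps$. This holds since $\alpha\eps_0+\alpha\rho+\rho\le 3\eps/100$ and $\frac{13}{25}+0.03<1$. The additive error is at most $\frac{\alpha}{8}k\xierr+6k\xierr<8k\xierr$, so $C=8$.

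\textbf{Main obstacle.} The delicate point is the conditioning order. The certificate \eqref{eq:contracted-certificate} is an expectation over preprocessing, while Proposition~\ref{prop:sgs} is applied pathwise given $\mathcal H_0$. The bound $\eta$ must therefore be a deterministic function of $\OPT$ and $k\xierr$, which it is, and the coefficient must multiply $g(O_g)+\frac12 g(\varnothing)$ linearly. I would verify that the tower property over $\mathcal H_0$, restricted to $\mathcal E$, is legitimate. It is: $\mathcal E$ is $\mathcal H_0$-measurable, and the Poisson clock, the swap samples, and the truncation are all fresh randomness.
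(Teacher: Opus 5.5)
Your proposal is correct and follows the paper's proof essentially step for step. It uses the same small-rank reduction and the same certificate event; your $\{\widehat V\ge\OPT\}$ coincides with the paper's two-sided event because the upper bound on $\widehat V$ is deterministic. It then applies Proposition~\ref{prop:sgs} pathwise with the deterministic $\eta$ bound, averages via the preprocessing certificate, accounts for the conditional truncation loss, and de-conditions using $f\ge0$. Your one-shot constant bookkeeping gives an $\OPT$ coefficient of at least $\alpha-0.55\eps$ and additive error below $8k\xierr$, which matches the paper's staged chain.
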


\begin{proof}
For $k=0$, Lemma~\ref{lem:small-rank} gives the exact optimum without
queries. For $k=1$, its guarantee $\OPT-2\xierr$ is stronger than either
claimed bound. Assume henceforth that $k\ge2$. The error radius $\xierr$
is supplied to the certificate construction in \eqref{eq:certificate-estimator}.

Let
$\mathcal E:=\{\OPT\le\widehat V\le8\OPT+32k\xierr\}$.
By Lemma~\ref{lem:certificate}, $\Prb(\mathcal E)\ge1-\rho$. We take the
initial sigma-field to include the RRG repetitions and their outputs, so
$\mathcal E$ is measurable at time zero. Conditional on this initial
sigma-field, all subsequent preprocessing and SGS-Poisson randomness is
independent of the RRG repetitions. The preprocessing certificate is used in
conditional expectation, while the residual-mass bound
\eqref{eq:residual-contracted} and the swap bound \eqref{eq:eta-bound} hold
\emph{deterministically}, hence for every realized preprocessing trajectory;
the subsequent conditioning on $\mathcal E$ is therefore legitimate.

On $\mathcal E$, Theorem~\ref{thm:preprocess} and \eqref{eq:contracted-certificate} give
\begin{equation}
\mathbb E\left[
g(O_g)+\tfrac12g(\varnothing)\;\middle|\;\mathcal E
\right]
\ge
\OPT-\tfrac18k\xierr,
\label{eq:cert-in-proof}
\end{equation}
and \eqref{eq:eta-bound} gives $\eta\le\frac{13}{25}\eps\OPT+6k\xierr$ uniformly.

\emph{Degenerate contraction.} The contracted rank $k'$ is random, so we
must treat $k'=0$ without conditioning the averaged certificate on that event.
For any realization with $k'=0$, the output is $\bar S\setminus D$ and
$g(O_g)=g(\varnothing)=f(\bar S)$. For either
$c\in\{1/e,1-1/e\}$,
\[
f(\bar S)\ge(1-\eps_0)c\bigl(g(O_g)+\tfrac12g(\varnothing)\bigr)-\eta,
\]
since $\tfrac32c<1$ and $\eta\ge0$. Thus the pointwise lower bound derived
below for $k'\ge1$ also holds when $k'=0$, and can be averaged over all
preprocessing outcomes together.

\emph{Non-monotone case.} Run SGS-Poisson on the contracted instance with
starting time $\eps_0$. Let $S^{\rm full}$ be the output of the untruncated
process. Conditioned on $\mathcal E$ and on the realized $\bar S$,
Proposition~\ref{prop:sgs} gives
\[
\mathbb E[g(S^{\rm full})\mid\bar S,\mathcal E]
\ge
(1-\eps_0)\tfrac1e\, g(O_g)
+\tfrac1e\, g(\varnothing)-\eta
\ge
\frac{1-\eps_0}{e}\Bigl(g(O_g)+\tfrac12g(\varnothing)\Bigr)-\eta,
\]
where we used $g(\varnothing)\ge0$ and $\frac1e\ge\frac{1-\eps_0}{2e}$.
Taking expectation over $\bar S$ given $\mathcal E$ and using
\eqref{eq:cert-in-proof},
\[
\mathbb E[g(S^{\rm full})\mid\mathcal E]
\ge
\frac{1-\eps_0}{e}\Bigl(\OPT-\tfrac18k\xierr\Bigr)-\eta
\ge
\Bigl(\frac1e-\frac{\eps_0}{e}-\frac{13}{25}\eps\Bigr)\OPT
-\Bigl(6+\tfrac1{8e}\Bigr)k\xierr .
\]
With $\eps_0=\eps/100$ this is at least
$(\frac1e-0.53\eps)\OPT-6.1\,k\xierr$.

\emph{Monotone case.} The contraction $g$ is also monotone, and
Proposition~\ref{prop:sgs} gives
$\mathbb E[g(S^{\rm full})\mid\bar S,\mathcal E]\ge(1-\eps_0)(1-\frac1e)g(O_g)+\frac1e g(\varnothing)-\eta$.
Because $\frac1e\ge\frac12(1-\frac1e)\ge\frac{1-\eps_0}{2}(1-\frac1e)$, the same
computation yields
$\mathbb E[g(S^{\rm full})\mid\mathcal E]\ge(1-\frac1e-0.53\eps)\OPT-6.1\,k\xierr$,
using $6+(1-1/e)/8<6.1$.

\emph{Truncation and de-conditioning.} By Lemma~\ref{lem:budget}, replacing
$S^{\rm full}$ by the truncated output costs at most $\rho\OPT=0.01\eps\OPT$.
Writing $c\in\{1/e,1-1/e\}$ and using \eqref{eq:output-identity},
$\mathbb E[f(A_{\rm out})\mid\mathcal E]\ge(c-0.54\eps)\OPT-6.1k\xierr$.
Finally, since $f(A_{\rm out})\ge0$,
\[
\mathbb E[f(A_{\rm out})]
\ge
\Prb(\mathcal E)\,\mathbb E[f(A_{\rm out})\mid\mathcal E]
\ge
(1-\rho)\max\{0,(c-0.54\eps)\OPT-6.1k\xierr\}
\ge
(c-0.55\eps)\OPT-6.1k\xierr,
\]
using $c\le1$ and $\rho=\eps/100$. Both bounds in \eqref{eq:resilience-main}
follow with $C=8$, and in fact with a slack factor of about $1.8$ in the
$\eps\OPT$ term. The complexity bound is Lemma~\ref{lem:oracle-complexity}.
\end{proof}

\begin{remark}[Adaptive queries and empirical oracles]
\label{rem:queried-sets}
The offline theorem is an expectation guarantee for each fixed controlled
oracle. One cannot simply condition it on an empirical-accuracy event that
also depends on the algorithm's random trajectory. The following coupling
justifies its online use under the stochastic model of Section~\ref{sec:setup}.

Preassign independent reward samples to every admissible action, independently
of the offline random seed, and let $\widehat f(S)$ be the empirical mean of
the prescribed number of samples. This table need only be revealed on demand;
it is not constructed by the learner. Define an analytical oracle by clipping
each table entry to $[f(S)-\xierr,f(S)+\xierr]\cap[0,1]$, and extend it by
$f$ outside the admissible action set. Conditional on the table, this is a
fixed persistent controlled oracle, so Theorem~\ref{thm:main} applies without
conditioning on the query-accuracy event.

Couple the real and clipped executions using the same offline seed. They
agree until an inaccurate entry is first queried. If each previously unseen
action has conditional error probability at most $p_{\rm est}$, their
probability of disagreement is at most $N(\eps)p_{\rm est}$ by the
deterministic query bound and a union bound. Both outputs are in $\I$, so
the real output's expected value is at least the clipped output's expected
value minus $N(\eps)p_{\rm est}\OPT$. This proves the needed online
interface using accuracy only along queried actions. The clipping is solely
an analysis device and requires no knowledge of $f$ by the learner.
\end{remark}

\begin{corollary}[Resilience parameters]
\label{cor:resilience-parameters}
In the parameterization of Definition~\ref{def:resilience}, the implementation
of Theorem~\ref{thm:main} is
$(1/e,\;2,\;2,\;\psi,\;8k)$-resilient for non-monotone submodular maximization
and $(1-1/e,\;2,\;2,\;\psi,\;8k)$-resilient for monotone submodular
maximization, with
\[
\psi=O\!\left(nk^2\log n\right)
\qquad(\text{assuming }n\ge2).
\]
\end{corollary}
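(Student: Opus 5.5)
The corollary is a direct read-off from Theorem~\ref{thm:main} and Lemma~\ref{lem:oracle-complexity}, matched term by term against Definition~\ref{def:resilience}.

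\emph{Approximation and error terms.} Theorem~\ref{thm:main} gives, for every $\eps\in(0,1/2]$ and every supplied radius $\xierr\ge0$, a feasible output with $\E[f(A_{\rm out})]\ge(\alpha-\eps)\OPT-8k\xierr$. This holds with $\alpha=1/e$ in the non-monotone case and $\alpha=1-1/e$ in the monotone case. Comparing with \eqref{eq:resilience-guarantee} gives $\alpha$ as stated and $\delta=8k$. The radius is known to the algorithm, as Definition~\ref{def:resilience} permits. The guarantee holds uniformly over persistent controlled oracles, including the degenerate ranks $k\in\{0,1\}$ covered by Lemma~\ref{lem:small-rank}.

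\emph{Query count.} I will rewrite the deterministic bound \eqref{eq:main-complexity-2} in the form $\psi\,\eps^{-\beta}\log^\gamma(1/\eps)$. Since $\eps\le1/2$, we have $\log(1/\eps)\ge\log2$ and $\eps^{-2}\ge4$. Hence each term is at most a constant multiple of $nk^2\log n\,\eps^{-2}\log^2(1/\eps)$:
\begin{itemize}
\item the term $nk\log(1/\eps)$ is absorbed because $\eps^{-2}\log(1/\eps)\ge 4\log 2$ and $k\log n\ge\log 2$;
\item the middle term $nk^2\log n\,\eps^{-2}\log(1/\eps)$ gains one extra factor $\log(1/\eps)/\log 2\ge1$;
\item the last term $nk\log^2(1/\eps)\,\eps^{-2}$ needs only $k\log n\ge\log2$, which is where $n\ge2$ is used.
\end{itemize}
This gives $\beta=2$, $\gamma=2$, and $\psi=O(nk^2\log n)$, matching the second line of \eqref{eq:oracle-complexity}. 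The bound is deterministic and independent of $\xierr$, as Remark~\ref{rem:deterministic-N} requires. That is supplied by the truncation of Lemma~\ref{lem:budget}.

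There is no real obstacle here. The only points to check are that the absorption constants are universal (independent of $n,k,\eps,\xierr$) and that $\gamma=2$ lies outside the $\{0,1\}$ range of \citet{FouratiFramework2024}. The latter does not affect this corollary; it is handled later by Remark~\ref{rem:gamma} and Theorem~\ref{thm:fourati}.
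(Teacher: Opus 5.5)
Your proposal is correct and follows the paper's own one-line proof. The paper likewise reads $\alpha$ and $\delta=8k$ off Theorem~\ref{thm:main}, and absorbs each of the three terms of \eqref{eq:main-complexity-2} into a constant multiple of $nk^2\log n\,\eps^{-2}\log^2(1/\eps)$ using $\eps\le1/2$; your explicit absorption constants (via $\log(1/\eps)\ge\log2$, $\eps^{-2}\ge4$, and $k\log n\ge\log2$) simply spell out what the paper leaves implicit.
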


\begin{proof}
Immediate from Theorem~\ref{thm:main}: by \eqref{eq:oracle-complexity},
$N(\eps)=O(\psi\eps^{-2}\log^2(1/\eps))$ with $\psi=O(nk^2\log n)$, since each
of the three terms in \eqref{eq:main-complexity-2} is at most a constant times
$nk^2\log n\,\eps^{-2}\log^2(1/\eps)$ for $\eps\le1/2$.
\end{proof}

\begin{remark}[The scale of the sensitivity bound]
\label{rem:delta-tight}
A comparison of two bases can involve $\Theta(k)$ distinct set values, so
bounded perturbations can change a marginal-sum comparison by
$\Theta(k\xierr)$. This explains the scale of our upper bound, but does not
prove an $\Omega(k\xierr)$ lower bound on the final output's loss, or on all
offline algorithms. Linear dependence also occurs in several earlier bounds: $\delta=2k$ for \textsc{Greedy} and $\delta=4k$ for
\textsc{RandomSampling} under a cardinality constraint
\citep{Nie2022,FouratiFramework2024}, and $\delta=M+1$ for the greedy algorithm
under a general matroid of rank $M$ \citep{Nie2025KSubmodular}. Our constant
$\delta=8k$ is therefore in the same regime.
\end{remark}

\section{Offline-to-Online CMAB}
\label{sec:cmab}
We now use the resilience theorem only as an offline primitive. The
controlled oracle is an analysis interface for this offline subroutine; it is
not assumed to be directly available to the online learner. In the CMAB
application, the offline value estimates are generated by the exploration
mechanism in the offline-to-online reduction. The online conversion is that of
\citet{FouratiFramework2024}; we adapt its general
$(\alpha,\beta,\gamma,\psi,\delta)$ form to the supplied error radius,
randomized oracle interface, and logarithmic query bound proved here. The query-complexity
exponent $\beta$ determines the final horizon exponent for the learner.

\subsection{The Action Set During Exploration}
\label{sec:action-set}

The reduction answers each offline query for a set $S$ by playing $S$ for
$r^\star$ rounds and returning the empirical mean. Consequently every set the
offline algorithm queries must be a playable action. By
Lemma~\ref{lem:query-feasibility}, the sets queried by our implementation lie
in $\Ipl$ of \eqref{eq:Iplus}, and the swap estimator genuinely leaves $\I$.
We therefore make the following model assumption explicit.

\begin{assumption}[Exploration action set]
\label{ass:action-set}
During exploration the learner may play any $S\in\Ipl$, receiving a reward in
$[0,1]$ with mean $f(S)$. The exploited set is required to be feasible,
$\Theta\in\I$, and the benchmark remains $\OPT=\max_{S\in\I}f(S)$.
\end{assumption}

\begin{lemma}[Regret accounting under enlarged exploration]
\label{lem:action-set-free}
Under Assumption~\ref{ass:action-set}, enlarging the exploration action set
does not increase the per-round exploration-regret bound used in
\citet[Theorem 5.3]{FouratiFramework2024}. This is a model relaxation, not
a feasibility guarantee for the original action set.
\end{lemma}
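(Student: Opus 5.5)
The claim concerns how the per-round regret is charged during exploration in the explore-then-commit reduction of \citet[Theorem 5.3]{FouratiFramework2024}, so I would first isolate exactly which bound that theorem uses for an exploration round. In that reduction, a round in which the learner plays a queried set $S$ incurs instantaneous $\alpha$-regret $\alpha\OPT-\E[Y_t\mid\mathcal H_{t-1},A_t=S]=\alpha\OPT-f(S)$. The regret bound does not use anything specific about $S$ here; it bounds this quantity by $\alpha\OPT\le 1$, using only $f(S)\ge 0$ and $\OPT\le 1$. The exploration contribution is then at most (number of exploration rounds)$\times 1$, namely $N(\eps)r^\star$ by the deterministic query budget of Remark~\ref{rem:deterministic-N}.

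Next, I would check that this bound still holds verbatim when $S\in\Ipl\setminus\I$. Under Assumption~\ref{ass:action-set}, the reward of such an $S$ lies in $[0,1]$ and has mean $f(S)$. Here $f$ is the original nonnegative function on all of $2^U$, so $f(S)\ge0$ still holds, while the benchmark $\OPT=\max_{S\in\I}f(S)$ is unchanged. Hence $\alpha\OPT-f(S)\le\alpha\OPT\le1$, exactly as before. If $f(S)$ happens to exceed $\OPT$, the instantaneous regret is only smaller, possibly negative, and an upper bound is all that is needed. Lemma~\ref{lem:query-feasibility} guarantees that every played exploration set lies in $\Ipl$, so the assumption covers every round actually played.

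I would then confirm that the remaining parts of the accounting do not depend on the action set. The number of exploration rounds is fixed by $N(\eps)$ and $r^\star$. The concentration of empirical means used by Remark~\ref{rem:queried-sets} needs only rewards in $[0,1]$ with mean $f(S)$ and independence, which the assumption supplies. The exploitation phase plays $\Theta\in\I$ and is charged through Theorem~\ref{thm:main} exactly as in the original analysis.

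The only subtle point, and the main obstacle, is making sure no step of \citet{FouratiFramework2024} implicitly uses feasibility of exploration actions, for instance a bound of the form $f(S)\le\OPT$ or a bound on the regret's magnitude. I would scan for such uses and note that none are needed for an upper bound on regret. Finally, I would state the caveat in the lemma explicitly: this shows only that the regret arithmetic survives, not that exploration is feasible in $\I$.
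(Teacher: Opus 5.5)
Your proposal is correct and follows essentially the same route as the paper's proof. You bound each exploration round's contribution by $\alpha\OPT-f(S)\le\alpha\le1$ using only that $f$ (equivalently, the reward) lies in $[0,1]$, which holds for every $S\in\Ipl$; the exploitation term uses the feasible output $\Theta\in\I$ against the unchanged benchmark, and the Hoeffding concentration step does not depend on whether the queried set is feasible.
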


\begin{proof}
The analysis uses the action set only twice. In the exploration term it bounds
each round's contribution by $\alpha f(S^\star)-\E[f(A_j)]\le\alpha\le1$, which
uses only that rewards lie in $[0,1]$ and holds for $A_j\in\Ipl$. In the
exploitation term it uses the offline guarantee for the returned set $\Theta$,
which is feasible by construction; the benchmark $S^\star$ is unchanged. The
concentration analysis (Hoeffding over the $r^\star$ plays of each queried
action) is likewise indifferent to feasibility of the queried set.
\end{proof}

\begin{remark}[When the relaxation is not available]
If the environment forbids infeasible super-arms, the CMAB corollaries in
this paper cannot be invoked. A rank-$(k-1)$ truncation does not repair this
for a general matroid: even a small independent set can become dependent
when one element is added. Nor are this estimator's queries automatically
feasible for a cardinality constraint. A feasible-query estimator or a
different online algorithm is needed.
\end{remark}

\begin{remark}[Persistence via caching]
\label{rem:caching}
The offline analysis uses a persistent oracle: the same set always receives
the same value. As in the empirical-oracle implementation of
\citet{Nie2023,FouratiFramework2024}, this is obtained by caching, i.e.\ by storing
the empirical mean of each queried action and reusing it if the offline
algorithm queries that action again. Caching can only decrease the number of
exploration rounds, so it does not affect the regret bound. Persistence is used
in Lemma~\ref{lem:right-continuity} and in the fixed-oracle coupling of
Remark~\ref{rem:queried-sets}. Cache keys are the real actions obtained
after removing dummy elements.
\end{remark}

\subsection{Regret Bounds}

\begin{theorem}[Offline-to-online reduction, adapted from \citealp{FouratiFramework2024}]
\label{thm:fourati}
Suppose $\psi\ge1$ and $\delta\ge1$, let $\beta,\gamma\ge0$ be fixed
exponents, and suppose an offline algorithm $\mathcal A(\eps;\xierr)$ is
$(\alpha,\beta,\gamma,\psi,\delta)$-resilient in the sense of
Definition~\ref{def:resilience}. For stochastic CMAB with full-bandit
feedback, all queried actions admissible, and $T\ge\max\{2,\psi\}$,
the explore-then-commit reduction achieves
\[
R_\alpha(T)=\widetilde O\!\left(
\delta^{\frac{2}{3+\beta}}
\psi^{\frac{1}{3+\beta}}
T^{\frac{2+\beta}{3+\beta}}
\right).
\]
The logarithmic dependence on $T$ and the fixed exponent $\gamma$ are
absorbed into $\widetilde O(\cdot)$. The benchmark is $\alpha\OPT$, not
$(\alpha-\eps)\OPT$.
\end{theorem}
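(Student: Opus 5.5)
The plan is to run the explore-then-commit argument of \citet{FouratiFramework2024}, with the exploration length, the per-query sample count $r^\star$, and the advertised radius $\xierr$ chosen so that the logarithmic factor $\log^\gamma(1/\eps)$ is absorbed. The algorithm fixes a target accuracy $\eps\in(0,1/2]$ and a sample count $r^\star$, and sets $\xierr:=\sqrt{\log(2TN(\eps))/(2r^\star)}$. It then runs $\mathcal A(\eps;\xierr)$, answering each new query $S$ by playing $S$ for $r^\star$ rounds and caching the empirical mean, as in Remark~\ref{rem:caching}. For the remaining rounds it commits to the returned set $\Theta\in\I$.

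First, the exploration cost. By the deterministic query bound of Definition~\ref{def:resilience} and Remark~\ref{rem:deterministic-N}, exploration uses at most $N(\eps)r^\star$ rounds. Lemma~\ref{lem:action-set-free} shows that each such round contributes at most $\alpha\OPT\le1$ to the regret, whether or not the played set lies in $\I$. So the exploration regret is at most $\psi\eps^{-\beta}\log^\gamma(1/\eps)\,r^\star$.

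Second, the commit phase. Hoeffding's inequality makes each queried empirical mean $\xierr$-accurate except with probability $p_{\rm est}\le 1/(TN(\eps))$. The coupling of Remark~\ref{rem:queried-sets} then applies: we clip to a persistent controlled oracle, invoke resilience for that fixed oracle, and union-bound over at most $N(\eps)$ queried actions. This gives
\[
\E[f(\Theta)]\ge(\alpha-\eps)\OPT-\delta\xierr-N(\eps)p_{\rm est}\OPT\ge\alpha\OPT-\eps-\delta\xierr-1/T.
\]
The commit regret is therefore at most $T(\eps+\delta\xierr)+1$. Since $\E[Y_t\mid\mathcal H_{t-1},A_t]=f(A_t)$, the per-round accounting passes to $R_\alpha(T)$ directly.

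Third, tuning. The total regret is
\[
\widetilde O\bigl(\psi\eps^{-\beta}r^\star+T\eps+T\delta/\sqrt{r^\star}\bigr).
\]
Balance $\eps$ against $\delta/\sqrt{r^\star}$ by setting $r^\star\asymp\delta^2\eps^{-2}$. The objective becomes $\psi\delta^2\eps^{-(2+\beta)}+T\eps$. Optimizing gives $\eps\asymp(\psi\delta^2/T)^{1/(3+\beta)}$, and the regret is
\[
\widetilde O\bigl(\delta^{2/(3+\beta)}\psi^{1/(3+\beta)}T^{(2+\beta)/(3+\beta)}\bigr).
\]
To absorb the $\gamma$ factor, include $\log^\gamma T$ in the tuning; this works because $\log(1/\eps)=O(\log T)$ once $T\ge\psi$ and $\delta\ge1$. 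If the tuned $\eps$ exceeds $1/2$, then $\psi\delta^2\gtrsim T$. In that case the bound exceeds $T$ up to logarithmic factors, and playing $\varnothing$ gives the result trivially, as in Remark~\ref{rem:gamma}. We also need $r^\star\ge1$ and total exploration at most $T$; both follow from the same inequality up to logarithmic factors.

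The main obstacle I expect is the second step: making the adaptive, randomized query trajectory compatible with a guarantee stated for fixed oracles. The approach is to preassign independent samples per action, independent of the algorithm's seed. This makes the clipped table a fixed persistent oracle conditional on the samples, so Theorem-level resilience applies pointwise. The deterministic bound $N(\eps)$ then turns the union bound into a loss of only $1/T$ in the commit phase.
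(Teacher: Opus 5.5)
Your route is the paper's: explore-then-commit with cached empirical means, and $r^\star\asymp\delta^2\eps^{-2}\log T$ samples per new action so that the advertised radius is $\xierr\asymp\eps/\delta$. You also use the preassigned-table clipped coupling of Remark~\ref{rem:queried-sets}, the tuning $\eps\asymp(\psi\delta^2/T)^{1/(3+\beta)}$ up to logarithms, and the trivial regime $\eps>1/2$.

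\textbf{Gap: the query bound and output feasibility are not licensed for the real run.} You bound the exploration length by $N(\eps)r^\star$ ``by the deterministic query bound of Definition~\ref{def:resilience}''. Your commit-phase inequality also takes from Remark~\ref{rem:queried-sets} the fact that the \emph{real} output lies in $\I$. But Definition~\ref{def:resilience} promises the query bound and feasibility of $\Theta$ only when $\mathcal A(\eps;\xierr)$ is run with a $\xierr$-controlled oracle.

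The empirical table you actually feed the routine is not such an oracle. It is accurate only with high probability, and only on the entries that get queried. Once an inaccurate entry is read, the real execution is an abstract algorithm run outside its hypotheses. Nothing then bounds its number of queries, keeps its later queries admissible, or keeps $\Theta$ feasible. An infeasible $\Theta$ could not even be played in the commit phase.

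The paper closes this with a wrapper. It enforces the known query budget, checks every query for admissibility and the output for feasibility, and returns a fixed feasible fallback on any violation. The wrapper never fires on a controlled-oracle execution, so the clipped coupling is unaffected. It also makes $\tau_{\rm expl}\le N(\eps)r^\star$ and $\Theta\in\I$ hold pathwise for the real run. You need this wrapper, or an explicit assumption that budget, admissibility and feasibility hold for arbitrary oracles (true for the specific algorithm of Theorem~\ref{thm:main}). With it, the disagreement event costs only the $N(\eps)p_{\rm est}\OPT\le 1/T$ you already budgeted.

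\textbf{Smaller point: the two phases must be combined.} The commit phase has random length $T-\tau_{\rm expl}$, which can be correlated with $\Theta$. So $\E[(T-\tau_{\rm expl})(\alpha\OPT-f(\Theta))]$ is not termwise at most $T(\alpha\OPT-\E[f(\Theta)])$, and the claim that ``the per-round accounting passes directly'' is not quite right. Your final total is still correct if you combine the phases as the paper does. Nonnegative exploration rewards and $f(\Theta)\le1$ give $\E[\sum_t Y_t]\ge\E[(T-\tau_{\rm expl})f(\Theta)]\ge T\,\E[f(\Theta)]-\E[\tau_{\rm expl}]$. This needs no independence between $\tau_{\rm expl}$ and $\Theta$.
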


\begin{proof}[Verification of the interface adaptation]
The explore-then-commit reduction and its regret exponents are due to
\citet[Theorem 5.3]{FouratiFramework2024}. We verify the adjustments needed
here: an advertised error radius, a randomized adaptive subroutine, and
$\gamma=2$ rather than the original $\gamma\in\{0,1\}$.
Put $L_T=\log(2T)$ and, for a sufficiently large constant $K$, choose
\[
\eps=K\max\left\{
\left(\frac{\psi L_T^\gamma}{T}\right)^{1/(\beta+1)},
\left(\frac{\psi\delta^2 L_T^{\gamma+1}}{T}\right)^{1/(\beta+3)}
\right\}.
\]
If $\eps>1/2$, use $R_\alpha(T)\le T$. Otherwise supply the routine with
$\xierr=\eps/\delta$ and answer each new queried action with the cached mean
of $r_{\rm est}=\lceil2\delta^2L_T/\eps^2\rceil$ independent rewards, as in
\citet{Nie2023,FouratiFramework2024}. Enforce the known query budget and
check query admissibility and output feasibility, returning a fixed feasible
fallback on a violation. Controlled-oracle executions are unaffected.
Hoeffding's inequality \citep{Hoeffding1963} bounds each new action's error
probability by $2e^{-4L_T}\le T^{-2}$. With the deterministic query bound
$Q=O(\psi\eps^{-\beta}L_T^\gamma)$, our tuning ensures
\[
Qr_{\rm est}
=O\!\left(\psi\eps^{-\beta}L_T^\gamma+
\psi\delta^2\eps^{-(\beta+2)}L_T^{\gamma+1}\right)
\le\eps T.
\]
The clipped-oracle coupling in Remark~\ref{rem:queried-sets}, rather than
conditioning the offline theorem on a trajectory-dependent accuracy event,
then gives
\[
\alpha\OPT-\E[f(\Theta)]\le\eps\OPT+\delta\xierr+Q/T^2.
\]
For the possibly random exploration duration $\tau_{\rm expl}\le Qr_{\rm est}$,
nonnegative exploration rewards and $f(\Theta)\le1$ imply
\[
\E\!\left[\sum_{t=1}^T Y_t\right]
\ge\E[(T-\tau_{\rm expl})f(\Theta)]
\ge T\E[f(\Theta)]-Qr_{\rm est}.
\]
Consequently $R_\alpha(T)\le3\eps T+1$; no independence between $\Theta$
and $\tau_{\rm expl}$ is needed. The same argument applies to each
deterministic prefix $s\le T$ of this $T$-tuned run. Define $\Theta$ by
completing the exploration on the preassigned reward table even if the
observed prefix ends earlier. The exploitation contribution is at least
$\E[(s-\tau_{\rm expl})_+f(\Theta)]$, where $(x)_+=\max\{x,0\}$, so
\begin{equation}
R_\alpha(s;T)
\le Qr_{\rm est}+s\left(\eps\OPT+\delta\xierr+Q/T^2\right)
\le3\eps T+1.
\label{eq:prefix-regret}
\end{equation}
Here $R_\alpha(s;T)$ denotes regret through round $s$ for parameters tuned
to $T$, not to $s$. If the tuned $\eps>1/2$, the trivial bound
$R_\alpha(s;T)\le s\le T$ gives the same conclusion. This establishes
the prefix control used in Remark~\ref{rem:anytime} without a sign assumption
on instantaneous $\alpha$-regret. Finally, since $T\ge\psi$ and $\delta\ge1$,
\[
\left(\frac{\psi}{T}\right)^{1/(\beta+1)}
\le\left(\frac{\psi\delta^2}{T}\right)^{1/(\beta+3)},
\]
so the second tuning term dominates up to logarithmic factors, giving the
stated rate, also in the trivial $\eps>1/2$ regime.
\end{proof}

\begin{corollary}[Full-bandit CMAB under general matroids]
\label{cor:cmab}
Consider stochastic full-bandit CMAB with $n\ge2$, a general rank-$k$
matroid with $k\ge1$, a
nonnegative submodular mean reward, and Assumption~\ref{ass:action-set}. Let
$\psi=O(nk^2\log n)$ and $\delta=8k$. Then for every horizon $T\ge\max\{2,\psi\}$ there
exist algorithms with
\[
R_{1/e}(T)
=
\widetilde O\!\left(n^{1/5}k^{4/5}T^{4/5}\right)
\quad\text{(non-monotone)},
\qquad
R_{1-1/e}(T)
=
\widetilde O\!\left(n^{1/5}k^{4/5}T^{4/5}\right)
\quad\text{(monotone)} .
\]
\end{corollary}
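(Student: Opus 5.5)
The corollary should follow by plugging the resilience parameters of Corollary~\ref{cor:resilience-parameters} into the reduction of Theorem~\ref{thm:fourati}. The only work is to check that the reduction's hypotheses hold in the present action model.

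\textbf{Step 1: offline primitive.} For $k\ge2$, Theorem~\ref{thm:main} and Corollary~\ref{cor:resilience-parameters} give a $(c,2,2,\psi,8k)$-resilient algorithm, with $c=1/e$ in the non-monotone case and $c=1-1/e$ in the monotone case, and with $\psi=O(nk^2\log n)$. The query bound is deterministic, as Remark~\ref{rem:deterministic-N} requires. For $k=1$, the enumeration of Lemma~\ref{lem:small-rank} is used instead. It makes $O(n)$ queries and loses only $2\xierr\le8k\xierr$, so it is trivially resilient with the same parameters after enlarging $\psi$ by a constant. I will also take $\psi\ge1$ and $\delta=8k\ge1$, enlarging $\psi$ if needed, so that Theorem~\ref{thm:fourati} applies.

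\textbf{Step 2: admissibility of queries.} Theorem~\ref{thm:fourati} assumes all queried actions are admissible. Lemma~\ref{lem:query-feasibility} shows that every query, after dummy deletion, lies in $\Ipl$. Assumption~\ref{ass:action-set} makes every set in $\Ipl$ playable with mean reward $f(S)$. Lemma~\ref{lem:action-set-free} shows the exploration accounting is unchanged, and the exploited set $\Theta$ lies in $\I$ by Theorem~\ref{thm:main}. Persistence of the oracle comes from caching as in Remark~\ref{rem:caching}, with cache keys taken after dummy removal. The trajectory-dependent empirical accuracy is handled by the clipped-oracle coupling of Remark~\ref{rem:queried-sets}, which is already built into the proof of Theorem~\ref{thm:fourati}.

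\textbf{Step 3: exponents.} With $\beta=2$, Theorem~\ref{thm:fourati} gives
\[
R_c(T)=\widetilde O\bigl(\delta^{2/5}\psi^{1/5}T^{4/5}\bigr).
\]
Substituting $\delta=8k$ and $\psi=O(nk^2\log n)$ gives $\delta^{2/5}\psi^{1/5}=O(k^{2/5}\cdot n^{1/5}k^{2/5}\log^{1/5}n)$. The $\log n$ factor is absorbed into $\widetilde O$, leaving $\widetilde O(n^{1/5}k^{4/5}T^{4/5})$. The condition $T\ge\max\{2,\psi\}$ matches the hypothesis of the corollary.

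\textbf{Main obstacle.} I expect the main difficulty to be interface bookkeeping rather than any new estimate. Two points need checking. First, the advertised radius $\xierr=\eps/\delta$ chosen by the reduction must be a legitimate input to the certificate \eqref{eq:certificate-estimator}; it is, because Definition~\ref{def:resilience} allows a known radius. Second, $\gamma=2$ must be absorbed into the tuning, which the proof of Theorem~\ref{thm:fourati} handles via $L_T^\gamma$. If the tuned $\eps$ exceeds $1/2$, the trivial bound $R\le T$ is already of the stated order, as noted in that proof.
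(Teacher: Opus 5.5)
Your proposal is correct and follows the paper's proof essentially step for step. You feed $\beta=\gamma=2$, $\psi=O(nk^2\log n)$ and $\delta=8k$ from Theorem~\ref{thm:main} and Corollary~\ref{cor:resilience-parameters} into Theorem~\ref{thm:fourati}, discharge the interface conditions through Lemmas~\ref{lem:query-feasibility} and~\ref{lem:action-set-free}, caching, and the deterministic query budget, and compute $\delta^{2/5}\psi^{1/5}=\widetilde O(n^{1/5}k^{4/5})$. Your separate treatment of $k=1$ is harmless but redundant, since Theorem~\ref{thm:main} already covers that case via Lemma~\ref{lem:small-rank}.
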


\begin{proof}
Theorem~\ref{thm:main} and Corollary~\ref{cor:resilience-parameters} give
$\beta=2$, $\gamma=2$ (admissible by Remark~\ref{rem:gamma}),
$\psi=O(nk^2\log n)$ and $\delta=8k\ge1$, with
$\alpha=1/e$ in the non-monotone case and $\alpha=1-1/e$ in the monotone case;
Lemma~\ref{lem:query-feasibility}, Lemma~\ref{lem:action-set-free},
Remark~\ref{rem:caching} and Lemma~\ref{lem:budget} discharge the interface
conditions. Since $\delta=8k\ge1$, Theorem~\ref{thm:fourati} applies for
$T\ge\max\{2,\psi\}$. Applying
Theorem~\ref{thm:fourati} with $\beta=2$,
\[
R_\alpha(T)
=\widetilde O\!\left(
k^{2/5}\,(nk^2\log n)^{1/5}\,T^{4/5}
\right)
=\widetilde O\!\left(n^{1/5}k^{4/5}T^{4/5}\right),
\]
where $\widetilde O$ absorbs the $\log^{1/5}n$ factor and the polylogarithmic
factors in $T$. For $T<\psi$ the bound holds trivially after enlarging its
universal constant, since $R_\alpha(T)\le T$ under the present normalization.
\end{proof}

\section{Partition Matroids: a Better Rate}
\label{sec:partition}

Partition matroids cover several of the motivating applications (submodular
welfare, ad allocation, group-fair sensor placement, influence maximization with
per-community budgets). For their unit-capacity versions we improve the rank
dependence of the regret
from $k^{4/5}$ to $k^{3/5}$. The improvement is not an artifact of a different
analysis: it comes from the fact that, for a partition matroid, a
maximum-weight base under the estimated weights is a \emph{transversal} of
per-part maximizers, so a swap can be simulated exactly while evaluating
marginals in only one part.

Throughout this section $M=(U,\I)$ is a partition matroid of rank $k$: the ground
set $U$ is partitioned into $P_1,\ldots,P_k$ and $S\in\I$ iff $|S\cap P_j|\le1$
for every $j$. Generalized partition matroids are discussed in
Remark~\ref{rem:generalized-partition}.

\subsection{Per-Part Augmentation}

The rank-$k$ truncation used in Section~\ref{sec:dummy} destroys the partition
structure, so we augment per part instead: let $d_j$ be a fresh dummy element and
put $P_j^+:=P_j\cup\{d_j\}$, $D:=\{d_1,\ldots,d_k\}$, $U^+:=U\cup D$, again with
unit capacities, and extend $f,\widehat f$ by $f^+(S):=f(S\setminus D)$ and
$\widehat f^+(S):=\widehat f(S\setminus D)$ as before.

\begin{lemma}[Per-part augmentation]
\label{lem:partition-dummy}
$M^+=(U^+,\mathcal I^+)$ is a unit-capacity partition matroid of rank $k$ on a
ground set of size $n+k\le2n$; $\widehat f^+$ is a $\xierr$-controlled oracle for
$f^+$; every contraction $M^+/Q$ with $Q\in\mathcal I^+$, after deleting loops, is
again a unit-capacity partition matroid; and the three conclusions of
Lemma~\ref{lem:dummy} hold for $M^+$.
\end{lemma}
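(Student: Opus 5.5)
The plan is to verify the four claims directly from the definitions, in order. First I will check the structure of $M^+$. By construction $U^+$ is partitioned into $P_1^+,\ldots,P_k^+$, and $\mathcal I^+$ consists of the sets meeting each $P_j^+$ in at most one element, so $M^+$ is a unit-capacity partition matroid. Its rank is the number of nonempty parts, which is exactly $k$ because each $P_j^+\ni d_j$. We may assume each original $P_j$ is nonempty: otherwise the original rank would count only nonempty parts. Hence $k\le n$ and $|U^+|=n+k\le 2n$. Next, the controlled-oracle claim is immediate. For every $S\subseteq U^+$,
\[
|\widehat f^+(S)-f^+(S)|=|\widehat f(S\setminus D)-f(S\setminus D)|\le\xierr,
\]
and $f^+$ is nonnegative submodular (monotone if $f$ is), since $S\mapsto S\setminus D$ preserves unions and intersections.

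For the contraction claim, fix $Q\in\mathcal I^+$ and let $J(Q):=\{j: Q\cap P_j^+\neq\varnothing\}$. A set $T\subseteq U^+\setminus Q$ satisfies $T\cup Q\in\mathcal I^+$ iff $T$ avoids every $P_j^+$ with $j\in J(Q)$ and meets each remaining part at most once. Therefore every element of $P_j^+\setminus Q$ with $j\in J(Q)$ is a loop of $M^+/Q$. After these loops are deleted, $M^+/Q$ is the unit-capacity partition matroid on the parts $P_j^+$, $j\notin J(Q)$. Each of these parts still contains its dummy $d_j$, so the contraction has rank $k-|Q|$ and every part is nonempty.

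Finally I will re-establish Lemma~\ref{lem:dummy}(a)--(c) with per-part dummies in place of the pooled dummies. For (a), an independent $T$ of $M^+/Q$ uses at most one element from each free part $j\notin J(Q)\cup J(T)$. I will pad it with $d_j$ for every such part, which yields a base of size exactly $r_t=k-t$. For (b), take $T^\star$ attaining the right-hand maximum with nonnegative weights and pad it as in (a). The added dummies have $w(d_j)=h(d_j\mid Q)=0$ because $h$ is invariant under adding dummies, so the padded base has the same weight. The reverse inequality is trivial. Part (c) follows from the identical padding argument, since padding with dummies leaves $h(T\cup Q)$ unchanged.

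The only point needing care is that padding must use the dummy of the \emph{same} part. This is exactly why the per-part construction is required: the pooled rank-$k$ truncation of Section~\ref{sec:dummy} does not preserve the partition structure, whereas here each unfilled part has its own dummy available. I expect the argument otherwise to be routine.
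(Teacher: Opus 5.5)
Your proposal is correct and follows the same argument as the paper. Contracting at $Q$ turns every part met by $Q$ into loops and leaves the untouched parts with unit capacity. Lemma~\ref{lem:dummy}(a)--(c) then follow by padding each unfilled residual part with its own dummy $d_j$, which is unused because that part is not met by $Q$ and has zero marginal under any dummy-invariant $h$. You also spell out the controlled-oracle inequality and the bound $k\le n$, which the paper leaves implicit. In (a), the phrase ``at most one element from each free part $j\notin J(Q)\cup J(T)$'' should say $j\notin J(Q)$, with the padding ranging over $j\notin J(Q)\cup J(T)$; this is only a wording slip.
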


\begin{proof}
The parts of $M^+$ still partition $U^+$ and there are $k$ of them, so the rank is
$k$ and $|U^+|=n+k$. Contracting at $Q$ deletes every part met by $Q$ (its
elements become loops) and leaves the remaining parts with capacity $1$, so
$M^+/Q$ is again a unit-capacity partition matroid, on the parts not met by $Q$.
For Lemma~\ref{lem:dummy}(a), a residual independent set $T$ is padded to a
residual base by adding $d_j$ for each residual part $j$ with $T\cap P_j^+=\varnothing$;
this is possible because a residual part is by definition not met by $Q$, so its
dummy is unused. Parts (b) and (c) follow by the same padding, every dummy having
zero $f^+$- and $\widehat f^+$-marginal.
\end{proof}

Consequently, the analysis of Sections~\ref{sec:robust-rrg}--\ref{sec:robust-swap}
carries over unchanged to a partition matroid with this augmentation in place
of the truncated one: RRG,
the certificate $\widehat V$, Advanced Preprocessing, the drift lemma, and
Lemma~\ref{lem:swap} use the matroid exchange properties together with the
dummy-completion conclusions of Lemma~\ref{lem:dummy}. In
particular Theorem~\ref{thm:main} holds as stated.

\subsection{The Lazy Swap and an Exact Simulation}

Write $M'=M^+/\bar S$ for the contracted matroid, a unit-capacity partition
matroid with parts $P'_1,\ldots,P'_{k'}$, where $k'$ is the contracted rank, and
put $n':=\sum_{j=1}^{k'}|P'_j|\le2n$. If $k'=0$, return $\bar S\setminus D$
without running either swap procedure.

\begin{algorithm}[ht]
\caption{Lazy partition swap with a controlled oracle (adapted from \citealp[Algorithm 5]{Kulik2026})}
\label{alg:lazy-swap}
\begin{algorithmic}[1]
\Require time $t$, current set $A\in\I(M')$, sample size $N_s$
\State Sample $R_1,\ldots,R_{N_s}\sim t\one_A$ independently
\State Sample a part index $j$ uniformly from $\{1,\ldots,k'\}$
\State For every $i\in P'_j$ compute
       $\widehat w_i=\frac1{N_s}\sum_{\ell=1}^{N_s}\bigl[\widehat g(R_\ell\cup\{i\})-\widehat g(R_\ell)\bigr]$
\State $J\gets\argmax_{i\in P'_j}\widehat w_i$, ties broken by the fixed order
\State $I\gets$ the unique element of $A\cap P'_j$ if $A\cap P'_j\neq\varnothing$,
       and $I\gets\bot$ otherwise
\State \Return $(I,J)$
\end{algorithmic}
\end{algorithm}

\begin{lemma}[Exact simulation]
\label{lem:simulation}
The simulation of \citet[Section 4.2.2]{Kulik2026} gives the following identity.
For a unit-capacity partition matroid, Algorithm~\ref{alg:lazy-swap} and the
general controlled-oracle swap of Lemma~\ref{lem:swap}, using the canonical
within-part exchange map, induce the same joint
distribution of $(I,J)$, given the same time $t$, the same current set $A$, and
the same fixed element order used for lexicographic base tie-breaking.
\end{lemma}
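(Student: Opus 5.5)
I would prove the identity by conditioning on the sample tuple $(R_1,\ldots,R_{N_s})$. Both procedures draw this tuple from the same law $t\one_A$, and given it all estimated weights are fixed deterministic numbers, because the oracle is persistent. Concretely, $\widehat w_i=\frac1{N_s}\sum_\ell[\widehat g(R_\ell\cup\{i\})-\widehat g(R_\ell)]$, and this is the same expression used by the general swap. It therefore suffices to show that, for every fixed weight vector $\widehat w$ on the ground set of $M'$, the conditional law of $(I,J)$ is the same in both procedures. Averaging over the tuple then gives the claim.

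The key structural step concerns the maximum-weight base of the unit-capacity partition matroid $M'$. A base of $M'$ is exactly a transversal $\{z_1,\ldots,z_{k'}\}$ with $z_j\in P'_j$, and its weight is $\sum_j \widehat w(z_j)$. This sum is maximized exactly by choosing a per-part maximizer in each part. Under lexicographic tie-breaking by the fixed element order, I would check that the selected base $\widehat Z$ is the transversal whose $j$-th element is the first (in the fixed order) maximizer of $\widehat w$ on $P'_j$. This is the one point where the tie-breaking convention must be pinned down. If the general implementation breaks base ties by comparing bases lexicographically in the fixed order, the separability of the objective across parts lets one show that the lexicographically-first optimal base is obtained by making the lexicographically-preferred choice independently in each part. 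This is the step I expect to be the main obstacle, since "lexicographic base tie-breaking" must be read so that it is compatible with per-part $\argmax$ with ties broken by the same order. I would state that convention explicitly and verify it by an exchange argument: if some part's chosen element were not its preferred maximizer, swapping it in would give an optimal base that comes earlier.

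Next comes the exchange map. Complete $A$ to the base $B$ by adding, for each part with $A\cap P'_j=\varnothing$, its dummy (or any fixed element). The canonical within-part bijection $h:\widehat Z\to B$ sends $z_j$ to the element of $B\cap P'_j$. This map fixes $\widehat Z\cap B$ and is a valid basis-exchange bijection, since $B-h(z_j)+z_j$ still meets each part once. Then $h_A(z_j)$ is the element of $A\cap P'_j$ if that set is nonempty, and $\bot$ otherwise, which is exactly Line~5 of Algorithm~\ref{alg:lazy-swap}.

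Finally, drawing $J$ uniformly from $\widehat Z$ is the same as drawing a part index $j$ uniformly from $\{1,\ldots,k'\}$ and setting $J=z_j$, with $I=h_A(z_j)$. This matches Lines~2--5 term by term. The only difference is that the lazy algorithm samples $j$ before computing weights and evaluates weights only in $P'_j$. Because $j$ is independent of the samples and $z_j$ depends only on $\widehat w|_{P'_j}$, the conditional law given the tuple is unchanged, which completes the argument.
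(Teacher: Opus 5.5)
Your proposal is correct and follows essentially the paper's route: the paper does not reprove the identity but invokes the deterministic simulation of \citet[Section 4.2.2]{Kulik2026}, noting that it holds for every fixed collection of estimated weights. That observation is exactly your conditioning on the sample tuple under a persistent oracle. You also spell out the details the paper leaves to the citation, and these are sound: maximum-weight bases are transversals of first per-part maximizers under lexicographic tie-breaking, and your exchange argument for this tie-breaking step is valid; the canonical within-part map reproduces Line~5; and a uniform element of $\widehat Z$ corresponds to a uniform part index.
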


This is the deterministic simulation argument of
\citet[Section 4.2.2, preceding Lemma 4.8]{Kulik2026}, so we do not repeat its
proof. The identity holds for every fixed collection of estimated weights;
replacing exact values by $\widehat g$ therefore does not require
submodularity of the surrogate. The new guarantees here are the robustness
of those weights and the deterministic query-cost bound below.

Because the output distribution is identical, every conclusion of
Lemma~\ref{lem:swap} --- validity, right-continuity, and
\begin{equation}
\eta\le\tfrac45\delta_sL+4k\xierr
\label{eq:partition-eta}
\end{equation}
with the same $N_s$ of \eqref{eq:swap-samples} --- transfers to
Algorithm~\ref{alg:lazy-swap} without any new concentration argument. This is the
point of the simulation: the uniform event $\mathcal E_{\rm sw}$ of
Lemma~\ref{lem:swap} concerns the population base sums $W_g(Z)$ over all
independent sets and is a statement about the shared samples
$R_1,\ldots,R_{N_s}$, not about which marginals the algorithm chooses to evaluate.

\subsection{A Deterministic Budget for the Lazy Swap}

Algorithm~\ref{alg:lazy-swap} uses at most $N_s(|P'_j|+1)$ controlled-oracle
calls. We charge this upper bound to a swap in part $j$; caching or skipping
dummy evaluations can only reduce the actual number of calls. The random
part sizes require a query-cost cap in addition to the event cap.

\begin{lemma}[Partition query budget]
\label{lem:partition-budget}
Assume $k'\ge1$, $\eps_0\in(0,1]$, and $\rho\in(0,1)$. Apply
Lemma~\ref{lem:budget} with $\Lambda=k'\ln(1/\eps_0)$, the exact mean number
of Poisson events on the contracted instance, and let $N_{\max}$ be the
resulting event cap. Put
\[
B_q:=N_s\left[\,2N_{\max}\Bigl(\frac{n'}{k'}+1\Bigr)+3(n'+1)\ln\frac1\rho\,\right].
\]
Before making a swap's oracle calls, abort and return $\bar S\setminus D$
if its charge would bring the total above $B_q$. Then the swap stage makes
at most $B_q$ calls deterministically, where
\[
B_q=O\!\left(N_s n\left[1+\log\frac1{\eps_0}+\log\frac1\rho\right]\right),
\]
and the additional abort costs at most $\rho\OPT$ in expectation, also
conditionally on the preprocessing history. In particular, $B_q=O(N_s n\log(1/\eps))$ when both $\eps_0$ and $\rho$ are
$\Theta(\eps)$ for $\eps\in(0,1/2]$; their constants of proportionality
need not coincide.
\end{lemma}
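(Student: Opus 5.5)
The deterministic claim is immediate from the abort rule: a swap is executed only if its charge $N_s(|P'_j|+1)$ keeps the running total at most $B_q$, so the swap stage never exceeds $B_q$. For the asymptotic form, I would use $k'\le k$, $n'\le 2n$ and $N_{\max}=O(1+k'\log(1/\eps_0)+\log(1/\rho))$ from Lemma~\ref{lem:budget}. Then $2N_{\max}(n'/k'+1)=O(n'(1+\log(1/\eps_0))+(n'/k'+1)\log(1/\rho))$, and the second bracket term is $O(n\log(1/\rho))$. Together these give $B_q=O(N_s n[1+\log(1/\eps_0)+\log(1/\rho)])$, and hence $O(N_s n\log(1/\eps))$ when both parameters are $\Theta(\eps)$ and $\eps\le1/2$.

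For the loss, I would couple the aborted process with the untruncated lazy process on the same randomness (clock, part indices, samples, drops). Let $G_1$ be the event that at most $N_{\max}$ events occur before time $1$. Let $G_2$ be the event that, among the first $N_{\max}$ events, the total charge is at most $B_q$. On $G_1\cap G_2$ no abort happens and the outputs coincide. As in the proof of Lemma~\ref{lem:budget}, nonnegativity of $f(\bar S)$ and $g\le\OPT$ on feasible sets give the bound $\OPT\cdot\Prb(G_1^c\cup G_2^c)$. The event-cap part is already handled by Lemma~\ref{lem:budget}, which yields $\Prb(G_1^c)\le\rho$ conditionally on the preprocessing history. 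This suggests splitting $\rho$ into two halves, or else accepting a loss of $2\rho\OPT$ and rescaling. I would check which of these the constants $2$ and $3$ in $B_q$ are designed to absorb; my guess is that the Chernoff bound in Lemma~\ref{lem:budget} has enough slack, since it actually gives $2^{-N_{\max}}$.

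The main obstacle is bounding $\Prb(G_2^c)$. The key observation is that the part index $j$ at each event is drawn uniformly from $\{1,\dots,k'\}$, independently of the history and of the Poisson clock. Hence the per-event charges $C_m=|P'_{j_m}|+1$ are i.i.d., with mean $n'/k'+1$ and values in $[2,n'+1]$. Note that Algorithm~\ref{alg:lazy-swap} draws $j$ independently of $A$, and this independence is what makes the argument work. I would then extend the sequence to exactly $N_{\max}$ i.i.d.\ charges, which dominates the actual total on $G_1$. Next I would apply a Bernstein/Chernoff bound for bounded nonnegative variables: for $X_m=C_m/(n'+1)\in[0,1]$ with total mean $\mu=N_{\max}(n'/k'+1)/(n'+1)$, the multiplicative Chernoff bound gives $\Prb(\sum X_m\ge 2\mu+3\ln(1/\rho))\le\rho$. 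This follows, for instance, from $\Prb(S\ge\mu+a)\le\exp(-a^2/(2\mu+2a/3))$ with $a=\mu+3\ln(1/\rho)$, or from the $e^{\lambda}$-moment bound with $\lambda=\ln 2$. Multiplying back by $(n'+1)N_s$ yields exactly the threshold $B_q$. Because the part indices are fresh and the clock is independent of the preprocessing history, the bound holds conditionally on that history, which gives the conditional form stated in the lemma.
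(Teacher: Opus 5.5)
Your proposal is correct and follows the paper's proof: the deterministic bound comes from the abort rule, the size estimate is the same, and you use the same multiplicative Chernoff bound on the normalized charges $(|P'_{j}|+1)/(n'+1)$ with threshold $2\mu+3\ln(1/\rho)$, relying on the part indices being i.i.d.\ uniform and independent of the clock; padding to exactly $N_{\max}$ charges is a harmless variant of the paper's conditioning on $N=\nu$ followed by $\nu\le N_{\max}$. The one point to fix is the loss accounting: the lemma claims only the \emph{additional} loss, so couple the cost-capped run with the event-capped run of Lemma~\ref{lem:budget} (not the untruncated one), which gives $\rho\OPT$ directly from $\Prb(G_2^c)\le\rho$. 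No slack in Lemma~\ref{lem:budget} is needed, and none is guaranteed, since its proved inequality $2^{-N_{\max}}\le\rho$ can hold with equality; the two losses are simply added in Theorem~\ref{thm:partition}, which uses $\rho_{\rm tr}=\eps/200$ for each.
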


\begin{proof}
Condition on the preprocessing outcome, so the residual parts, $k'$, and
$n'$ are fixed. All probabilities and expectations below are conditional
on this outcome. Let $N\le N_{\max}$ be the number of swap calls in the
event-capped process before imposing the query-cost cap, and let
$Y_\ell:=|P'_{j_\ell}|+1$ be the charge of event $\ell$ in units of $N_s$.
The $j_\ell$ are i.i.d.\ uniform on $\{1,\ldots,k'\}$ and independent of
the Poisson clock. Since $N$ depends only on that clock and its event cap,
conditioning further on $N=\nu$ preserves their independence and gives
\[
\E[Y_\ell\mid N=\nu]=1+\frac{n'}{k'}\qquad(\ell\le\nu).
\]
For $\nu=0$ the charge is zero. Otherwise put
$X=\sum_{\ell=1}^{\nu}Y_\ell/(n'+1)$ and
$\mu_\nu=\nu(1+n'/k')/(n'+1)$. The multiplicative Chernoff bound for
independent $[0,1]$-valued summands gives
\[
\Prb\!\left(X\ge2\mu_\nu+3\ln(1/\rho)\mid N=\nu\right)\le\rho.
\]
Indeed, writing $(1+\lambda)\mu_\nu=2\mu_\nu+3\ln(1/\rho)$ gives
$\lambda\ge1$ and a bound
$\exp(-\lambda\mu_\nu/3)\le\rho$. Multiplying the threshold by
$N_s(n'+1)$ and using $\nu\le N_{\max}$ shows that the probability of
total charge exceeding $B_q$ is at most $\rho$, uniformly in $\nu$ and
in the preprocessing outcome. Averaging over $N$ preserves the bound for
every such outcome.
The coupled executions with and without the additional cost cap agree
unless that charge is exceeded. The event-capped output is feasible and
has value at most $\OPT$, while the fallback is nonnegative. Thus the
additional conditional expected loss is at most $\rho\OPT$, and averaging
over preprocessing gives the unconditional bound.

For the size bound, the ceiling in the event cap gives
$N_{\max}=O(1+k'\log(1/\eps_0)+\log(1/\rho))$, hence
\[
\begin{aligned}
N_{\max}\Bigl(\frac{n'}{k'}+1\Bigr)
&=O\!\left((n'+k')\log\frac1{\eps_0}
 +\Bigl(\frac{n'}{k'}+1\Bigr)\left[1+\log\frac1\rho\right]\right)\\
&=O\!\left(n\left[1+\log\frac1{\eps_0}+\log\frac1\rho\right]\right),
\end{aligned}
\]
using $1\le k'\le k\le n$ and $n'\le2n$. The remaining term in $B_q$
has the same order. Setting $\eps_0=\Theta(\eps)$ and
$\rho=\Theta(\eps)$ separately yields the claimed specialization.
The improvement comes from combining $k'$-scale event counts with
$n'/k'$-scale mean charges, rather than paying for all elements at every event.
\end{proof}

\subsection{The Partition-Matroid Theorem}

\begin{theorem}[Partition matroids]
\label{thm:partition}
Let $n\ge2$ and let $M$ be a unit-capacity partition matroid of rank $k\ge1$.
Run the implementation
of Theorem~\ref{thm:main} with the per-part augmentation of
Lemma~\ref{lem:partition-dummy}, the lazy swap of Algorithm~\ref{alg:lazy-swap},
and the two truncations of Lemmas~\ref{lem:budget} and~\ref{lem:partition-budget}
at level $\rho_{\rm tr}=\eps/200$ each, keeping the certificate failure
probability at $\rho=\eps/100$. Then for every $\eps\in(0,1/2]$ and every
persistent $\xierr$-controlled oracle the output $A_{\rm out}\in\I$ satisfies
\eqref{eq:resilience-main} with $C=8$, and the number of controlled-oracle calls
is at most
\[
O\!\left(
nk\log\frac1\eps
+\frac{nk\log n\log(1/\eps)}{\eps^2}
+\frac{n\log^2(1/\eps)}{\eps^2}
\right)
\]
deterministically. Hence the implementation is
$\bigl(1/e,2,2,\psi,8k\bigr)$- and $\bigl(1-1/e,2,2,\psi,8k\bigr)$-resilient with
$\psi=O(nk\log n)$, and under Assumption~\ref{ass:action-set}, for $T\ge\max\{2,\psi\}$,
\[
R_{1/e}(T)=\widetilde O\!\left(n^{1/5}k^{3/5}T^{4/5}\right),
\qquad
R_{1-1/e}(T)=\widetilde O\!\left(n^{1/5}k^{3/5}T^{4/5}\right),
\]
respectively for non-monotone and monotone objectives.
\end{theorem}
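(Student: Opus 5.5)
The plan is to reuse the proof of Theorem~\ref{thm:main} almost verbatim and change only the accounting of the swap stage. First I will check that the approximation analysis transfers. By Lemma~\ref{lem:partition-dummy}, the per-part augmentation satisfies every dummy-completion conclusion of Lemma~\ref{lem:dummy}. Hence Lemmas~\ref{lem:rrg}, \ref{lem:certificate}, \ref{lem:residual}, \ref{lem:drift} and Theorem~\ref{thm:preprocess} hold unchanged, with certificate failure probability $\rho=\eps/100$. By Lemma~\ref{lem:simulation}, the lazy swap has the same law of $(I,J)$ as the general controlled swap. It is therefore valid and right-continuous and satisfies \eqref{eq:partition-eta}, and with $\delta_s=\eps/250$ the bound \eqref{eq:eta-bound} again gives $\eta\le\frac{13}{25}\eps\OPT+6k\xierr$. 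Proposition~\ref{prop:sgs} then applies conditionally on $\mathcal E$ and on $\bar S$, exactly as in the non-monotone and monotone cases of Theorem~\ref{thm:main}. The degenerate case $k'=0$ is handled by the same pointwise argument.

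For the two truncations, I will use the coupling arguments of Lemma~\ref{lem:budget} and Lemma~\ref{lem:partition-budget}. Each one fails with probability at most $\rho_{\rm tr}=\eps/200$, conditionally on the preprocessing history, and its fallback is nonnegative while the untruncated output is worth at most $\OPT$. The combined truncation loss is therefore at most $2\rho_{\rm tr}\OPT=0.01\eps\OPT$, the same as the single truncation in Theorem~\ref{thm:main}. The final chain $(c-0.55\eps)\OPT-6.1k\xierr$ is then reproduced, which gives \eqref{eq:resilience-main} with $C=8$. The cases $k\le1$ are covered by Lemma~\ref{lem:small-rank}.

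For the query count, the RRG amplification and preprocessing still cost $O(nk\log(1/\eps))$, as in Lemma~\ref{lem:oracle-complexity}. The swap stage is capped deterministically by $B_q=O(N_s n\log(1/\eps))$ from Lemma~\ref{lem:partition-budget}, with $\eps_0,\rho_{\rm tr}=\Theta(\eps)$. Substituting $N_s=O((k\log n+\log(1/\eps))/\eps^2)$ from \eqref{eq:swap-samples} gives
\[
O\!\left(\frac{nk\log n\log(1/\eps)}{\eps^2}+\frac{n\log^2(1/\eps)}{\eps^2}\right).
\]
Each of the three terms is at most a constant times $nk\log n\,\eps^{-2}\log^2(1/\eps)$. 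This yields the resilience parameters $(\alpha,2,2,\psi,8k)$ with $\psi=O(nk\log n)$, as in Corollary~\ref{cor:resilience-parameters}.

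Finally, I will invoke Theorem~\ref{thm:fourati} with $\beta=2$ and $\delta=8k$, after verifying its interface conditions. Queries lie in $\Ipl$ by the same argument as Lemma~\ref{lem:query-feasibility}, since the lazy swap queries $R_\ell$ and $R_\ell\cup\{i\}$ with $R_\ell\subseteq A$ feasible. The remaining conditions are Lemma~\ref{lem:action-set-free} and Remark~\ref{rem:caching}. This gives
\[
R_\alpha(T)=\widetilde O\!\left(k^{2/5}(nk\log n)^{1/5}T^{4/5}\right)=\widetilde O\!\left(n^{1/5}k^{3/5}T^{4/5}\right).
\]

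The step I expect to need the most care is the transfer through Lemma~\ref{lem:simulation}. The guarantee \eqref{eq:partition-eta} is stated for the general swap, which computes all weights, whereas the lazy swap evaluates only one part. The point to check is that $\eta$ depends only on the law of $(I,J)$ given $(t,A)$, so identical distributions give identical $\eta$. I must also confirm that the query-cost abort is a stopping rule decided before each swap's calls, so that it does not change the swap kernel, only the truncation event.
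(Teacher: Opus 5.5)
Your proposal is correct and follows essentially the same route as the paper's proof. You transfer Section~\ref{sec:robustness} via Lemmas~\ref{lem:partition-dummy} and~\ref{lem:simulation}, and the two truncations at $\eps/200$ each sum to the $0.01\eps\OPT$ already budgeted in Theorem~\ref{thm:main}. The query count is the RRG/preprocessing cost plus $B_q$ with $N_s=O((k\log n+\log(1/\eps))/\eps^2)$, and the regret comes from Theorem~\ref{thm:fourati} with $\psi=O(nk\log n)$ and $\delta=8k$. You also check that $\eta$, validity and right-continuity depend only on the law of $(I,J)$ given $(t,A)$, and that the lazy swap's queries lie in $\Ipl$; the paper leaves these implicit, and they are consistent with its argument.
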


\begin{proof}
\emph{Approximation.} Lemma~\ref{lem:partition-dummy} allows us to reuse
Section~\ref{sec:robustness}. Lemma~\ref{lem:simulation} identifies the lazy
swap's output distribution with that of the general swap, so
\eqref{eq:partition-eta} holds and the proof of Theorem~\ref{thm:main} carries
over after accounting for the additional query-budget truncation. The only
change is that two truncations are in force rather
than one; at level $\rho_{\rm tr}=\eps/200$ each, their combined expected loss is
at most $2\rho_{\rm tr}\OPT=0.01\eps\OPT$, which is exactly the budget already
allotted to truncation in that proof. The certificate event $\mathcal E$ still
has failure probability at most $\eps/100$, by taking
$R=\lceil14\ln(100/\eps)\rceil$ RRG repetitions. Hence the constant chain, and
$C=8$, are unaffected.

\emph{Complexity.} RRG and Advanced Preprocessing cost $O(nk\log(1/\eps))$ as in
Lemma~\ref{lem:oracle-complexity}. The swap stage costs at most
$B_q=O(N_s n\log(1/\eps))$ by Lemma~\ref{lem:partition-budget}, and
\eqref{eq:swap-samples} with $\delta_s=\Theta(\eps)$ gives
$N_s=O\bigl((k\log n+\log(1/\eps))/\eps^2\bigr)$, so
$B_q=O\bigl(nk\log n\log(1/\eps)\eps^{-2}+n\log^2(1/\eps)\eps^{-2}\bigr)$.
Summing gives the display, which is $O(\psi\eps^{-2}\log^2(1/\eps))$ with
$\psi=O(nk\log n)$.

\emph{Regret.} Feed $\beta=2$, $\gamma=2$, $\psi=O(nk\log n)$ and $\delta=8k$
into Theorem~\ref{thm:fourati}, exactly as in Corollary~\ref{cor:cmab}; the
interface conditions are discharged by Lemmas~\ref{lem:query-feasibility}
and~\ref{lem:action-set-free} and Remarks~\ref{rem:caching}
and~\ref{rem:queried-sets}, none of which depends on the matroid family. This
gives $k^{2/5}(nk\log n)^{1/5}T^{4/5}=\widetilde O(n^{1/5}k^{3/5}T^{4/5})$.
\end{proof}

The saving is exactly one factor of $k$ in $\psi$: only $O(n/k')$ marginals are
evaluated per swap in expectation instead of $O(n)$, while the number of samples $N_s$ per
evaluated marginal is unchanged, because the concentration event those samples
support is the same one.

\begin{remark}[Generalized partition matroids]
\label{rem:generalized-partition}
The improved bound proved above is for unit capacities. With capacities
$\ell_j>1$, selecting a base element uniformly chooses part $j$ with
probability $\ell'_j/k'$, where $\ell'_j$ is its residual capacity.
Evaluating every candidate in the chosen part then costs, in expectation,
$O(N_s\sum_j\ell'_j|P'_j|/k')$ oracle calls per event. This need not be
$O(N_s n/k')$; for a single large-capacity part it is $O(N_s n)$. Consequently the
unit-capacity improvement does not automatically extend to arbitrary
partition capacities. The general-matroid guarantee of
Theorem~\ref{thm:main} remains available for those instances.
\end{remark}

\section{Conclusion}

We established controlled-oracle resilience of SGS-Poisson for nonnegative
submodular maximization over general matroids, with limiting approximation
factors $1/e$ and $1-1/e$, sensitivity $8k\xierr$, and a deterministic
$\widetilde O(nk^2\eps^{-2})$ query budget. The main ingredients are
adaptive preprocessing along the noisy trajectory, robust base-sum swap
comparisons, and explicit query-budget and query-domain guarantees.
The resulting full-bandit regret is
$\widetilde O(n^{1/5}k^{4/5}T^{4/5})$, improving to
$\widetilde O(n^{1/5}k^{3/5}T^{4/5})$ for unit-capacity partition matroids.
Both online results permit exploration in $\Ipl$; obtaining the same
guarantees with strictly feasible exploration remains a direction for future work.

After our first arXiv version (arXiv:2608.12134v1, submitted August~12, 2026),
\citet{WanZhang2026} submitted their work on August~25, 2026, obtaining
$\widetilde O(n^{1/3}k^{2/3}T^{2/3})$ $(1-1/e)$-regret with strictly feasible
queries in an adversarial model. Their model assumes a normalized, monotone
submodular reward function at \emph{every round}. By contrast, our stochastic
model requires submodularity only of the \emph{mean} reward function:
individual noisy observations and empirical value tables need not be
submodular or monotone. Thus their adversarial model does not subsume the
stochastic model considered here, and our setting is not a special case of
their stated assumptions. In particular, their stated guarantee does not
automatically transfer to our noisy-feedback setting without an additional
analysis. Our work instead establishes resilience to arbitrary bounded oracle
perturbations and derives learning guarantees for both monotone and
non-monotone mean rewards.

\bibliographystyle{abbrvnat}
\bibliography{refs}

\end{document}